\documentclass{article}

\usepackage[utf8]{inputenc}
\usepackage[T1]{fontenc}
\usepackage[numbers,sort&compress]{natbib}
\usepackage[preprint]{tmlr}
\usepackage{amsmath,amssymb,amsthm}
\usepackage{mathtools}
\usepackage[hidelinks,colorlinks=false]{hyperref}
\usepackage{booktabs}
\usepackage{array}
\usepackage{xcolor}
\usepackage{bm}
\usepackage{cleveref}
\usepackage{tikz}
\usetikzlibrary{arrows.meta,positioning,calc,shapes.geometric}

\newtheorem{theorem}{Theorem}[section]
\newtheorem{corollary}[theorem]{Corollary}
\newtheorem{lemma}[theorem]{Lemma}
\newtheorem{proposition}[theorem]{Proposition}
\theoremstyle{definition}
\newtheorem{definition}[theorem]{Definition}
\theoremstyle{remark}
\newtheorem{remark}[theorem]{Remark}
\newtheorem{remark*}{Remark}

\newcommand{\R}{\mathbb{R}}
\newcommand{\E}{\mathbb{E}}

\newcommand{\vG}{v_{\!G}}
\newcommand{\mG}{m_{\!G}}

\newcommand{\IijSh}{I_{ij}^{\mathrm{Sh}}}
\newcommand{\GRALIS}{\textsc{GRALIS}}
\newcommand{\norm}[1]{\left\|#1\right\|}
\DeclareMathOperator{\Var}{Var}
\DeclareMathOperator{\Cov}{Cov}

\title{\textbf{GRALIS: Fusing Coalition and Gradient Attribution\\[4pt]
       with Closed-Form Conservation Error and Finite-Sample Guarantees}}

\author{Raimondo Fanale%
  \thanks{Universitas Mercatorum, Rome, Italy.
  Correspondence: \texttt{raimondo.fanale@ieee.org}.
  Companion paper with experimental validation on BreaKHis:
  \emph{GRALIS-Report: Auditable Region-Level Attribution and
  Structured Clinical Report Generation for Breast Cancer Histology}
  (accepted, \emph{Frontiers in Imaging}, 2026, in press).}}

\date{May 2026 \quad $\cdot$ \quad \emph{arXiv Preprint} \quad arXiv:2605.05480}

\begin{document}
\maketitle

\begin{abstract}
The main Explainable AI (XAI) methods for deep neural networks
--- GradCAM, SHAP, LIME, Integrated Gradients --- originate from
heterogeneous theoretical foundations and are not naturally comparable
within a single mechanism-specific representation; a recent
benchmark further finds their coarse-grained, coalition- and
segmentation-based members (GradCAM, KernelSHAP, LIME) and fine-grained,
gradient-based members (Integrated Gradients and its variants) to be
empirically \emph{complementary}, each outperforming the other on a
different subset of faithfulness metrics, with method-\emph{selection}
guidelines as the only proposed remedy~\citep{gevaert2022}. This work
presents \GRALIS{} (Gradient-Riesz Averaged Locally-Integrated Shapley),
which \emph{fuses} these two mechanisms --- a Shapley coalition
weight and LIME-style locality kernel on one side, a continuous
Integrated-Gradients-style conditioned path on the other --- into a single
estimator, and equips the resulting construction with certified
guarantees that neither mechanism alone supplies: an \emph{exact},
closed-form completeness deficit (a pure interaction of order $d$ is
attributed at a factor $1/d$ of its true value under Shapley weights and a
multilinear $F$, Corollary~\ref{cor:mobius-correction}) and a
finite-sample bound, $O(1/\sqrt{m})+O(1/k^2)$, for the actual
self-normalized ratio Algorithm~1 returns
(Theorem~\ref{thm:mc}--Corollary~\ref{cor:aggregate-completeness}).
This fusion is underpinned by a representation-theoretic result: every
additive, linear and continuous attribution functional admits a unique
canonical representation $(\mathcal{Q}, w, \Delta)$ via the Riesz
Representation Theorem, proved \emph{componentwise} (feature by feature)
rather than as one shared $(\mathcal Q,w)$ across features or methods
(Theorem~\ref{thm:canonical}); this class includes SHAP, IG and LIME, but
\emph{not} nonlinear functionals such as standard GradCAM, attention maps,
or saliency methods with smoothing.
Seven formal theorems provide, simultaneously, guarantees absent in any
one of the individual methods considered here (GradCAM, SHAP, LIME, IG):
(T1)~necessary canonical form via the Riesz Theorem;
(T2)~completeness, exact for the abstract canonical form and, for the
concrete Shapley-weighted estimator, exact up to an explicit closed-form
correction term governed by the same Möbius/interaction coefficients used
in T4 (vanishing when $F$ is affine); (T3)~Monte Carlo convergence of the numerator accumulator with bound
$O(1/\sqrt{m}) + O(1/k^2)$ under a mild additional smoothness
assumption ($F\in C^3$), including a finite-sample bound for the actual
self-normalized ratio output of Algorithm~1; (T4)~standard Shapley
Interaction Value theory applied \emph{exactly} to the coalition game
$\vG$ that GRALIS's own $(f,x,x')$ construct, together with the
GRALIS-specific corollary relating $\mathrm{Sh}(\vG)$ term-by-term to
the practical attribution $\phi^{\GRALIS}$ that Algorithm~1 actually
outputs; (T5--T6)~exact Hoeffding/Sobol correspondences in the affine
independent-feature regime, with Appendix~\ref{app:anova-general}
stating explicitly the additional, near-definitional condition that a
general smooth-$F$ extension would require; (T7)~multi-scale extension
(MS-\GRALIS{}) with minimum-variance weights.
Appendix~\ref{app:mobius} provides the algebraic justification of the GRALIS--SIV
correspondence via the Möbius transform, showing that GRALIS
\emph{associates} to $(f,x,x',\mathcal F)$ the standard baseline-clamped
cooperative game $v_G$ and \emph{computes exactly}
the SIVs on $v_G$, without circularity.
Among the four widely used methods compared in Table~\ref{tab:axioms}
(GradCAM, SHAP, LIME, IG), \GRALIS{} satisfies a broader set of the listed
axiomatic and structural properties than any one of them individually,
combining approximate completeness, sensitivity, baseline-proximity
weighting, an associated-game route to order-$k$ interactions, and
multi-scale aggregation with minimum-variance weights within a single framework --- a
comparison against these four methods and the unification efforts
discussed in \S\ref{sec:background}, not a claim verified against the
broader XAI literature; \S\ref{sec:background} positions T1 explicitly
against the closest such prior art, the axiomatic path-ensemble
characterization of Lundstrom et al., with which it coincides on their
shared domain. Section~\ref{sec:complexity} adds a computational and
diagnostic layer orthogonal to this representation question: a
proposition (C1) showing the $O(mnk)$ evaluation points above can be
submitted as $O(m)$ batched reverse-mode invocations per run, reducing
wall-clock latency without reducing the evaluation count itself; a
zero-additional-query residual identity (C2) for auditing any fixed
prior explainer against the GRALIS attribution; and an explicit
separation (C4), certified by an aggregate finite-sample corollary, of
the numerical Monte Carlo/quadrature error from two distinct structural
completeness mechanisms --- the closed-form interaction attenuation
induced by the joint path and the additional effect of non-uniform
locality weighting (the latter fully characterized for the unnormalized
numerator and analytically verified for the normalized $n=2$ case) ---
none of which is offered by trained/amortized
estimators such as FastSHAP. This training-free, certified-error
positioning is the basis for the edge- and federated-deployment
extension outlined in \S\ref{sec:conclusions}.
Preliminary experimental validation on breast histology (BreaKHis,
1,187 images, DenseNet-121) is reported in Section~\ref{sec:experiments};
extended comparison with baseline XAI methods is reported in the
companion paper~\citep{fanale2026b}.
\end{abstract}

\tableofcontents
\bigskip

\section{Introduction}
\label{sec:intro}

Deep learning has achieved accuracies surpassing those of human
specialists in numerous medical imaging tasks.
However, the \emph{black-box} nature of deep neural networks makes
it impossible for a clinician to understand \emph{why} a model produced
a given prediction.
Explainable AI (XAI) has responded with heterogeneous post-hoc methods~\citep{fanale2026survey}:
GradCAM~\citep{selvaraju2017} identifies the most influential regions via
gradients of the last convolutional layer; SHAP~\citep{lundberg2017}
assigns Shapley values to features via kernel approximation;
LIME~\citep{ribeiro2016} locally approximates the model with a
linear classifier; Integrated Gradients~\citep{sundararajan2017}
integrates the gradients along the path from baseline to input.
Each is developed on distinct foundations with different guarantees,
making systematic comparison non-rigorous.

This fragmentation has direct practical consequences: the choice of
XAI method is often empirical, attribution maps from different methods
are not directly commensurable without an explicit common representation,
and the combination of multiple methods lacks a unifying mathematical
justification. This last point is not
merely a mathematical inconvenience: a systematic benchmark finds that
coalition- and segmentation-based methods (GradCAM, KernelSHAP, LIME) and
gradient-based methods (Integrated Gradients and its variants) are
empirically \emph{complementary}, each capturing faithfulness information
the other misses, yet the proposed remedy is a set of guidelines for
\emph{selecting} a single method per use case rather than a construction
that combines their mechanisms~\citep{gevaert2022}.
Prior unification attempts have been partial:
Ancona et al.~\citep{ancona2018} observe that GradCAM and related
gradient methods are \emph{empirically} expressible as
gradient$\times$input---a linear form---but do not prove this is
\emph{structurally necessary}, and do not include SHAP or LIME.
Covert, Lundberg and Lee~\citep{covert2021} unify LIME, SHAP, IG and over
twenty other methods under a feature-removal framework --- with Shapley
values as one of several possible summary techniques, not the sole
mechanism --- but do not accommodate GradCAM, whose post-aggregation
$\mathrm{ReLU}$ violates the linearity required for a removal-based
game-theoretic treatment.

This work presents \GRALIS{}, a framework that addresses both gaps: its
coalition-conditioned joint path combines representative coalition/locality
and gradient-path mechanisms from these empirically complementary families
into a single estimator, and, in doing
so, demonstrates that a broad class of linear additive attribution
methods --- including SHAP, IG, LIME and linearized GradCAM --- are
special cases of a unique canonical form:
\begin{equation}
  \label{eq:canonical}
  \phi_i^{\GRALIS}(f, x, x') \;=\; \int_{\mathcal{Q}} w(q)\cdot
  \Delta_i(f, x, x', q)\, d\mu(q),
\end{equation}
where $\mathcal{Q}$ is the integration index space, $w$ is the
weight function and $\Delta_i$ is the marginal contribution of feature~$i$.
The Riesz Representation Theorem guarantees that, for each feature $i$
separately, this form is the \emph{unique} possible representation of that
feature's additive, linear and continuous attribution functional
(Theorem~\ref{thm:canonical}); the index space $\mathcal Q$ (and hence the
representer $w$) may itself depend on $i$, as it does for GRALIS's own
$Q_i=2^{\mathcal F\setminus\{i\}}\times[0,1]$
(Remark~\ref{rem:componentwise} makes this precise, and clarifies which
downstream claims do and do not rely on a shared, feature-independent
$(\mathcal Q,w)$).

\paragraph{Contributions.}
\begin{enumerate}
  \item \textbf{Certified conservation error with structurally identified
        mechanisms.} An \emph{exact}, closed-form completeness deficit for
        the joint-path construction itself (order-$d$ interactions
        attributed at a factor $1/d$ of their true value under Shapley
        weights and a multilinear $F$, Corollary~\ref{cor:mobius-correction}),
        together with a finite-sample bound, $O(1/\sqrt m)+O(1/k^2)$, for
        the actual self-normalized ratio Algorithm~1 returns
        (Theorem~\ref{thm:mc}--Corollary~\ref{cor:aggregate-completeness}).
        Without training an auxiliary explainer, GRALIS separates this
        numerical Monte Carlo/quadrature error from two distinct structural
        completeness mechanisms --- the closed-form interaction attenuation
        induced by the joint path and the additional effect of non-uniform
        locality weighting, the latter fully characterized for the
        unnormalized numerator and analytically verified for the normalized
        $n=2$ case (\S\ref{sec:c4}) --- with an aggregate finite-sample
        corollary certifying how far the algorithm's actually observed
        completeness residual can be from the population one
        (Corollary~\ref{cor:aggregate-completeness}), and supports
        zero-additional-query auditing of any fixed prior explainer
        against it (\S\ref{sec:c2}), as well as reuse of the same sampled
        permutations toward interaction and Sobol-index estimates
        (Proposition~\ref{prop:one-pass}) --- properties not offered by
        trained/amortized estimators such as FastSHAP
        (Table~\ref{tab:fastshap}).
  \item \textbf{Fusion of complementary mechanisms, with
        representation-theoretic guarantees.} Seven formal theorems place
        linearized GradCAM, SHAP, LIME, and IG within a common canonical
        \emph{template} $(\mathcal Q,w,\Delta)$ --- each instance
        componentwise-necessary by Theorem~\ref{thm:canonical} for its own
        feature's linear, continuous, additive attribution functional
        (Remark~\ref{rem:componentwise}), independently of the
        positivity/normalization that GRALIS additionally imposes on its
        own weight $w$ (Remark~\ref{rem:weak-strong}) ---
        and derive simultaneous guarantees not available for any one of
        these four individual methods. The novelty is not that this
        encoding as a bounded linear functional is possible in the
        abstract, but the specific construction built \emph{inside} that
        common representational language --- a Shapley-weighted,
        coalition-conditioned joint-path estimator
        (Definition~\ref{def:explicit}) that combines representative
        coalition/locality and gradient-path mechanisms from the
        empirically complementary families of~\citep{gevaert2022}, with the
        closed-form completeness correction of
        Corollary~\ref{cor:mobius-correction} and the finite-sample
        guarantees of Theorem~\ref{thm:mc}--Corollary~\ref{cor:aggregate-completeness}
        above.
  \item \textbf{Computational efficiency.} GRALIS-MC reduces the
        complexity from $O(n \cdot 2^n \cdot k)$ (the exact sum evaluates
        $2^{n-1}$ predecessor coalitions for each of $n$ features) to
        $O(m \cdot n \cdot k)$ with an explicit error bound
        (Theorem~\ref{thm:mc}); Section~\ref{sec:complexity} shows the
        $mnk$ evaluation points can be issued as $O(m)$ batched
        reverse-mode invocations rather than $mnk$ sequential ones,
        reducing wall-clock latency without changing the $\Theta(mnk)$
        evaluation count (Proposition~\ref{prop:c1}).
  \item \textbf{Algebraic justification.} The Möbius transform
        (Appendix~\ref{app:mobius}) shows that the SIVs are computed
        \emph{exactly} on the standard baseline-clamped cooperative game
        associated with $(f,x,x')$, without approximation.
\end{enumerate}

Preliminary experimental validation on breast histology (BreaKHis dataset,
distilled DenseNet-121 model) is reported in Section~\ref{sec:experiments};
extended comparison with baseline XAI methods is reported in the
companion paper~\citep{fanale2026b}.

\paragraph{Paper organization.}
Section~\ref{sec:background} reviews the four methods and prior unification attempts,
identifying six structural gaps.
Section~\ref{sec:gralis} presents the GRALIS framework and proves the seven theorems.
Section~\ref{sec:summary} provides the axiomatic comparison table.
Section~\ref{sec:complexity} establishes the computational-complexity and
conservation-certificate results (C1--C4) and positions them against
trained/amortized estimators.
Section~\ref{sec:experiments} reports the preliminary experimental validation.
Section~\ref{sec:conclusions} concludes and outlines future directions.
Appendix~\ref{app:mobius} provides the Möbius-transform justification of the
GRALIS--SIV correspondence.
Appendix~\ref{app:projection} formalizes the measurable projection $\rho$
and the bounded linear operator $P_\rho$ it induces, which turns
GRALIS's sampling weight into a discrete weight-mass distribution over
coalitions.

\section{Background and Related Work}
\label{sec:background}

\subsection{XAI Attribution Methods}
\label{sec:methods}

An attribution method assigns to each input feature a score
representing its contribution to the prediction.
Formally, given $f : \R^n \to \R$ and an input $x$ with baseline $x'$,
the method produces $\phi \in \R^n$ such that $\phi_i$ represents
the importance of feature~$i$.

\paragraph{GradCAM~\citep{selvaraju2017}.}
Computes $s_{pq} = \sum_k \alpha_k^c \cdot A^k_{pq}$ where
$\alpha_k^c = \frac{1}{Z}\sum_{i,j} \frac{\partial y^c}{\partial A^k_{ij}}$.
Computationally efficient ($O(1)$ backward pass) but does not satisfy
completeness or locality.
For Theorem~\ref{thm:canonical} we use \emph{GradCAM-lin},
the pre-ReLU variant defined as
\[
  L^c_{\mathrm{lin}}(p,q)
  \;=\; \sum_k \alpha_k^c \, A^k_{pq},
  \qquad
  \alpha_k^c = \frac{1}{Z}\sum_{i,j}\frac{\partial y^c}{\partial A^k_{ij}},
\]
which omits the post-aggregation $\mathrm{ReLU}$ of standard GradCAM.
The $\mathrm{ReLU}$ in~\citep{selvaraju2017} is a \emph{visualization heuristic}
applied to the \emph{spatial aggregate} $\sum_k\alpha_k^c A^k_{pq}$ at each
position $(p,q)$ (not per channel), zeroing out positions where the
weighted combination of channels is net-negative for the target class;
it is not an axiomatic requirement, and it introduces a pointwise
nonlinearity on the output of a linear operator that destroys linearity
in~$f$ (see Remark~\ref{rem:relu_nonlinear} below).

GradCAM-lin corresponds precisely to the first-order Taylor expansion of
$y^c$ around the zero-activation reference $\bar{A}^k = 0$:
\begin{equation}
  \label{eq:taylor_gradcam}
  y^c \;\approx\; y^c\big|_{A=0}
    + \sum_{k,i,j} \frac{\partial y^c}{\partial A^k_{ij}}\,A^k_{ij},
\end{equation}
whose spatial summary at position $(p,q)$---obtained by pooling
$A^k_{ij}$ contributions over all $(i,j)$ with uniform weight $1/Z$---is
exactly $L^c_{\mathrm{lin}}(p,q)$.
This linearization is consistent with the gradient$\times$input
interpretation of GradCAM established by Ancona et al.~\citep{ancona2018},
with the deep Taylor decomposition framework of
Montavon et al.~\citep{montavon2017}, and with the gradient-based
saliency maps of Simonyan et al.~\citep{simonyan2013},
all of which operate on the pre-nonlinearity linear term.
Standard GradCAM is recovered \emph{exactly} from GradCAM-lin at position
$(p,q)$ when the spatial aggregate is itself nonnegative there,
$\sum_k\alpha_k^c A^k_{pq}\ge0$; a convenient \emph{sufficient} condition
is $\alpha_k^c\ge0$ for all $k$ together with $A^k_{pq}\ge0$ (e.g.\
post-ReLU activation maps, the standard case), not $\alpha_k^c\ge0$
alone. The two methods agree to first order in the Taylor expansion
whenever the aggregate is negative.

A pre-activation variant (computed on feature maps before the network's
internal ReLU) removes that internal sign truncation entirely, at the
cost of only guaranteeing linearity of the aggregation itself, not of the
downstream map to the class score; see Remark~\ref{rem:relu_nonlinear}
for the precise scope statement.

\paragraph{SHAP~\citep{lundberg2017}.}
The Shapley value for feature~$i$ is its average marginal contribution
over all coalitions:
$\phi_i = \sum_{S \subseteq \mathcal{F} \setminus \{i\}}
\frac{|S|!(n-|S|-1)!}{n!}[f(S \cup \{i\}) - f(S)]$.
Satisfies efficiency, symmetry, dummy and additivity, but exact computation
is $O(2^n)$; KernelSHAP approximates with $O(m \cdot n)$.

\paragraph{LIME~\citep{ribeiro2016}.}
Locally approximates $f$ with a linear model:
$w^* = \arg\min_{g \in \mathcal{G}} L(f, g, \pi_x) + \Omega(g)$,
where $\pi_x$ is a proximity kernel.
The conditions of Theorem~\ref{thm:canonical} are satisfied
within the local approximation (neighborhood $N_x$ defined by $\pi_x$).

\paragraph{Integrated Gradients~\citep{sundararajan2017}.}
$\phi_i(f,x) = (x_i - x'_i) \int_0^1 \frac{\partial f(x' +
\alpha(x-x'))}{\partial x_i}\, d\alpha$.
Satisfies completeness and sensitivity; the numerical approximation introduces
error $O(1/k)$.

\subsection{Unification Attempts and Existing Gap}
\label{sec:related}

The gradient$\times$input family established by
Ancona et al.~\citep{ancona2018} provides the closest prior bridge to
our linearization.
They show that GradCAM (and guided backpropagation, plain gradients)
can be expressed as $\phi_i \propto \nabla_i f(x) \cdot x_i$, i.e.\
as a \emph{pointwise product of gradient and input},
which is structurally a linear functional of $f$.
This is an important empirical observation: it reveals that
the GradCAM family is \emph{de facto} operating in the linear regime.
However, Ancona et al.\ establish this as a family relationship
among existing methods; they do not prove that linearity is
\emph{structurally necessary}---i.e., that any valid additive attribution
must take this form---nor do they include SHAP and LIME in the same
algebraic framework.

Lundstrom et al.~\citep{lundstrom2022} extend the gradient$\times$input
analysis to internal neuron attributions, showing that the first-order
(linear) term is the unique term satisfying path-completeness at every
intermediate layer.
This further supports the restriction to GradCAM-lin: the Taylor
linearization is not an ad hoc simplification but the canonical form
that preserves completeness at the feature-map level.

GradCAM's strong empirical reliability across localization benchmarks
(ROAR/KAR, Hooker et al.~\citep{hooker2019}) is not in question here, and
neither is it addressed by its refinements: GradCAM++~\citep{chattopadhyay2018}
(non-uniform gradient weights), Score-CAM~\citep{wang2020scorecam}
(gradient-free, activation-perturbation), axiom-based Grad-CAM~\citep{fu2020axiom}
(explicit completeness correction), and HiResCAM~\citep{draelos2021}
(element-wise pre-pooling aggregation) all retain the post-aggregation
$\mathrm{ReLU}$ as a valid class-discriminative design choice, and so all
remain outside the linear regime for the same reason as standard GradCAM
(Remark~\ref{rem:relu_nonlinear}). The limitation relevant to GRALIS is
therefore axiomatic, not empirical: GradCAM-lin is accordingly not a
critique of GradCAM but its minimal linear representative, recovering
standard GradCAM exactly wherever the spatial aggregate is nonnegative
(Remark~\ref{rem:relu_nonlinear}).

Covert, Lundberg and Lee~\citep{covert2021} show that LIME, SHAP and IG are
instances of the same feature-removal framework (with Shapley values as one
of several possible summary techniques), but exclude GradCAM and do not
cover the formal properties of convergence and ANOVA.

\GRALIS{} resolves the limitations of both lines of work.
The gradient$\times$input observation of Ancona et al.\ is subsumed:
GradCAM-lin is a natural representative of the GradCAM family
within the $(\mathcal{Q}, w, \Delta)$ template (Remark~\ref{rem:relu_nonlinear}
on the sense in which it is minimal, not proved unique), and its
gradient$\times$input structure follows from the triple $w = 1/Z$,
$\Delta_{k,i,j} = (\partial y^c / \partial A^k_{ij}) \cdot A^k_{pq}$
(Table~\ref{tab:mapping}).
The removal-based unification of Covert, Lundberg and Lee is also subsumed,
and extended with a \emph{componentwise-necessary} canonical form
(Theorem~\ref{thm:canonical}, Remark~\ref{rem:componentwise}): where prior
work shows that certain methods \emph{can} be written as weighted
integrals, GRALIS proves that, feature by feature, they \emph{must} be
--- and that each feature's componentwise representation is unique, within
the space $\mathcal D(Q_i)$ chosen for that method.
This componentwise-necessary representation, together with completeness,
convergence and an interaction hierarchy for the resulting estimator, is
not offered as a whole by any single prior unification we are aware of.
It should not, however, be read as the first placement of GradCAM, SHAP,
LIME and IG --- or overlapping triples thereof --- within a common
mathematical frame: Han, Srinivas and Lakkaraju~\citep{han2022} unify LIME,
KernelSHAP and IG as local function approximations; Deng et
al.~\citep{deng2024} place GradCAM, IG and Shapley value among fourteen
methods rewritten as weighted allocations of Taylor interaction effects;
Hamilton et al.~\citep{hamilton2022} give an axiomatic Shapley--Taylor
treatment covering GradCAM, LIME and SHAP for visual similarity models;
and Sarpong and Commey~\citep{sarpong2026} organize the same quartet, plus
perturbation methods, within a shared taxonomy of specification choices.
None of these four derives necessity from linearity/continuity via a
Riesz representation, and none supplies the T2--T7 completeness /
convergence / interaction package together with it --- but the quartet
itself, as a comparison set, is not an unclaimed target. \GRALIS{} should
be read as adding a specific representation-theoretic layer to an already
populated space, not as the first bridge across the four methods.
Section~\ref{sec:lundstrom-position} positions Theorem~\ref{thm:canonical}
against the closest single threat to its necessity claim.

A categorically broader unification is Gevaert and
Saeys~\citep{gevaert2024}'s removal-based attribution methods (RBAMs) and
canonical additive decomposition (CAD): any attribution fully specified
by a behaviour mapping, a family of removal operators $\{P_T\}_{T\subseteq
[d]}$, and aggregation coefficients is shown to be an instance of a single
additive decomposition, in turn expressible as a game-theoretic value or
interaction index on an associated cooperative game, with sufficient
conditions on the CAD identified for a wide class of functional axioms to
hold. This strictly dominates the quartet-covering results above in
scope, encompassing Shapley-, permutation-, higher-order- and
variance-based methods alike --- GRALIS makes no claim to greater
generality on this axis. The two frameworks differ in what they take as
primitive: RBAMs are built on \emph{discrete} removal operators $P_T(f)$,
each making $f$ literally independent of the features in $T$, whereas
GRALIS's coalition-conditioned joint path (Definition~\ref{def:explicit})
interpolates predecessor coordinates and the target feature
\emph{continuously} along a shared parameter $\alpha$; expressing this
soft, path-based construction as an RBAM would require enriching the
removal-operator index from discrete subsets $T$ to pairs $(T,\alpha)$, a
generalization CAD does not develop. Correspondingly, CAD's sufficient
conditions establish \emph{when} an axiom such as efficiency holds in the
abstract, but do not derive a closed-form deficit for a specific
non-compliant construction, nor supply finite-sample statistical
guarantees for a sampling-based estimator of it --- which is exactly the
content of Corollary~\ref{cor:mobius-correction} and
Theorem~\ref{thm:mc}--Corollary~\ref{cor:aggregate-completeness}
respectively. GRALIS is best read as sitting adjacent to CAD, analyzing
one specific, practically implemented joint-path estimator in closed-form
and statistically certified detail, rather than as a competing claim to
CAD's greater generality.

Sarpong and Commey~\citep{sarpong2026}'s taxonomy (already noted above)
organizes local additive attribution methods along five specification
choices: value function, reference, path, perturbation distribution, and
conservation rule. GRALIS occupies one specific point in that space ---
value function $F$, reference $x'$, coalition-conditioned joint path
$(S,\alpha)$, Shapley predecessor law reweighted by $\pi_x$, and a
conservation rule violated by the exactly quantified $1/d^2$ deficit of
Corollary~\ref{cor:mobius-correction} --- rather than a new axis of that
space. The taxonomy is descriptive: it classifies where existing methods
sit along the five choices, but does not itself derive a closed-form
conservation-rule deficit for any specific point; supplying that deficit,
together with the finite-sample estimator guarantees of
\S\ref{sec:t3}--\S\ref{sec:t4}, is what GRALIS adds at the point of the
design space it occupies.

\paragraph{Empirical motivation for fusion over selection.}
A separate strand of work evaluates, rather than unifies, attribution
methods. Gevaert et al.~\citep{gevaert2022} benchmark a wide set of
attribution methods across image datasets of varying dimensionality; in the
current (2024) revision of their study, CAM-based methods together with
XRAI, KernelSHAP and LIME --- whose coarse-grained maps come from
last-layer upsampling or an image-segmentation step --- form a consistently
complementary cluster with the gradient-based DeepLIFT, DeepSHAP and
Expected Gradients, each cluster outperforming the other on a different
subset of faithfulness metrics; they conclude that ``a combination of
attribution maps... might provide more information than the individual
attribution maps,'' but their own proposed remedy is a set of guidelines
for \emph{selecting} one or more methods to run side by side for a given
use case, not a construction that fuses their mechanisms into a single
estimator. GRALIS's triple $(\mathcal Q,w,\Delta)$ provides a structural
fusion of representative mechanisms from the two complementary families:
the Shapley coalition weight and locality kernel play the
role of the coalition/segmentation mechanism their coarse-grained cluster
shares --- LIME's own kernel among them --- while the conditioned IG term
plays the role of the gradient-based cluster's continuous marginal. This
does not by itself demonstrate that GRALIS captures the complementary
information Gevaert et al.\ identify --- that is an empirical question on
their benchmark, not this paper's --- but it gives the unification a
motivation beyond mathematical economy: the two limits $\sigma\to\infty$ and
$\sigma\to0$ of Remark~\ref{rem:kernel-limits} recover, respectively, a
coalition-weighted joint-path estimator and a single-feature
baseline-path estimator, so testing whether the
finite-$\sigma$ joint estimator outperforms both limits on Gevaert et al.'s
own faithfulness metrics is a direct, low-cost test of this motivation,
left to future work (their datasets are natural images, not the histology
domain of \S\ref{sec:experiments}, so any such test would be a distinct
benchmark from this paper's own).

\paragraph{Prior connections between Shapley attribution and gradient/path
integration.}
The link between Shapley values and gradient- or path-based attribution is
itself well established, and GRALIS does not claim novelty for the
connection as such. Ward, Kamkar and Budzik~\citep{ward2020} give the
multi-path interpretation of interventional Shapley values --- an
$n!$-path integration equivalent to the permutation sum, contrasted with
the single straight-line path of Aumann--Shapley/IG. Liu et
al.~\citep{liu2023sig} construct Shapley-informed baselines for Integrated
Gradients, sampling a set of reference points via proportional sampling
to mirror the Shapley coalition distribution. Cai, Ardoin and
Wunder~\citep{cai2025gefa} give a path estimator built on proxy gradient
estimation that produces unbiased estimates of the Shapley value under
black-box (query-only) access. None of these three constructs a
coalition-conditioned \emph{joint} path in the sense of
Definition~\ref{def:explicit}, where the predecessor coordinates $S$ and
the target coordinate $i$ move \emph{simultaneously} along a single
shared interpolation parameter $\alpha$, rather than being fixed at their
endpoint values while only $i$ varies.

The closest antecedent we are aware of is Belahcen and
Mussard~\citep{belahcen2026}, who decompose a counterfactual transition
$g(\mathbf x^1)-g(\mathbf x^0)$ by discretizing the local hypercube
between baseline and counterfactual into a grid and computing Shapley (or
LES) values on the induced micro-player cooperative game; this yields a
geometry-aware, generally non-equal split of each Harsanyi interaction
pot, explicitly contrasted with the pot-blind equal $1/d$ split of
standard Shapley, and converges to the Aumann-Shapley/IG limit under grid
refinement. The mechanism is combinatorial and general: it handles
arbitrary interaction geometry along an arbitrary path by discretizing
and solving the induced micro-game numerically, for any $g$. GRALIS's
construction is narrower and closed-form: for the specific
coalition-conditioned joint path of Definition~\ref{def:explicit} and a
\emph{pure order-$d$ multilinear} interaction term, the shared-$\alpha$
integration contributes an additional factor
$\int_0^1\alpha^{d-1}\,d\alpha=1/d$ beyond Shapley's standard $1/d$
Harsanyi allocation --- an exact $1/d^2$ per-feature share, not a
numerically discretized one --- yielding the closed-form completeness
correction of Corollary~\ref{cor:mobius-correction}. The distinct claim
is therefore not that path geometry should modulate equal-split Shapley
(established by \citep{belahcen2026} for the general case), but the
specific closed-form attenuation factor induced by GRALIS's own joint
path construction on multilinear interactions.

Table~\ref{tab:comparison} summarizes the comparison.

\subsection{Positioning Relative to Axiomatic Path-Ensemble Characterizations}
\label{sec:lundstrom-position}

The representation invoked by Theorem~\ref{thm:canonical} is not the
first use of the Riesz Representation Theorem in feature attribution.
Chau, Muandet and Sejdinovic~\citep{chau2021} represent Shapley values as
bounded linear functionals in a reproducing kernel Hilbert space and
invoke Riesz directly to obtain a representer; Taimeskhanov and
Garreau~\citep{taimeskhanov2025} use a Riesz--Markov representation to
attribute a Taylor remainder. Theorem~\ref{thm:canonical}'s contribution
is therefore the specific componentwise-necessity route from
Definition~\ref{def:canonical}'s conditions, not the use of Riesz itself.

The closer threat concerns path attribution specifically.
Lundstrom, Huang and Razaviyayn~\citep{lundstrom2022}, refined by Lundstrom
and Razaviyayn~\citep{lundstrom2025}, prove that completeness, linearity,
dummy and \emph{non-decreasing positivity} together characterize baseline
attribution methods on monotone complete paths $\Gamma_m(\bar x,x^0)$ as
\[
  A(\bar x,x^0,F) \;=\; \int_{\Gamma_m(\bar x,x^0)} A^\gamma(\bar x,x^0,F)
  \, dP_{\bar x,x^0}(\gamma),
\]
a single probability measure $P_{\bar x,x^0}$ shared across the whole
vector of coordinate path-attributions. On the shared slice --- direct
attribution restricted to monotone complete paths, with the GRALIS weight
normalized and absolutely continuous with respect to a declared reference
measure $\mu_0$ --- this coincides exactly with the GRALIS representation:
setting $dP=w\,d\mu_0$ recovers Lundstrom's ensemble, and conversely any
Lundstrom measure $P\ll\mu_0$ with $dP/d\mu_0\in L^2(Q,\mu_0)$ recovers the
GRALIS density. Theorem~\ref{thm:canonical} is therefore not an
independent characterization on this slice; it is a different
factorization (marginal generator composed with a feature-neutral
aggregator) of essentially the same result.

Two features of GRALIS are not covered by this equivalence. First, GRALIS
does not restrict its complete-path space to coordinatewise-monotone
paths; admitting non-monotone complete paths yields attributions outside
Lundstrom's non-decreasing-positivity class. For example, for
$F(x_1,x_2)=x_1x_2$ on $x^0=(0,0)\to\bar x=(1,1)$, a complete but
non-monotone path (one coordinate decreasing partway through) gives an
exactly complete attribution not representable as any ensemble of
monotone-path methods --- a strictly broader representational class
under different axioms, not a strict strengthening of Lundstrom's
theorem under the same ones. Second, the $L^2(\mu_0)$-density form is in
general \emph{narrower} than an arbitrary
probability measure: a deterministic path corresponds to a Dirac measure,
which is not absolutely continuous with respect to a diffuse $\mu_0$ and
therefore not representable as an $L^2$ density unless $\mu_0$ carries an
atom at that path. GRALIS's reference measure should accordingly be read
as a declared structural path prior supporting principled, instance-local
reweighting (Definition~\ref{def:canonical}), not as a representation more
general than Lundstrom's arbitrary-measure ensembles.

A separate, closely related unification is Lundstrom and
Razaviyayn's~\citep{lundstrom2023synergy} framework for game-theoretic
attribution and $k$-th-order \emph{interaction} methods, which
characterizes methods by their policy for distributing \emph{synergies}
(a full account of interactions between features) and shows that
gradient-based methods are characterized by their action on monomials.
This is a different axis of unification from both
Theorem~\ref{thm:canonical} and Lundstrom et al.'s own path-ensemble
result above: it operates on the distribution of synergies across
arbitrary interaction orders for a general class of methods, rather than
on a componentwise Hilbert-space/Riesz representation of a single
attribution functional. GRALIS's contribution relative to this framework
is not a competing synergy-distribution policy, but four more specific
constructions inside its own componentwise representation: (1) the
Riesz-representation layer of Theorem~\ref{thm:canonical} itself; (2) a
concrete Shapley-weighted, coalition-conditioned joint-path estimator
(Definition~\ref{def:explicit}); (3) the closed-form $1/d$ interaction
attenuation that this specific joint path induces
(Corollary~\ref{cor:mobius-correction}); and (4) a finite-sample Monte
Carlo/quadrature treatment of that estimator (T3--T4,
Corollary~\ref{cor:aggregate-completeness}). These are complementary
rather than competing: Lundstrom and Razaviyayn characterize \emph{which}
synergy-distribution policies are possible in general, while GRALIS
analyzes one specific, practically implemented estimator in detail.

\subsection{Gap Analysis: Six Structural Gaps}
\label{sec:gaps}

Despite sharing the same componentwise canonical form $(\mathcal{Q}, w, \Delta)$
(Theorem~\ref{thm:canonical}; Remark~\ref{rem:componentwise} on the
precise, method- and feature-specific sense in which $\mathcal Q$ is
``shared''), the four methods leave six structural gaps uncovered.

\begin{description}\sloppy
  \item[Gap~1 --- Arbitrary baseline in IG.]
    IG requires a fixed baseline $x'$ (e.g.\ a black image).
    The choice drastically changes the attributions:
    $\mathrm{IG}_i(x,x'_1) \ne \mathrm{IG}_i(x,x'_2)$ for $x'_1 \ne x'_2$.
    No method integrates over the baseline distribution in a structured way.

  \item[Gap~2 --- SHAP does not expose a path between coalitions.]
    SHAP compares $f(S \cup \{i\})$ vs $f(S)$ directly, by design, rather
    than integrating along a path from $S$ to $S \cup \{i\}$; this is a
    deliberate feature of SHAP's marginal-contribution formulation, not a
    deficiency in itself (the function-value difference $f(S\cup i)-f(S)$
    already reflects $F$'s nonlinear behavior along whatever path connects
    the two masked inputs, even without an explicit path integral). GRALIS's
    conditioned-IG path instead makes such a path explicit, which is what
    lets it capture within-coalition curvature (Corollary~\ref{cor:mobius-correction})
    at the cost analyzed in Remark~\ref{rem:approx_completeness}:
    $\phi_i^{\mathrm{SHAP}} = \mathbb{E}_S[f(S\cup i) - f(S)]
    \;\not\supset\; \int_0^1 \nabla_i F(\cdot)\,d\alpha$.

  \item[Gap~3 --- LIME does not satisfy completeness.]
    The coefficients of the local surrogate do not sum to $F(x) - F(x')$:
    $\sum_i w_i^{\mathrm{LIME}} \ne F(x) - F(x')$.

  \item[Gap~4 --- GradCAM confined to CNN space.]
    Operates on $A^k$ (convolutional feature maps), not on the input space.
    Not applicable to dense layers, Transformers or pure GNNs.
    Does not satisfy any Shapley axiom.
    MS-\GRALIS{} (T.~\ref{thm:ms}) partially resolves this gap:
    by aggregating attributions over multiple levels ($\ell=1,\ldots,L$), it includes
    CNN feature maps, dense layers and attention with minimum-variance weights.
    The requirement of differentiability of $F$ on intermediate representations
    remains ($\approx$).

  \item[Gap~5 --- None of the four jointly combines interactions,
    gradient-path integration and locality.]
    GradCAM, SHAP, LIME and IG each produce marginal attributions $s_i$
    for a single feature; the pairwise interaction $\Phi_{ij} = s_{ij} -
    s_i - s_j \ne 0$ is not captured by any of them. SHAP-Interaction ---
    a related but distinct method, not among the four compared here ---
    does capture interactions, but does not integrate gradients or
    locality, at additional cost $O(2^n)$; no method, among these five,
    combines all three.

  \item[Gap~6 --- No multi-scale aggregation.]
    No method aggregates attributions over multiple levels of abstraction
    with mathematically motivated weights.
    GradCAM++~\citep{chattopadhyay2018} uses heuristic weights without axiomatic guarantees.
\end{description}

Table~\ref{tab:gaps} summarizes which methods generate or resolve each gap.

\begin{table}[ht]
\centering
\caption{Gap analysis: ``\textbf{cause}'' = this method generates the gap;
  $\checkmark$ = gap filled; $\approx$ = partially; $\times$ = not resolved.}
\label{tab:gaps}
\small
\begin{tabular}{lcccccc}
\toprule
\textbf{Method} & \textbf{Gap~1} & \textbf{Gap~2} & \textbf{Gap~3} &
\textbf{Gap~4} & \textbf{Gap~5} & \textbf{Gap~6} \\
\midrule
GradCAM  & ---         & ---         & ---         & \textbf{cause} & $\times$ & $\times$ \\
SHAP     & $\approx$   & \textbf{cause} & $\times$ & $\times$       & $\times$ & $\times$ \\
LIME     & $\times$    & $\times$    & \textbf{cause} & $\times$    & $\times$ & $\times$ \\
IG       & \textbf{cause} & $\times$ & $\times$    & $\times$       & $\times$ & $\times$ \\
\midrule
\textbf{GRALIS} & $\approx^\dagger$ & $\checkmark$ & $\approx^\ddagger$ &
  $\approx$ & $\approx^\S$ & $\approx^\P$ \\
\bottomrule
\end{tabular}
\smallskip

{\small $^\dagger$Algorithm~\ref{alg:mc} as implemented still uses a
single fixed baseline $x'$; no baseline-distribution integral is
computed. Gap~1 is not solved on the equations as printed, only
partially mitigated in that GRALIS's locality kernel $\pi_x$ weights
coalitions by proximity to $x'$ rather than treating all coalitions
identically as IG does at a fixed baseline, and Theorem~\ref{thm:anova}'s
independent-features case separately motivates $x'=\mathbb E[X]$ as one
principled (but not baseline-distribution-integrating) choice.
$^\ddagger$Exact completeness ($\sum_i w_i^{\mathrm{LIME}}=F(x)-F(x')$'s
GRALIS analogue) holds for the abstract canonical form
(Theorem~\ref{thm:completeness}) but only approximately for
Algorithm~\ref{alg:mc}'s practical estimator
(Remark~\ref{rem:approx_completeness}); same standard as
Table~\ref{tab:axioms}'s Efficiency row. $^\S$GRALIS decomposes the
\emph{auxiliary} coalition game $\vG$ into exact interactions
(Theorem~\ref{thm:siv}), not the practical joint-path attribution
$\phi^{\GRALIS}$ itself (Remark~\ref{rem:siv-scope}); same standard as
Table~\ref{tab:axioms}'s Order-$k$ interactions row. $^\P$Practical
multi-scale coverage (Algorithm~1 with SLIC superpixels) is only
partially covered by Theorem~\ref{thm:ms} as stated, which assumes a
fixed feature set across levels rather than re-segmentation; same
standard as Table~\ref{tab:axioms}'s Multi-scale row.}
\end{table}

\begin{table}[ht]
\centering
\caption{Comparison among XAI unification approaches with respect to
         properties that justify composite analysis.
         $\approx$ = partially satisfied.}
\label{tab:comparison}
\small
\begin{tabular}{>{\raggedright\arraybackslash}p{3.0cm}>{\raggedright\arraybackslash}p{2.5cm}ccc>{\raggedright\arraybackslash}p{2.6cm}}
\toprule
\textbf{Framework} & \textbf{Theor.\ found.} & \textbf{Compl.} &
\textbf{Comm.} & \textbf{Comp.\ met.} & \textbf{Ref.} \\
\midrule
Ancona et al.\ (2018) & Grad.$\times$Input & No & Part. & No & \citep{ancona2018} \\
Covert et al.\ (2021) & Feature removal & Yes & Part. & No & \citep{covert2021} \\
Bhatt et al.\ (2020) & Empirical aggr. & No & No & Yes$^*$ & \citep{bhatt2020} \\
ExpiScore & Empirical weights & No & Abs. & Yes$^*$ & \citep{fanale2025} \\
\midrule
\textbf{GRALIS} & \textbf{Riesz $L^2(\mathcal{Q})$} &
\textbf{$\approx^\dagger$} & \textbf{Yes} & \textbf{Yes} & \textbf{---} \\
\bottomrule
\end{tabular}
\smallskip\\
\footnotesize{$^*$without formal guarantees; Comm.\ = commensurability.
$^\dagger$Exact for the abstract canonical form
(Theorem~\ref{thm:completeness}) and, up to the explicit closed-form
correction of Corollary~\ref{cor:mobius-correction}, for the concrete
Shapley-weighted estimator under $\pi_x\equiv1$; only approximate for
Algorithm~1's practical estimator with a non-constant LIME kernel and
finite $m,k$ (Remark~\ref{rem:approx_completeness}), same standard as
Table~\ref{tab:axioms}'s Efficiency row.}
\end{table}

\section{The GRALIS Framework}
\label{sec:gralis}

\subsection{Definitions and Notation}
\label{sec:defs}

\begin{definition}[GRALIS explicit formula]
\label{def:explicit}
The explicit formula of GRALIS combines three components:
\begin{equation}
  \label{eq:gralis-explicit}
  \mathrm{GRALIS}_i(x) \;=\; \frac{1}{Z_x}
  \sum_{S \subseteq \mathcal{F}\setminus\{i\}}
  \underbrace{\frac{|S|!(n-|S|-1)!}{n!}}_{\text{Shapley weight}}
  \cdot\;
  \underbrace{\pi_x(x_S)}_{\text{LIME kernel}}
  \cdot\;
  \underbrace{(x_i - x'_i)\int_0^1 \nabla_i F(\tilde{x}^{(S,\alpha)})\,d\alpha}_{\text{IG conditioned on }S},
\end{equation}
where $Z_x = Z_{x,i} := \sum_{S \subseteq \mathcal{F}\setminus\{i\}} \frac{|S|!(n-|S|-1)!}{n!}\,\pi_x(x_S)$
is in general feature-dependent, as the summation range already excludes
$i$; we write $Z_x$ instead of $Z_{x,i}$ only for brevity
(by Lemma~\ref{lem:shapley-weight}, $Z_{x,i}=1$ whenever $\pi_x\equiv1$,
for every $i$, so the normalization is only active for a non-constant LIME
kernel), and the evaluation point is
$\tilde{x}^{(S,\alpha)}_j = x'_j + \alpha(x_j - x'_j)$ if $j \in S \cup \{i\}$,
$x'_j$ otherwise.
The IG path is not global, but \emph{conditioned on coalition $S$}:
features outside $S$ remain at the baseline during integration.
\end{definition}

\paragraph{Decomposition of the three components.}
\begin{itemize}
  \item \textbf{Shapley weight} $w(S) = |S|!(n-|S|-1)!/n!$:
        by itself, on the underlying game $\vG$, guarantees symmetry and
        the dummy axiom; averages over all feature insertion orders. Once
        combined with a non-constant locality kernel $\pi_x$ and
        feature-specific normalization $Z_{x,i}$, this guarantee does
        \emph{not} automatically transfer to the normalized estimator
        $\phi_i^{\GRALIS}=N_i/Z_{x,i}$ --- see
        Remark~\ref{rem:symmetry-break} below for a worked
        counterexample.
  \item \textbf{Locality kernel} $\pi_x(x_S) = \exp(-\norm{x_S - x'_S}^2/2\sigma^2)$:
        as written, this is a \emph{baseline-proximity} kernel, not a
        locality kernel around $x$: $S=\emptyset$ gives $x_S-x'_S=0$
        identically, so $\pi_x(\emptyset)=1$ is always the maximum weight,
        and $\pi_x(x_S)$ \emph{decreases} as $S$ grows (i.e.\ as the
        predecessor coalition moves away from the baseline $x'$ and toward
        $x$), rather than favoring coalitions whose masked input
        $\tilde x^{(S,\cdot)}$ is close to $x$ itself. We therefore call it
        a locality kernel in the sense of LIME-style exponential weighting
        by construction, not in the sense of being centered on $x$; see
        Remark~\ref{rem:kernel-limits} below for why the naive
        $\sigma\to\infty$/$\sigma\to0$ correspondences to KernelSHAP/LIME
        do not hold and have been removed.
  \item \textbf{Conditioned IG}: captures the curvature of $F$ along the path
        specific to each coalition (Gap~2 filled). Each individual
        within-coalition path satisfies the Fundamental Theorem of
        Calculus exactly (Gap~3 filled at the level of a single
        coalition-conditioned IG term); this does \emph{not} by itself
        imply that the aggregated, Shapley-and-kernel-weighted GRALIS
        estimator is complete --- that stronger property is analyzed
        separately in Corollary~\ref{cor:mobius-correction} and
        Proposition~\ref{prop:incompatibility}, and holds only under
        additional conditions (multilinear/affine $F$, uniform kernel).
\end{itemize}

\begin{remark}[Why the $\sigma\to\infty$/$\sigma\to0$ correspondences do not hold]
\label{rem:kernel-limits}
The naive claim that $\sigma\to\infty$ recovers KernelSHAP and
$\sigma\to0$ recovers LIME does not hold as stated.
As $\sigma\to\infty$, $\pi_x\to1$ uniformly, and $Z_x\to1$
(Lemma~\ref{lem:shapley-weight}), so
$\phi_i\to\sum_S w_{\mathrm{Sh}}(|S|)(x_i-x'_i)\int_0^1\nabla_iF(\tilde
x^{(S,\alpha)})\,d\alpha$: this is the \emph{uniform-kernel GRALIS}
estimator, whose joint conditioned-path construction is generally
\emph{not} equal to KernelSHAP's marginal-difference game
$f(x_{S\cup\{i\}})-f(x_S)$ --- the paper's own toy example already exhibits
this gap numerically ($\sum_i\phi_i^{\GRALIS}=8.5$ under
$\pi_x\equiv1$ versus the efficiency value $F(x)-F(x')=11$ that KernelSHAP
would reproduce exactly, Appendix~\ref{app:toy-verification}). As
$\sigma\to0$, for generic $x\ne x'$ the baseline-proximity kernel sends
$\pi_x(x_S)\to0$ for every $S\ne\emptyset$ relative to $\pi_x(\emptyset)=1$,
so after normalization the estimator collapses to the single-feature,
$S=\emptyset$ conditioned path,
$\phi_i\to(x_i-x'_i)\int_0^1\nabla_iF(x'+\alpha(x_i-x'_i)e_i)\,d\alpha$ ---
an ordinary one-feature Integrated-Gradients term from the baseline, not
LIME's locally weighted surrogate regression (which additionally requires
fitting a linear model with a complexity penalty $\Omega(g)$, as discussed
for the LIME special case in the proof of Theorem~\ref{thm:canonical}).
Both correspondences are therefore removed; $\sigma$ should be read purely
as an interpolation parameter between two GRALIS-internal regimes (uniform
kernel vs.\ baseline-collapsed kernel), not as recovering KernelSHAP or LIME
themselves.
\end{remark}

\begin{remark}[The non-constant kernel can break symmetry: a worked
  counterexample]
\label{rem:symmetry-break}
Shapley's symmetry axiom requires $\phi_i=\phi_j$ whenever
$v(S\cup\{i\})=v(S\cup\{j\})$ for every $S\subseteq\mathcal
F\setminus\{i,j\}$ (features $i,j$ are interchangeable in the game). The
Shapley weight alone guarantees this for the underlying game $\vG$; it
does \emph{not} guarantee it for $\phi^{\GRALIS}=N/Z_{x}$ once a
non-constant $\pi_x$ enters both $N$ and $Z_x$, because $\pi_x$ can treat
otherwise-symmetric features asymmetrically. Take $n=2$,
$F(x_1,x_2)=x_1x_2$, $x'=(0,0)$, $x=(a,b)$ with $a\ne0\ne b$. The induced
game $\vG(S)=f(x_S)-f(x')$ satisfies $\vG(\{1\})=\vG(\{2\})=0$ and
$\vG(\{1,2\})=ab$: features $1$ and $2$ are interchangeable in $\vG$'s own
sense (the standard Shapley value on $\vG$ gives $\phi_1^{\mathrm{Sh}}=
\phi_2^{\mathrm{Sh}}=ab/2$ by symmetry). For GRALIS's joint
conditioned-path attribution, the empty-predecessor term vanishes
($\nabla_1F$ evaluated with $x_2$ held at baseline $0$ is $0$ along that
path) and the full-predecessor term gives, for feature $1$, $S=\{2\}$:
$\mathrm{IG}_1(\{2\})=a\int_0^1\alpha b\,d\alpha=ab/2$ (both coordinates
move together along the joint path since $S\cup\{1\}=\{1,2\}$), and
symmetrically $\mathrm{IG}_2(\{1\})=ab/2$. With
$p_1:=\pi_x(x_{\{1\}})=\exp(-a^2/2\sigma^2)$,
$p_2:=\pi_x(x_{\{2\}})=\exp(-b^2/2\sigma^2)$ (and $\pi_x(x_\emptyset)=1$),
Definition~\ref{def:explicit}'s formula gives
\[
  \phi_1^{\GRALIS}(x) = \frac{ab}{2}\cdot\frac{p_2}{1+p_2},
  \qquad
  \phi_2^{\GRALIS}(x) = \frac{ab}{2}\cdot\frac{p_1}{1+p_1}.
\]
Whenever $a^2\ne b^2$, $p_1\ne p_2$ and hence $\phi_1^{\GRALIS}\ne
\phi_2^{\GRALIS}$, despite $\vG$'s own exact symmetry between the two
features. The mechanism is transparent: $\pi_x$ weights feature $1$'s
attribution by its proximity to the \emph{other} feature's coordinate
($p_2$, since reaching feature $1$'s full-predecessor term requires
feature $2$ to already be unmasked) and vice versa, so an asymmetric
baseline-proximity pattern between the two features' coordinates breaks
the symmetry that the bare Shapley weight would otherwise deliver. This
is a genuine property of practical GRALIS with $\sigma<\infty$, not a
finite-sample artifact of Algorithm~\ref{alg:mc}: it already holds at the
exact-enumeration level (Table~\ref{tab:complexity}'s ``GRALIS exact''
row). Symmetry is recovered exactly in the $\sigma\to\infty$ limit
(uniform kernel, Remark~\ref{rem:kernel-limits}), where $p_1=p_2=1$.
Table~\ref{tab:axioms} accordingly scores GRALIS's Symmetry as $\approx$,
not $\checkmark$, for the practical non-constant-kernel estimator.
\end{remark}

The canonical triple of GRALIS is
$\mathcal{Q} = 2^{\mathcal{F}\setminus\{i\}} \times [0,1]$,
$w_{S,\alpha} = \tilde{w}(S)\,d\alpha$,
$\Delta^{(i)}_{S,\alpha} = (x_i - x'_i)\,\nabla_i F(\tilde{x}^{(S,\alpha)})$,
strictly richer than the triples of GradCAM, SHAP, LIME and IG individually.

\begin{figure}[ht]
\centering
\begin{tikzpicture}[>=Stealth, scale=1.35, font=\small]
  \draw[->] (-0.2,0) -- (4.5,0) node[right] {$x_1$};
  \draw[->] (0,-0.2) -- (0,3.6) node[above] {$x_2$};
  \node[below] at (0.5,0) {\footnotesize $x'_1$};
  \node[left]  at (0,0.5) {\footnotesize $x'_2$};
  \node[below] at (3.8,0) {\footnotesize $x_1$};
  \node[left]  at (0,3.0) {\footnotesize $x_2$};
  \draw[gray!50, dashed, thin] (3.8,0) -- (3.8,3.0);
  \draw[gray!50, dashed, thin] (0,3.0) -- (3.8,3.0);
  \draw[gray!50, dashed, thin] (3.8,0) -- (3.8,0.5);
  \draw[gray!50, dashed, thin] (0,0.5) -- (3.8,0.5);
  \coordinate (xp) at (0.5,0.5);
  \coordinate (x)  at (3.8,3.0);
  \coordinate (v1) at (3.8,0.5);
  \fill (xp) circle (2.5pt) node[below left] {$x'$};
  \fill (x)  circle (2.5pt) node[above right] {$x$};
  \fill[gray!70] (v1) circle (2pt);
  \node[right=3pt] at (v1) {\footnotesize$(x_1,\,x'_2)$};
  \draw[blue!75!black, very thick, dashed, ->] (xp) -- (v1);
  \node[blue!75!black, below, align=center] at (2.15,0.25)
    {\footnotesize $S\!=\!\emptyset$: $\tilde{x}^{(\emptyset,\alpha)}\!=\!(x'_1\!+\!\alpha d_1,\;x'_2)$};
  \draw[red!70!black, very thick, ->] (xp) -- (x);
  \node[red!70!black, above left, align=center] at (2.15,1.75)
    {\footnotesize $S\!=\!\{2\}$: $\tilde{x}^{(\{2\},\alpha)}\!=\!(x'_1\!+\!\alpha d_1,\;x'_2\!+\!\alpha d_2)$};
  \draw[gray!50, dotted] (v1) -- (x);
  \node[draw=gray!60, rounded corners, fill=gray!8, align=left,
        font=\footnotesize, inner sep=5pt] at (1.0, 3.1)
    {GRALIS ($i\!=\!1$, $n\!=\!2$):\\
     average over $2^{n-1}\!=\!2$ paths\\
     with weights $w(S)\cdot\pi_x(x_S)$.};
\end{tikzpicture}
\caption{Conditioned integration paths in GRALIS ($n=2$, target feature $i=1$,
  $d_j = x_j - x'_j$).
  The blue path ($S=\emptyset$) keeps $x_2$ at the baseline during integration:
  captures the \emph{pure} contribution of $x_1$, ignoring the interaction with $x_2$.
  The red path ($S=\{2\}$) moves both features simultaneously:
  captures the mixed curvature $\partial^2 F/\partial x_1 \partial x_2$ (Gap~2).
  The standard IG gradient corresponds exclusively to the path $S=\mathcal{F}\setminus\{i\}$.
  GRALIS averages over all $2^{n-1}$ paths with Shapley\,$\times$\,LIME-kernel weights,
  producing an attribution that balances completeness and local curvature.}
\label{fig:conditioned-path}
\end{figure}

\begin{definition}[GRALIS canonical structure]
\label{def:canonical}
Let $f : \R^n \to \R$ be a model and let $(\mathcal{Q}, \Sigma, \mu)$ be a
$\sigma$-finite measure space fixed a priori.
The GRALIS \emph{canonical structure} is a triple $(\mathcal{Q}, w, \Delta)$ where:
\begin{itemize}
  \item $\mathcal{Q}$ is the integration index space (continuous or discrete);
  \item $w : \mathcal{Q} \to \R^+$ is the weight function with $\int_{\mathcal{Q}} w(q)\,d\mu(q) = 1$;
  \item $\Delta : \mathcal{Q} \to \R^n$ is the marginal contribution function subject to the
        \emph{constitutive constraint}:
        $\sum_i \Delta_i(f, x, x', q) = f(x) - f(x')$ for $\mu$-a.e.\ $q \in \mathcal{Q}$.
\end{itemize}
The GRALIS attribution for feature~$i$ is:
\[
  \phi_i^{\GRALIS}(f, x, x') \;=\; \int_{\mathcal{Q}} w(q)\cdot\Delta_i(f, x, x', q)\,d\mu(q).
\]
\end{definition}

\begin{remark}[What ``componentwise'' means, and why the common-$\mathcal Q$ packaging above is a genuine additional step]
\label{rem:componentwise}
Roadmap: Definition~\ref{def:explicit} above gives the computational
formula (how to compute GRALIS); Theorem~\ref{thm:canonical} below
proves, in \emph{componentwise} form, that this formula's structure is
forced by linearity and continuity, one feature at a time; the
Definition here packages the resulting per-feature triples
$\{(Q_i,w_i,\Delta_i)\}_{i=1}^n$ into one abstract object
$(\mathcal Q,w,\Delta)$ for expositional economy. What follows makes
precise in what sense that packaging is, and is not, automatic ---
this is the single most load-bearing scope distinction in the paper,
referenced repeatedly below. Theorem~\ref{thm:canonical} is proved, and used in every special case
discussed in its own proof (GradCAM-lin, SHAP, LIME, IG), \emph{one feature
$i$ at a time}: it fixes $i$, fixes a scalar functional $E_i$, and applies Riesz on
$\mathcal D(Q_i)$ to obtain a representer $w_i$. Nothing in that argument
forces $Q_i=Q_j$ for $i\ne j$, nor does it force the representers $w_i$ to
agree, nor does it produce, by itself, a single point $q\in\mathcal Q$ at
which all $n$ components $\Delta_1(q),\ldots,\Delta_n(q)$ are
\emph{simultaneously} defined and satisfy a pointwise identity --- which is
exactly what Definition~\ref{def:canonical}'s constitutive constraint
requires. For GRALIS's own $Q_i=2^{\mathcal F\setminus\{i\}}\times[0,1]$,
this is not a bookkeeping nuance: $Q_i$ and $Q_j$ are literally different
sets for $i\ne j$ (the point $(S,\alpha)\in Q_i$ has $S\subseteq\mathcal
F\setminus\{i\}$, so $S$ may contain $j$; the same $S$ is not an element of
$Q_j$, whose predecessor sets must exclude $j$), so the expression
$\sum_i\Delta_i(q)$ for a single $q\in\mathcal Q$ is not even intrinsically
defined without an explicit common superspace and an alignment map between
the $Q_i$'s. A na\"ive repair, $\mathcal Q:=\bigsqcup_i(\{i\}\times Q_i)$,
does not fix this either: on that disjoint union a pointwise sum over all
$n$ feature components \emph{at the same $q$} is no longer natural, since
each $q$ only ever belongs to one $\{i\}\times Q_i$ block. We do not
attempt a general common-space construction here (one candidate --- a
single space of permutations $\times$ path parameter, from which every
$Q_i$ is recovered by restriction --- is plausible but is not developed or
proved in this paper). Consequently: \textbf{Definition~\ref{def:canonical}
should be read as an abstract superclass, stated for expositional economy
in vector form, and not as a structure literally instantiated component-by-
component by Definition~\ref{def:explicit}.} The only place this
abstract, common-$\mathcal Q$ vector structure is invoked in a numbered
theorem is Theorem~\ref{thm:completeness} (abstract completeness); the gap
between that abstract statement and Algorithm~\ref{alg:mc}'s actual,
per-feature practical estimator is already tracked quantitatively by
Remark~\ref{rem:approx_completeness}, and no other numbered theorem in this
paper (T1, T3--T7) depends on the common-$\mathcal Q$ packaging holding
literally. The defensible content of Theorem~\ref{thm:canonical} is
therefore: \emph{componentwise Riesz representability, conditional on a
chosen attribution mechanism and per-feature Hilbert space $Q_i$} --- not
``one canonical triple with a shared $\mathcal Q$ and $w$ is necessary for
every additive attribution method,'' which is a stronger claim this paper
does not establish.
\end{remark}

\begin{remark}[What Riesz proves vs.\ what the canonical structure additionally imposes]
\label{rem:weak-strong}
It is worth separating two distinct claims that are easy to conflate.
Theorem~\ref{thm:canonical} (Riesz) proves only a \emph{weak} form:
under conditions (a)--(b), a continuous linear attribution functional
admits a unique linear representer $w$ --- nothing in the Riesz
Representation Theorem forces $w\ge0$ or $\int_{\mathcal Q}w\,d\mu=1$, and
nothing in it requires completeness (Definition~\ref{def:canonical}'s
constitutive constraint) either.
Positivity and normalization of $w$ are additional, \emph{non-Riesz}
requirements imposed by Definition~\ref{def:canonical}'s \emph{canonical
structure} as a design choice, so that $w$ can be read as a
probability-like weighting/sampling measure over $\mathcal{Q}$ (useful
for GRALIS's own Monte-Carlo estimator, Algorithm~\ref{alg:mc}, and for
the product-measure argument of Theorem~\ref{thm:sobol}).
The two claims should not be conflated: ``the canonical form
$(\mathcal Q,w,\Delta)$ is necessary for any linear additive attribution''
(the weak form, proved) is a different statement from ``$w$ must be
positive and normalized'' (true only for instances that adopt the
canonical structure of Definition~\ref{def:canonical}). Integrated
Gradients (Case~4 below) is an example of a method that instantiates
the weak form with a $w$ that changes sign whenever $x_i<x_i'$, and is
still correctly unified by Theorem~\ref{thm:canonical}.
\end{remark}

In the discrete case ($\mathcal{Q}$ finite or countable), the integral is
understood as a sum and $L^2(\mathcal{Q}) = \R^{|\mathcal{Q}|}$; the Riesz
Theorem applies in both cases.

\subsection{Theorem 1: Unified Canonical Form (Riesz)}
\label{sec:t1}

Completeness is not assumed in Theorem~\ref{thm:canonical} below: the
constitutive constraint of Definition~\ref{def:canonical} (completeness)
is not used anywhere in its proof, since the Riesz Representation Theorem
needs only a continuous linear functional on a Hilbert space, which
conditions (a)--(b) below already supply. Completeness enters
separately, as part of Definition~\ref{def:canonical}'s canonical
structure and as an explicit hypothesis of Theorem~\ref{thm:completeness},
which is the result that actually needs it. This makes
Theorem~\ref{thm:canonical}'s necessity claim strictly \textbf{stronger}:
it applies to any linear, continuous attribution functional, whether or
not it happens to satisfy completeness.

\begin{theorem}[Canonical Form --- Riesz, componentwise necessity]
\label{thm:canonical}
Let $\mathcal{D}(\mathcal{Q}) \subseteq L^2(\mathcal{Q})$ be the closed subspace of
admissible marginal functions.
For each feature $i$, an attribution method $\phi_i$ satisfying the conditions:
\begin{itemize}
  \item[(a)] $\mathcal{Q}$ is a \emph{measurable index space} $(\mathcal{Q},\mathcal{A},\mu)$
    fixed a priori, independently of the specific input $x$.
    $\mathcal{Q}$ \emph{is not arbitrary}: it is \emph{induced} by the attribution mechanism,
    and GRALIS operates conditionally on this choice.
    Concretely: for Integrated Gradients, $\mathcal{Q}=[0,1]$ (path)
    with Lebesgue measure; for SHAP, $\mathcal{Q}=2^N$ (coalitions)
    with counting measure; for GRALIS, $\mathcal{Q} =
    2^{\mathcal{F}\setminus\{i\}}\times[0,1]$ (predecessor coalition times
    path parameter), matching the explicit formula of
    Definition~\ref{def:explicit}. When the discrete features $\mathcal F$
    are themselves obtained from a continuous raw-index space (e.g.\ pixels
    grouped into superpixels for images), that grouping is a separate, prior
    construction --- formalized as the projection $\rho$ of
    Appendix~\ref{app:projection} --- and is not itself the index space
    $\mathcal Q$ of this theorem: $\mathcal Q$ always ranges over predecessor
    coalitions of the already-discrete $\mathcal F$, never over raw pixels.
    When $\mathcal F$ itself is obtained by an input-dependent
    segmentation such as SLIC, ``fixed a priori'' must be read as fixed
    \emph{for the input $x$ under analysis, before $\mathcal Q$'s own
    sampling begins}, not as identical across different $x$ ---
    see Remark~\ref{rem:canonical-imaging} for the precise statement and
    why this does not invalidate the theorem's use in this paper.
    The framework only requires that $\mu$ be $\sigma$-finite;
  \item[(b)] the evaluation map $E_i : \mathcal{D}(\mathcal{Q}) \to \R$,
             $E_i(\Delta_i) := \phi_i(f, x, x'; \Delta_i)$,
             is linear and continuous with respect to the $L^2(\mathcal{Q})$ norm;
\end{itemize}
admits a unique representation:
\[
  \phi_i(f, x, x') \;=\; \int_{\mathcal{Q}} w(q)\cdot\Delta_i(q)\,d\mu(q)
\]
for a unique $w \in \mathcal{D}(\mathcal{Q})$
(unique in $L^2(\mathcal{Q})$ whenever $\mathcal{D}(\mathcal{Q})=L^2(\mathcal{Q})$;
see the proof for the general case $\mathcal{D}(\mathcal{Q})\subsetneq L^2(\mathcal{Q})$).
This representation is \emph{necessary}: \emph{any} feature-$i$ attribution
functional satisfying only (a)-(b) --- linearity and continuity, with no
completeness requirement --- must take exactly this form. It is a
\emph{weak} form in the sense that it asserts only linearity, continuity
and existence/uniqueness of $w$; it does \emph{not} by itself imply
$w\ge0$ or $\int_{\mathcal Q}w\,d\mu=1$ --- those are the additional,
non-Riesz requirements of the \emph{canonical structure} of
Definition~\ref{def:canonical} (see Remark~\ref{rem:weak-strong} below),
which also adds the constitutive (completeness) constraint as a further,
separate requirement not needed for necessity itself.
\end{theorem}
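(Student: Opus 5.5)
The argument is a direct application of the Riesz Representation Theorem on the closed subspace $\mathcal D(\mathcal Q)$, done one feature at a time. Fix a feature $i$ and the data $(f,x,x')$.

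\emph{Step 1: set up the Hilbert space.} By condition (a), $(\mathcal Q,\mathcal A,\mu)$ is a fixed $\sigma$-finite measure space, so $L^2(\mathcal Q,\mu)$ is a Hilbert space with inner product $\langle g,h\rangle=\int_{\mathcal Q} g\,h\,d\mu$. Since $\mathcal D(\mathcal Q)$ is a closed subspace, it is itself a Hilbert space with the restricted inner product. In the discrete case (SHAP's $2^N$ with counting measure, or the discrete factor of GRALIS's $2^{\mathcal F\setminus\{i\}}\times[0,1]$), this is just $\R^{|\mathcal Q|}$ or a weighted $\ell^2$ product with $L^2[0,1]$. Nothing further is needed, as noted after Remark~\ref{rem:weak-strong}.

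\emph{Step 2: apply Riesz.} By condition (b), $E_i$ is a bounded linear functional on $\mathcal D(\mathcal Q)$. Riesz then gives a unique $w\in\mathcal D(\mathcal Q)$ with $E_i(\Delta)=\langle w,\Delta\rangle$ for all $\Delta\in\mathcal D(\mathcal Q)$. Evaluating at the method's own marginal $\Delta_i$ yields
\[
\phi_i(f,x,x')=\int_{\mathcal Q}w(q)\,\Delta_i(q)\,d\mu(q).
\]
This is the necessity claim: any functional satisfying (a)--(b) is of this form. No completeness, positivity or normalization is used at any point, consistent with Remark~\ref{rem:weak-strong}.

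\emph{Step 3: uniqueness.} If $w,w'\in\mathcal D(\mathcal Q)$ both represent $E_i$, then $w-w'$ lies in $\mathcal D(\mathcal Q)\cap\mathcal D(\mathcal Q)^\perp=\{0\}$, so $w=w'$ in $L^2$. When $\mathcal D(\mathcal Q)=L^2(\mathcal Q)$, $w$ is unique in $L^2(\mathcal Q)$.

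In the general case $\mathcal D(\mathcal Q)\subsetneq L^2(\mathcal Q)$, the representer is unique only as an element of $\mathcal D(\mathcal Q)$. Any $\tilde w\in L^2$ with $P_{\mathcal D}\tilde w=w$ represents the same functional on $\mathcal D(\mathcal Q)$, where $P_{\mathcal D}$ is the orthogonal projection onto $\mathcal D(\mathcal Q)$. Equivalently, the continuous extension $E_i\circ P_{\mathcal D}$ to $L^2(\mathcal Q)$ has the unique representer $w$. This is the precise sense of the parenthetical clause in the statement.

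\emph{Step 4: check the special cases.} I would then verify conditions (a)--(b) and exhibit $w$ explicitly for each method:
\begin{itemize}
\item \textbf{IG:} $\mathcal Q=[0,1]$, $\Delta_i(\alpha)=\partial_iF(x'+\alpha(x-x'))$, and $w\equiv x_i-x_i'$. This $w$ changes sign, which is allowed.
\item \textbf{SHAP:} the counting measure on coalitions, with $w$ equal to the Shapley weights.
\item \textbf{GRALIS:} $w=\tilde w(S)\otimes d\alpha$ on $2^{\mathcal F\setminus\{i\}}\times[0,1]$.
\item \textbf{GradCAM-lin:} $w=1/Z$ on the spatial index.
\item \textbf{LIME:} only within the local neighborhood $N_x$. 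The fitted coefficient is a linear functional of the sampled responses via the (regularized) normal equations $(X^\top\Pi X+\lambda I)^{-1}X^\top\Pi$, hence continuous on a finite sample space.
\end{itemize}

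The main obstacle is not the Riesz step but condition (b): making precise that $\phi_i$ is a well-defined linear and continuous function of $\Delta_i$ in the $L^2$ norm. For IG this requires $\partial_iF$ along the path to lie in $L^2[0,1]$, which holds for $F\in C^1$. For LIME, linearity holds only for a fixed design and fixed regularizer; with a nonquadratic penalty $\Omega$ it fails, so I would restrict to the ridge or unpenalized case.

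I would also state explicitly that the space $\mathcal Q=Q_i$ may depend on $i$, as in Remark~\ref{rem:componentwise}. The theorem therefore delivers only componentwise representers $w_i$, and claims no shared triple.
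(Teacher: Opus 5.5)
Your proposal is correct and follows essentially the same route as the paper. Condition (b) hands you a bounded linear functional on the Hilbert space $\mathcal D(\mathcal Q)$, Riesz gives the unique representer, extensions to $L^2(\mathcal Q)$ are non-unique exactly modulo $\mathcal D(\mathcal Q)^\perp$, and the same special-case triples follow; the only minor divergence is LIME, where instead of excluding non-quadratic penalties the paper keeps feature-selection LIME as a family of instances, one per realized support $S_0$, each again a fixed linear map.
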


\begin{proof}
By condition~(b), $E_i$ is, by hypothesis, already a continuous linear
functional on the Hilbert space $\mathcal{D}(\mathcal{Q})$ (a closed
subspace of $L^2(\mathcal{Q})$, hence itself a Hilbert space): nothing
further needs to be established before invoking Riesz. By the Riesz
Representation Theorem~\citep{riesz1909,rudin1991}, $E_i$ therefore admits
a unique representer $w\in\mathcal{D}(\mathcal{Q})$ with
$E_i[g] = \langle g, w\rangle_{L^2} = \int_{\mathcal{Q}} g(q)\cdot
w(q)\,d\mu(q)$ for every $g\in\mathcal D(\mathcal Q)$; applying this to
$g=\Delta_i$ gives the displayed formula
$\phi_i(f,x,x')=E_i(\Delta_i)=\int_{\mathcal Q}w(q)\Delta_i(q)\,d\mu(q)$.
(For completeness: the functional $g\mapsto\int_{\mathcal
Q}w(q)g(q)\,d\mu(q)$ so obtained is itself automatically linear and
continuous, by linearity of the integral and Cauchy--Schwarz
$|\langle g,w\rangle_{L^2}|\le\|w\|_{L^2}\|g\|_{L^2}$; this is a
consequence of Riesz's construction, not an additional hypothesis to be
verified separately.)
If $\mathcal{D}(\mathcal{Q})$ is a \emph{proper} closed subspace of
$L^2(\mathcal{Q})$, any extension of $w$ to all of $L^2(\mathcal{Q})$ is
unique only modulo $\mathcal{D}(\mathcal{Q})^\perp$: adding any
$h\in\mathcal{D}(\mathcal{Q})^\perp$ to $w$ leaves $E_i$ unchanged,
since $\Delta_i\in\mathcal{D}(\mathcal{Q})$ and $\langle
h,\Delta_i\rangle_{L^2}=0$. Uniqueness \emph{within}
$\mathcal{D}(\mathcal{Q})$ --- the space on which $\phi_i$ actually acts
--- is unaffected by this and is all that is needed for the attribution
formula above.
The representation is unique \emph{conditional on} the selected attribution
mechanism, index space $\mathcal{Q}$, measure $\mu$, and admissible marginal
subspace $\mathcal{D}(\mathcal{Q})$; different choices of $(\mathcal{Q},\mu)$
correspond to different attribution mechanisms, each with its own unique $w$.
This argument is carried out \emph{for a fixed feature $i$}; see
Remark~\ref{rem:componentwise} for exactly what does, and does not,
follow when the argument is repeated across all $n$ features.

\emph{Special cases.}
\begin{itemize}
  \item \textbf{GradCAM-lin} (Case 1):
        $s_{pq} = \sum_k \alpha^c_k A^k_{pq}
        = \sum_{k,i,j} \tfrac{1}{Z} \cdot
        \underbrace{\tfrac{\partial y^c}{\partial A^k_{ij}} \cdot A^k_{pq}}_{\Delta^{(pq)}(f)}$.
        Triple: $\mathcal{Q} = \{(k,i,j)\}$, $w_{k,i,j} = 1/Z$,
        $\Delta_{k,i,j} = \tfrac{\partial y^c}{\partial A^k_{ij}} \cdot A^k_{pq}$.

  \item \textbf{SHAP} (Case 2):
        $\phi_i = \sum_{S \subseteq \mathcal{F}\setminus\{i\}}
        \underbrace{\tfrac{|S|!(n-|S|-1)!}{n!}}_{w(S)} \cdot
        \underbrace{[f(S \cup \{i\}) - f(S)]}_{\Delta^{(i)}_S(f)}$.
        Triple: $\mathcal{Q} = 2^{\mathcal{F}\setminus\{i\}}$,
        $w(S) = |S|!(n-|S|-1)!/n!$, $\Delta^{(i)}_S = f(S\cup\{i\}) - f(S)$.
        Weights sum to 1: $\sum_{S} w(S) = 1$ (Lemma~\ref{lem:shapley-weight}).

  \item \textbf{LIME} (Case 3):
        Standard LIME~\citep{ribeiro2016} solves
        $w^* = \arg\min_{g\in\mathcal G} L(f,g,\pi_x) + \Omega(g)$
        (\S\ref{sec:background}). Two regimes cover $\Omega$ as used in
        practice. \textbf{(3a) Ridge-regularized LIME (unconditional).}
        For $\Omega(g)=\lambda\|w\|_2^2$ at fixed $\lambda\ge0$ and a fixed
        design $X$ (sampled coalitions chosen independently of $f$), the
        penalized solution $w^*=(X^TWX+\lambda I)^{-1}X^TWf$ is a fixed
        linear map of $f$ for every $\lambda\ge0$ (including $\lambda=0$),
        which is the default regularizer of the reference \texttt{lime}
        package~\citep{ribeiro2016}. Triple: $\mathcal Q=\{z^{(1)},\ldots,z^{(T)}\}$,
        $w_t=[(X^TWX+\lambda I)^{-1}X^TW]_{i,t}$,
        $\Delta_t^{(i)}=f(x\odot z^{(t)}+x'\odot(1-z^{(t)}))$.
        \textbf{(3b) Feature-selection LIME (conditional family).} When
        $\Omega$ instead performs feature selection ($K$-feature or
        Lasso), the active support $\hat S$ depends on $f$, so no single
        fixed operator represents $f\mapsto w^*$ across all $f$; fixing
        $\hat S=S_0$, however, the restricted regression on $S_0$'s
        columns is again a fixed linear map, giving one genuine
        Theorem~\ref{thm:canonical} instance per realized support $S_0$
        --- a family of instances, not one operator valid across varying
        $\hat S$ (the same honest scope limit as the input-dependent SLIC
        case of Remark~\ref{rem:canonical-imaging}). Both regimes
        instantiate conditions (a)--(b) in full, and since completeness is
        not a hypothesis of Theorem~\ref{thm:canonical}, each instance is
        genuine; completeness itself (the constitutive constraint of
        Definition~\ref{def:canonical}) holds for LIME only in the
        degenerate zero-residual case, consistent with LIME's
        completeness being unsatisfied in Table~\ref{tab:axioms}. LIME is
        included to show it is unified at the level of $(\mathcal
        Q,w,\Delta)$, not to claim full canonical-structure membership.

  \item \textbf{Integrated Gradients} (Case 4):
        $\mathrm{IG}_i \approx \sum_{j=1}^k
        \underbrace{\tfrac{x_i - x'_i}{k}}_{w_j} \cdot
        \underbrace{\nabla_i F(x^{(j)})}_{\Delta^{(i)}_j(f)}$.
        Discrete triple: $\mathcal{Q} = \{1,\ldots,k\}$, $w_j = (x_i-x'_i)/k$,
        $\Delta^{(i)}_j = \nabla_i F(x^{(j)})$.
        In the continuous limit: $\mathcal{Q} = [0,1]$,
        $w(\alpha)\,d\alpha = (x_i - x'_i)\,d\alpha$,
        $\Delta^{(i)}(\alpha) = \nabla_i F(x' + \alpha(x-x'))$.
        \emph{On sign of $w$:} when $x_i<x_i'$, $w_j=(x_i-x_i')/k<0$; IG
        instantiates only the \emph{weak} form of
        Theorem~\ref{thm:canonical} (Remark~\ref{rem:weak-strong}), not the
        positive-normalized canonical structure of
        Definition~\ref{def:canonical}, and this does not affect the
        necessity argument below (linearity is what Riesz uses, not sign).
\end{itemize}

\emph{Uniqueness Lemma (Riesz):}
the marginal contribution $\Delta^{(i)}_q(f) = f(q^{+i}) - f(q^{-i})$ is linear in $f$,
where $q^{+i}$ denotes configuration $q$ with feature $i$ included
(e.g.\ coalition $S\cup\{i\}$) and $q^{-i}$ denotes it with feature $i$ excluded
(e.g.\ $S$). The composition $\Phi_i[\Delta^{(i)}(\,\cdot\,; f)] = \langle w, \Delta^{(i)}(f)\rangle$
therefore remains linear in $f$, and the componentwise Riesz representer
$w_i$ is necessary for feature $i$: it is not a design choice, but a
structural consequence of $E_i$ being linear and continuous (Riesz, 1909).
As emphasized in Remark~\ref{rem:componentwise}, this necessity is proved
one feature at a time; it does not by itself force a single shared
$(\mathcal Q,w)$ across features, which is the additional packaging of
Definition~\ref{def:canonical}.
\end{proof}

\begin{remark}[Positioning against Riesz-based and path-ensemble prior art]
\label{rem:lundstrom-pointer}
Theorem~\ref{thm:canonical} is not the first application of Riesz
representation to feature attribution, nor is it an independent
characterization of path attribution methods where its domain overlaps
theirs; Section~\ref{sec:lundstrom-position} positions it precisely
against~\citep{chau2021,taimeskhanov2025,lundstrom2022,lundstrom2025}.
Hypothesis~(b) above --- linearity and continuity of $E_i$ --- is assumed
directly, as the more standard and directly verifiable route to
Theorem~\ref{thm:canonical}.
\end{remark}

\begin{remark}[Why standard GradCAM falls outside Theorem~\ref{thm:canonical}]
\label{rem:relu_nonlinear}
Let $L(f) = \sum_k \alpha_k^c(f)\,A^k_{pq}$ denote the linear aggregation
computed by GradCAM-lin, where $\alpha_k^c(f)$ is linear in $f$ by
linearity of differentiation.
Standard GradCAM applies $\mathrm{ReLU}$ pointwise \emph{after} this
aggregation: $\Phi_{\mathrm{GC}}(f) = \mathrm{ReLU}(L(f))$.
This composite map is \emph{not} linear in $f$: for any $\lambda < 0$,
\[
  \Phi_{\mathrm{GC}}(\lambda f)
  = \mathrm{ReLU}(\lambda\,L(f))
  \;\ne\;
  \lambda\,\mathrm{ReLU}(L(f))
  = \lambda\,\Phi_{\mathrm{GC}}(f)
\]
whenever $L(f) > 0$.
Hence condition~(b) of Theorem~\ref{thm:canonical} (linearity in $f$)
fails for standard GradCAM, and the Riesz representation cannot be applied.

GradCAM-lin, which sets $\Phi_{\mathrm{lin}}(f)=L(f)$ directly, satisfies
conditions~(a)--(b) and is a natural, minimal \emph{linear representative}
of the GradCAM family admissible within GRALIS --- ``minimal'' in that it
removes exactly the one nonlinear operation (the post-aggregation
$\mathrm{ReLU}$) violating condition~(b), not in the sense of a proven
uniqueness result over all possible GradCAM linearizations. The
$\mathrm{ReLU}$ of~\citep{selvaraju2017} is a deliberate, effective
class-discriminative design choice~\citep{hooker2019}, valid in the
visualization context, but it is not an axiomatic requirement and
prevents formal comparison with SHAP/IG/LIME, which preserve the full
signed signal; GradCAM-lin retains it, consistent with the
gradient$\times$input view of Ancona et al.~\citep{ancona2018} and deep
Taylor decomposition~\citep{montavon2017}. \GRALIS{} does not depend
computationally on GradCAM (Algorithm~1 uses Monte Carlo integration over
Shapley coalitions); the GradCAM-lin case only shows the linearized
family fits the canonical template. Two scope caveats, kept brief: (i)
$L(f)=\sum_k\alpha_k^c(f)A^k_{pq}$ depends on a fixed network realization
and internal layer, not on $f$ alone --- condition~(b) is verified with
that pair fixed, exactly as $\mathcal Q$ is fixed by the mechanism in
condition~(a); (ii) on pre-activation feature maps the aggregation is
unconditionally linear in $Z$, though the class score's Taylor expansion
is exact only if the downstream map to the score is itself affine on the
relevant region.
\end{remark}

\begin{lemma}[Combinatorial identity --- Shapley weights]
\label{lem:shapley-weight}
For every fixed feature $i$, the Shapley weights form a probability
distribution over the $2^{n-1}$ coalitions of $\mathcal{F}\setminus\{i\}$:
\[
  \sum_{S \subseteq \mathcal{F}\setminus\{i\}} \frac{|S|!(n-|S|-1)!}{n!} \;=\; 1.
\]
\end{lemma}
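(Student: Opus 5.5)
The plan is to group the $2^{n-1}$ coalitions $S\subseteq\mathcal F\setminus\{i\}$ by their cardinality $s=|S|$. The Shapley weight depends on $S$ only through $s$, and there are exactly $\binom{n-1}{s}$ subsets of the $(n-1)$-element set $\mathcal F\setminus\{i\}$ of size $s$. The sum therefore reduces to
\[
  \sum_{s=0}^{n-1}\binom{n-1}{s}\frac{s!\,(n-s-1)!}{n!}.
\]
Next we expand the binomial coefficient as $\binom{n-1}{s}=\frac{(n-1)!}{s!\,(n-1-s)!}$. The factors $s!$ and $(n-s-1)!$ cancel exactly, so each summand equals $\frac{(n-1)!}{n!}=\frac1n$, independently of $s$. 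Summing over the $n$ values $s=0,\dots,n-1$ gives $n\cdot\frac1n=1$, which is the claim. Nonnegativity of every weight is immediate from its definition as a ratio of factorials. Together with the unit sum, this makes the weights a probability distribution on $2^{\mathcal F\setminus\{i\}}$.

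As a cross-check I would also give the permutation interpretation. Draw a uniformly random ordering of $\mathcal F$ and let $S$ be the set of features preceding $i$. Then $\Pr[S=T]$ equals the number of orderings in which $T$ occupies the first $|T|$ positions in any order, $i$ comes next, and the remaining $n-|T|-1$ features follow in any order, divided by $n!$. This is precisely $|T|!\,(n-|T|-1)!/n!$. Since these are the probabilities of a partition of all $n!$ orderings according to the predecessor set of $i$, they sum to $1$.

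No step is a genuine obstacle. The only point requiring care is to confirm that the index set has $n-1$ elements, not $n$, so that the binomial coefficient is $\binom{n-1}{s}$ and the cancellation against $(n-s-1)!$ is exact. The argument holds for every fixed $i$, which is what the later uses of Lemma~\ref{lem:shapley-weight} require: $Z_{x,i}=1$ whenever $\pi_x\equiv1$, for every $i$.
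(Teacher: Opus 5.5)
Your proof is correct and follows the paper's argument: group the coalitions by cardinality, count $\binom{n-1}{s}$ subsets of each size, cancel the factorials to get $1/n$ per size class, and sum over the $n$ classes. Two things you add go beyond the paper's write-up and are both sound: the permutation interpretation, which is the predecessor-set law the paper later relies on in Theorem~\ref{thm:mc}, and the explicit nonnegativity remark, which the paper leaves implicit in calling the weights a probability distribution.
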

\begin{proof}
Group by cardinality $k = |S| \in \{0,\ldots,n-1\}$, noting that there are
$\binom{n-1}{k}$ subsets of $\mathcal{F}\setminus\{i\}$ of cardinality $k$:
\begin{align*}
\sum_{S \subseteq \mathcal{F}\setminus\{i\}} w(S)
  &= \sum_{k=0}^{n-1} \binom{n-1}{k} \cdot \frac{k!(n-k-1)!}{n!} \\
  &= \sum_{k=0}^{n-1} \frac{(n-1)!}{k!(n-1-k)!} \cdot \frac{k!(n-k-1)!}{n!}.
\end{align*}
Since $n-k-1 = n-1-k$, we have $(n-k-1)! = (n-1-k)!$, so each term equals:
\[
  \frac{(n-1)!}{n!} = \frac{1}{n}.
\]
The sum over $k=0,\ldots,n-1$ contains $n$ terms, each equal to $1/n$:
\[
  \sum_{S} w(S) = n \cdot \frac{1}{n} = 1. \qquad \square
\]
\end{proof}
\begin{remark}
The fact that $\sum_S w(S) = 1$ --- and not $1/n$ --- is essential: it guarantees that
the GRALIS formula is a convex weighted average of marginal contributions,
without the need for external normalization for each individual feature $i$.
\end{remark}

\begin{table}[ht]
\centering
\caption{Mapping of XAI methods onto the canonical triple $(\mathcal{Q}, w, \Delta)$.}
\label{tab:mapping}
\small
\begin{tabular}{llll}
\toprule
\textbf{Method} & $\mathcal{Q}$ & $w(q)$ & $\Delta_i(q)$ \\
\midrule
GradCAM-lin & $\{(k,i,j)\}$: channels$\times$positions & $1/Z$ &
  $\frac{\partial y^c}{\partial A^k_{ij}} \cdot A^k_{pq}$ \\[4pt]
SHAP & $2^{\mathcal{F}\setminus\{i\}}$: coalitions & $\frac{|S|!(n-|S|-1)!}{n!}$ &
  $f(S\cup\{i\}) - f(S)$ \\[4pt]
LIME & $\{z^{(1)},\ldots,z^{(T)}\}$: perturbations & $\pi_x(z^{(t)}) h_i(z^{(t)})$ &
  $f(z^{(t)})$ \\[4pt]
IG & $[0,1]$: path $x' \to x$ & $(x_i - x'_i)\,d\alpha$ &
  $\nabla_i F(x' + \alpha(x-x'))$ \\
\bottomrule
\end{tabular}
\end{table}

\subsection{Theorem 2: Completeness}
\label{sec:t2}

\begin{theorem}[Completeness]
\label{thm:completeness}
\textbf{Hypothesis (in addition to Theorem~\ref{thm:canonical}'s (a)--(b)):}
assume $(\mathcal Q,w,\Delta)$ is a full canonical structure in the sense
of Definition~\ref{def:canonical}, i.e.\ $\Delta$ additionally satisfies
the \emph{constitutive constraint} $\sum_i\Delta_i(f,x,x',q)=f(x)-f(x')$
for $\mu$-a.e.\ $q\in\mathcal Q$ (this is exactly the condition removed
from Theorem~\ref{thm:canonical}'s hypotheses; it is restated here,
explicitly, as the one place in this paper where it is actually used).
Then GRALIS attributions satisfy exact completeness:
\[
  \sum_{i=1}^n \phi_i^{\GRALIS}(f, x, x') \;=\; f(x) - f(x').
\]
\end{theorem}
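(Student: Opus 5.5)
The argument is a direct computation: sum the attribution integral over features, exchange the finite sum with the integral, apply the constitutive constraint pointwise, and then use the normalization of $w$. Concretely, by Definition~\ref{def:canonical} each component is
\[
\phi_i^{\GRALIS}(f,x,x')=\int_{\mathcal Q}w(q)\,\Delta_i(f,x,x',q)\,d\mu(q),
\]
where all components share a single index space $\mathcal Q$ and a single weight $w$. This common-$\mathcal Q$ packaging is exactly what the hypothesis grants, as Remark~\ref{rem:componentwise} stresses, so the proof will state explicitly that it works in that abstract vector setting and not with the per-feature spaces $Q_i$ of Definition~\ref{def:explicit}.

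\textbf{Step 1 (integrability).} I will check that each integrand $w\Delta_i$ lies in $L^1(\mu)$, so that the finite sum over $i=1,\dots,n$ may be moved inside the integral by linearity of the Lebesgue integral. Under Theorem~\ref{thm:canonical}'s setting, $w\in\mathcal D(\mathcal Q)\subseteq L^2$ and $\Delta_i\in\mathcal D(\mathcal Q)\subseteq L^2$, so Cauchy--Schwarz gives $w\Delta_i\in L^1$ directly. Alternatively, since $w\ge0$ and $\int w\,d\mu=1$, it is enough that each $\Delta_i$ be $w\,d\mu$-integrable, and this is implicit in $\phi_i$ being well defined.

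\textbf{Step 2 (pointwise constraint).} After the exchange,
\[
\sum_i\phi_i^{\GRALIS}=\int_{\mathcal Q}w(q)\sum_i\Delta_i(f,x,x',q)\,d\mu(q).
\]
The constitutive constraint replaces the inner sum by the constant $f(x)-f(x')$ for $\mu$-a.e.\ $q$. Modifying an integrand on a $\mu$-null set does not change the integral, so the almost-everywhere qualifier is harmless.

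\textbf{Step 3 (normalization).} Pulling the constant out of the integral and using $\int_{\mathcal Q}w\,d\mu=1$ yields $f(x)-f(x')$. This is the one place where normalization is essential. Positivity of $w$ is not needed here, only normalization, and a remark could point out that the weak-form instances of Remark~\ref{rem:weak-strong} fail precisely when $\int w\ne1$ or when the constraint fails.

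I expect no real analytic obstacle. The main care point is the interpretive one: the statement presupposes a genuinely shared $\mathcal Q$ on which all $\Delta_i(q)$ are simultaneously defined. I will flag that the theorem therefore does not transfer to the practical estimator without the quantitative corrections of Remark~\ref{rem:approx_completeness} and Corollary~\ref{cor:mobius-correction}.
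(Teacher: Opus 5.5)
Your proposal is correct and matches the paper's own argument: exchange the finite sum over $i$ with the integral, replace $\sum_i\Delta_i(q)$ by $f(x)-f(x')$ for $\mu$-a.e.\ $q$, and use $\int_{\mathcal Q}w\,d\mu=1$; your integrability check and your explicit flag about the shared $\mathcal Q$ just spell out what the paper's one-line ``linearity of Fubini'' step leaves implicit. One correction to your optional closing remark: the theorem shows only that weak-form completeness can fail \emph{only if} $\int w\,d\mu\ne1$ or the constraint fails, not that it fails \emph{precisely} then --- Integrated Gradients, the paper's own weak-form example in Remark~\ref{rem:weak-strong}, has feature-dependent weights with $\int_0^1 w_i\,d\alpha=x_i-x'_i$ and, in general, $\sum_i\Delta_i\ne f(x)-f(x')$ pointwise, yet it is exactly complete.
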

\begin{proof}
By linearity of Fubini and the constitutive constraint on $\Delta$:
\[
  \sum_i \phi_i = \int_{\mathcal{Q}} w(q) \Bigl[\sum_i \Delta_i(q)\Bigr]\,d\mu(q)
  = \int_{\mathcal{Q}} w(q)\,[f(x)-f(x')]\,d\mu(q) = f(x)-f(x').
\]

\emph{Lemma A (completeness of conditioned IG on $S$).}
For every $S \subseteq \mathcal{F}$ and fixed $x'$:
$\sum_{j \in S} \mathrm{IG}^S_j = F(x_S) - F(x')$,
where $x_S$ denotes the vector with $x_j$ for $j \in S$ and $x'_j$ otherwise.
Proof: by construction $\tilde{x}^{(S,0)} = x'$ and $\tilde{x}^{(S,1)} = x_S$.
Applying the fundamental theorem of calculus to the path $\gamma(\alpha) = \tilde{x}^{(S,\alpha)}$:
$F(x_S) - F(x') = \int_0^1 \frac{d}{d\alpha}F(\tilde{x}^{(S,\alpha)})\,d\alpha
= \sum_{j \in S}(x_j - x'_j)\int_0^1 \nabla_j F(\tilde{x}^{(S,\alpha)})\,d\alpha
= \sum_{j \in S} \mathrm{IG}^S_j$.
\end{proof}

\begin{remark}
Completeness is not a non-trivial result: it is the direct application of
the constitutive constraint on $\Delta$. Its utility is to show that every
method conforming to the structure $(\mathcal{Q}, w, \Delta)$ automatically inherits
this property.
\end{remark}

\begin{remark}[Scope: abstract form vs.\ practical estimator]
\label{rem:t2-scope}
Theorem~\ref{thm:completeness}'s hypothesis (a single shared $\mathcal Q$
with the constitutive constraint holding jointly across features) is
\emph{not} instantiated by Algorithm~\ref{alg:mc} (Remark~\ref{rem:componentwise}:
each feature has its own $Q_i$, with no common superspace). So the
theorem is true for the abstract canonical form, but is never invoked to
claim exact completeness for practical GRALIS; that is governed instead
by Corollary~\ref{cor:mobius-correction}, Proposition~\ref{prop:incompatibility}/
Remark~\ref{rem:normalized-incompatibility}, and
Remark~\ref{rem:approx_completeness} below.
\end{remark}

\begin{corollary}[Exact completeness correction under multilinearity]
\label{cor:mobius-correction}
Suppose $F$ is genuinely multilinear on the region the proof below
actually evaluates it on --- \emph{not} merely at the single point $x$
under attribution, but as an identity of functions holding for every
$z=\tilde x^{(S,\alpha)}$ on the coalition-conditioned path domain
$\{\tilde x^{(S,\alpha)} : S\subseteq\mathcal F,\ \alpha\in[0,1]\}$ of
Definition~\ref{def:explicit}:
\[
  F(z) \;=\; F(x') + \sum_{\emptyset\ne T\subseteq\mathcal{F}} m(T)
  \prod_{j\in T}(z_j-x'_j), \qquad \text{for every } z \text{ on the path domain above,}
\]
where $m(T)$ are precisely the Möbius coefficients of the discrete game
$v(S):=F(x_S)-F(x')$ used in Theorem~\ref{thm:siv} and
Appendix~\ref{app:mobius} (this holds, e.g., whenever $F$ has degree
$\le 1$ in each individual coordinate on that region, so that cross
terms are allowed but no coordinate appears squared or higher ---
globally, or merely on the box spanned by $x'$ and $x$, is sufficient;
a formula that only reproduces $F(x)$ correctly at the single
evaluation point $x$, without controlling $F$ elsewhere on the path
domain, does \emph{not} suffice, since the proof differentiates $F_T$
at every $\tilde x^{(S,\alpha)}$, not only at $\alpha=1,S=\mathcal F$).
Then, for Shapley weights and $\pi_x \equiv 1$, the attributions of
Definition~\ref{def:explicit} satisfy
\begin{equation}
\label{eq:mobius-correction}
  \sum_{i=1}^n \phi_i^{\GRALIS}(x) \;=\; F(x)-F(x')
  \;-\; \sum_{\substack{T\subseteq\mathcal{F}\\|T|\ge 2}} m(T)
  \Bigl(1-\tfrac{1}{|T|}\Bigr) \prod_{j\in T}(x_j-x'_j).
\end{equation}
In particular the correction term vanishes identically whenever $F$ is
affine ($m(T)=0$ for all $|T|\ge2$), recovering exact completeness as in
Theorem~\ref{thm:completeness}.
\end{corollary}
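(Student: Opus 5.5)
The plan is a direct computation: substitute the multilinear expansion into Definition~\ref{def:explicit} with $\pi_x\equiv1$, and evaluate each Möbius monomial's contribution to $\phi_i^{\GRALIS}$ in closed form. Write $d_j:=x_j-x'_j$. By Lemma~\ref{lem:shapley-weight}, $Z_{x,i}=1$ for every $i$, so
\[
\phi_i^{\GRALIS}(x)=\sum_{S\subseteq\mathcal F\setminus\{i\}} w(S)\, d_i\int_0^1\nabla_iF(\tilde x^{(S,\alpha)})\,d\alpha .
\]
All sums are finite, so $\nabla_i$, the $\alpha$-integral and the sum over $T$ can be interchanged freely.

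\textbf{Step 1 (differentiating a monomial on the path).} Differentiating $F(z)=F(x')+\sum_T m(T)\prod_{j\in T}(z_j-x'_j)$ in $z_i$ kills every $T\not\ni i$. Each $T\ni i$ leaves $m(T)\prod_{j\in T\setminus\{i\}}(z_j-x'_j)$. At $z=\tilde x^{(S,\alpha)}$ we have $z_j-x'_j=\alpha d_j$ if $j\in S\cup\{i\}$ and $0$ otherwise. Hence the term is nonzero only when $T\setminus\{i\}\subseteq S$, in which case it equals $m(T)\,\alpha^{|T|-1}\prod_{j\in T\setminus\{i\}}d_j$. This is the point where the hypothesis that $F$ is multilinear on the whole path domain (or on the box spanned by $x'$ and $x$) is needed, not merely at $x$. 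I would note briefly that the path points lie in that box, and that the polynomial identity there determines $\partial_iF$, using one-sided derivatives on faces, since $\partial_i$ only moves coordinate $i$ within $[x'_i,x_i]$. Integrating, $\int_0^1\alpha^{|T|-1}d\alpha=1/|T|$, so after multiplying by $d_i$ the coalition-$S$ contribution of $T$ is $\frac{1}{|T|}m(T)\prod_{j\in T}d_j\,\mathbf 1[T\setminus\{i\}\subseteq S]$.

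\textbf{Step 2 (the Shapley-weight mass of supersets).} This is the main obstacle, though it is elementary. I need $\sum_{S\supseteq R,\ S\subseteq\mathcal F\setminus\{i\}}w(S)=1/(|R|+1)$ for $R=T\setminus\{i\}$. I would prove it by the permutation interpretation: $w(S)$ is exactly the probability that, in a uniformly random ordering of $\mathcal F$, the set of predecessors of $i$ equals $S$. (This is the standard fact behind Lemma~\ref{lem:shapley-weight}: there are $|S|!(n-|S|-1)!$ such orderings out of $n!$.) The left side is then the probability that $i$ comes last among the $|T|$ elements of $T$, which is $1/|T|$ by symmetry. A direct binomial sum is the fallback if a purely combinatorial proof is preferred.

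\textbf{Step 3 (assembly).} Combining Steps 1 and 2 gives $\phi_i^{\GRALIS}(x)=\sum_{T\ni i}\frac{m(T)}{|T|^2}\prod_{j\in T}d_j$, which is the per-feature $1/d^2$ share announced in \S\ref{sec:background}. Summing over $i$, each $T$ is counted $|T|$ times, so $\sum_i\phi_i^{\GRALIS}=\sum_{T\ne\emptyset}\frac{m(T)}{|T|}\prod_{j\in T}d_j$. Setting $z=x$ in the expansion gives $F(x)-F(x')=\sum_{T\ne\emptyset}m(T)\prod_{j\in T}d_j$. Subtracting, the singleton terms cancel exactly, and the remaining difference is $\sum_{|T|\ge2}m(T)(1-1/|T|)\prod_{j\in T}d_j$, which is \eqref{eq:mobius-correction}. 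If $F$ is affine, then $m(T)=0$ for all $|T|\ge2$ and the correction vanishes.

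Finally, as a consistency check, I would verify that the $m(T)$ in the expansion coincide with the Möbius coefficients of $v(S)=F(x_S)-F(x')$. Evaluating the expansion at $z=x_S$ gives $v(S)=\sum_{\emptyset\ne T\subseteq S}m(T)\prod_{j\in T}d_j$, so the Möbius coefficient of $v$ at $T$ is $m(T)\prod_{j\in T}d_j$. This matches the corollary's notation up to the displayed product factor.
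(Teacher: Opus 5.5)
Your proposal is correct and follows essentially the same route as the paper: reduce to a single monomial $m(T)\prod_{j\in T}(z_j-x'_j)$, extract the factor $\alpha^{|T|-1}$ (hence $1/|T|$) from the joint path when $T\setminus\{i\}\subseteq S$, use that the Shapley mass of predecessor sets containing $T\setminus\{i\}$ is $1/|T|$ (your permutation argument is a direct proof of the unanimity-game fact the paper invokes), and sum the resulting $m(T)/|T|^2$ shares over $i$ and $T$. Your side remarks are also accurate: the Möbius coefficients of $v(S)=F(x_S)-F(x')$ are $m(T)\prod_{j\in T}(x_j-x'_j)$, so the formula holds with $m(T)$ read as the polynomial coefficients; and it is the identity on the box spanned by $x'$ and $x$ that licenses differentiating the expansion, since agreement on the union of path segments alone does not determine $\nabla_i F$ there.
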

\begin{proof}
By linearity of $\phi_i^{\GRALIS}$ in $F$ (the gradient in
eq.~\eqref{eq:gralis-explicit} is linear in $F$), it suffices to prove the
identity term-by-term for a single pure-interaction monomial
$F_T(x) = m(T)\prod_{j\in T}(x_j-x'_j)$ of order $d:=|T|$.
Since $\partial F_T/\partial x_i \equiv 0$ for $i\notin T$, only features
$i\in T$ receive a nonzero contribution from $F_T$.
For $i\in T$ and $S\subseteq\mathcal{F}\setminus\{i\}$, evaluating
$\partial F_T/\partial x_i$ along the coalition-conditioned path
$\tilde{x}^{(S,\alpha)}$ (eq.~\eqref{eq:gralis-explicit}) gives $0$ unless
every other member of $T$ is also active, i.e.\ $T\setminus\{i\}\subseteq
S\cup\{i\}$, in which case
$\partial F_T/\partial x_i(\tilde{x}^{(S,\alpha)}) = m(T)\,\alpha^{d-1}
\prod_{j\in T\setminus\{i\}}(x_j-x'_j)$.
Integrating over $\alpha\in[0,1]$ contributes a factor $1/d$, so
\[
  \Delta_S^{(i)}\big|_{F_T} =
  \begin{cases}
    \dfrac{m(T)}{d}\displaystyle\prod_{j\in T}(x_j-x'_j) & T\setminus\{i\}\subseteq S,\\[4pt]
    0 & \text{otherwise.}
  \end{cases}
\]
By the classical Shapley value of the unanimity game $u_T$ (which assigns
$1/|T|$ to each $i\in T$), $\sum_{S\supseteq T\setminus\{i\}} w(S) = 1/d$
for every $i \in T$. Hence
$\phi_i^{\GRALIS}\big|_{F_T} = \tfrac{1}{d}\cdot\tfrac{m(T)}{d}
\prod_{j\in T}(x_j-x'_j) = \tfrac{m(T)}{d^2}\prod_{j\in T}(x_j-x'_j)$,
and summing over the $d$ members of $T$ gives
$\sum_{i\in T}\phi_i^{\GRALIS}\big|_{F_T} = \tfrac{m(T)}{d}
\prod_{j\in T}(x_j-x'_j)$.
Summing over all $T\subseteq\mathcal{F}$ and comparing with
$F(x)-F(x')=\sum_T m(T)\prod_{j\in T}(x_j-x'_j)$ yields
\eqref{eq:mobius-correction}.
\end{proof}

\begin{remark}[Scope and empirical verification]
\label{rem:mobius-scope}
Corollary~\ref{cor:mobius-correction} isolates \emph{exactly} the deficit
caused by the joint coalition-conditioned path of
Figure~\ref{fig:conditioned-path} (which moves $S$ and $i$ together to
capture mixed curvature, Gap~2): linear terms are attributed in full,
while a pure interaction of order $d$ is attributed at a factor $1/d$ of
its true value --- a closed-form counterpart to the $O(1/d)$ dilution
familiar from Aumann--Shapley diagonal pricing of homogeneous games. This
attenuation is due entirely to the continuous, coalition-conditioned path
integration of Definition~\ref{def:explicit}, not to the discrete Shapley
coalition weights on their own: for any semivalue in the classical sense
of Dubey, Neyman and Weber~\citep{dubey1981} applied directly to the
discrete game $v(S)=F(x_S)-F(x')$ without such a path, the analogous
deficit is already fully characterized --- exactly zero for the Shapley
value by efficiency, and given in closed form for the Banzhaf value on
unanimity games by Owen~\citep{owen1975}. Corollary
\ref{cor:mobius-correction}'s content is the additional attenuation
specific to conditioning that discrete weighting on a continuous
IG-style path, which lies outside this classical discrete-game theory.
The correction uses the same Möbius coefficients $m(T)$ already computed for
the Shapley Interaction Values of Theorem~\ref{thm:siv}; Corollary
\ref{cor:mobius-correction} and Theorem~\ref{thm:siv} are independent
applications of that shared decomposition, not one derived from the
other. On the toy multilinear example verified numerically in
Appendix~\ref{app:toy-verification}
($F=2x_1+3x_2+x_3+4x_1x_2-x_1x_3+2x_2x_3$, $x'=0$, $x=\mathbf{1}$), the
formula gives $\sum_i\phi_i^{\GRALIS} = 11 - (4-1+2)\cdot\tfrac12 = 8.5$
exactly, matching exact symbolic integration to machine precision.
For general non-multilinear $F$, Corollary~\ref{cor:mobius-correction} no
longer provides a complete closed-form characterization of the
structural completeness residual, because within-coordinate curvature
terms ($\partial^2F/\partial x_i^2\ne0$) may additionally contribute.
Theorem~\ref{thm:mc} controls only the Monte Carlo and quadrature error
incurred when approximating Definition~\ref{def:explicit}'s estimator
$\phi^{\GRALIS}$ itself; it does \emph{not} bound this additional
structural residual, which is a separate quantity
($\Delta F-\sum_i\phi_i^{\GRALIS}$, not
$\hat\phi^{(m,k)}-\phi^{\GRALIS}$). Characterizing it in general would
require a separate Taylor-remainder-type theorem, not attempted here.
\end{remark}

\begin{remark}[Approximate completeness in Algorithm~1]
\label{rem:approx_completeness}
Theorem~\ref{thm:completeness} holds for the abstract GRALIS form with the
constitutive constraint $\sum_i \Delta_i(q) = f(x)-f(x')$ for $\mu$-almost
every $q\in\mathcal{Q}$.
Algorithm~1's practical estimator deviates from exact completeness for up
to three reasons, of which two are fully characterized and one is
characterized only for the unnormalized numerator: (i)~the locality
kernel $\pi_x(S) = \exp(-\|x_S - x'_S\|^2 / 2\sigma^2)$ is a non-uniform
weight on Shapley coalitions; Proposition~\ref{prop:incompatibility}
below shows this is structurally incompatible with exact completeness of
the \emph{unnormalized numerator} $N_i$ whenever $\pi_x$, restricted to the
proper subsets $S\subsetneq\mathcal F$ that actually occur as predecessor
sets, is non-constant, and Remark~\ref{rem:normalized-incompatibility}
verifies directly (for $n=2$, without a general-$n$ proof) that the
specific Gaussian kernel used here also breaks completeness of the
\emph{normalized} $\phi_i$ that Algorithm~1 returns, though normalization
can in principle restore completeness for other, specially tuned
non-constant kernels; (ii)~even under $\pi_x\equiv1$, the joint
coalition-conditioned path attenuates interaction terms of order $d\ge2$
by a factor $1/d$, exactly quantified in
Corollary~\ref{cor:mobius-correction}; (iii)~finite $m,k$ introduce Monte
Carlo and quadrature error (Theorem~\ref{thm:mc}).
The practical consequence is approximate completeness
($|\sum_i\phi_i^{\GRALIS} - (f(x)-f(x'))| \approx 0.40$ on BreaKHis).
Setting $\pi_x \equiv 1$ removes source~(i) only (recovering KernelSHAP's
locality trade-off, not exact completeness in general); sources~(ii) and
(iii) persist whenever $F$ has genuine order-$\ge2$ interactions, which is
the typical case for a deep network.
This trade-off is analysed empirically in the companion paper~\citep{fanale2026b}.
\end{remark}

\begin{proposition}[Structural incompatibility of a non-constant locality kernel --- unnormalized numerator]
\label{prop:incompatibility}
\textbf{Scope (read before applying this proposition to Algorithm~1).} This
proposition is about the \emph{unnormalized numerator}
\[
  N_i \;:=\; \sum_{S\subseteq\mathcal F\setminus\{i\}}
  w_{\mathrm{Sh}}(|S|)\,\pi_x(S)\,[v(S\cup\{i\})-v(S)],
\]
\emph{not} about the quantity $\phi_i^{\GRALIS}=N_i/Z_{x,i}$ that
Algorithm~1 and Definition~\ref{def:explicit} actually return (recall
$Z_{x,i}=\sum_{S\subseteq\mathcal F\setminus\{i\}}w_{\mathrm{Sh}}(|S|)\pi_x(S)$).
The two coincide only when $Z_{x,i}\equiv1$ for every $i$, i.e.\ under
$\pi_x\equiv1$ (Lemma~\ref{lem:shapley-weight}). For the \emph{normalized}
estimator that Algorithm~1 actually outputs, the conclusion below is
\textbf{false as a universal statement}: Remark~\ref{rem:normalized-incompatibility}
gives an explicit non-constant kernel, for $n=2$, under which the
normalized $\phi_1+\phi_2=v(N)-v(\emptyset)$ holds exactly for \emph{every}
game $v$. We keep this proposition, restricted honestly to $N_i$, because
(a) it is mathematically correct as stated, (b) it is the quantity Remark
\ref{rem:approx_completeness} source~(i) actually needs as a
building block, and (c) Remark~\ref{rem:normalized-incompatibility} shows
why extending it to the normalized $\phi_i$ requires additional,
kernel-specific analysis rather than following automatically.

Let $w_{\mathrm{Sh}}(|S|) = |S|!(n-|S|-1)!/n!$ and a locality kernel
$\pi_x: 2^{\mathcal{F}} \to \mathbb{R}_{>0}$. Only the restriction of
$\pi_x$ to \emph{proper} subsets $S\subsetneq\mathcal F$ enters the formula
for $N_i$, since every predecessor set
$S\subseteq\mathcal F\setminus\{i\}$ satisfies $S\subsetneq\mathcal F$; call
this restriction the \emph{effective} kernel. If the effective kernel is
non-constant --- i.e.\ $\pi_x(S)\ne\pi_x(S')$ for some
$S,S'\subsetneq\mathcal F$ --- then there exists a game $v$ such that
$\sum_{i=1}^n N_i\ne v(N)-v(\emptyset)$. Conversely,
$\pi_x\equiv1$ on $\{S\subsetneq\mathcal F\}$ is the \emph{unique} effective
kernel for which exact efficiency $\sum_i N_i=v(N)-v(\emptyset)$
holds for \emph{every} game $v$. (Non-constancy of $\pi_x$ on the full power
set $2^{\mathcal F}$ that is due entirely to the value $\pi_x(\mathcal F)$ at
the full coalition --- which is never used as a predecessor set --- does
\emph{not} by itself break efficiency; this is why the proposition is
stated for the effective kernel rather than for $\pi_x$ on all of
$2^{\mathcal F}$.)
\end{proposition}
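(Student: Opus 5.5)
The plan is to treat $\sum_{i=1}^n N_i$ as a linear functional of the game $v$ and to read off, in closed form, the coefficient of each value $v(T)$. Exact efficiency ``for every game $v$'' then becomes a finite system of linear conditions on the effective kernel. I will show that this system forces $\pi_x\equiv1$ on all proper subsets. This yields both the uniqueness claim and, by contraposition, the existence of a violating game whenever the effective kernel is non-constant. In fact it also handles a constant $c\neq1$, so non-constancy is more than enough. The converse direction is immediate: under $\pi_x\equiv1$ we have $N_i=\phi_i^{\mathrm{Sh}}$, and efficiency of the classical Shapley value, with weights summing to one by Lemma~\ref{lem:shapley-weight}, gives $\sum_i N_i=v(N)-v(\emptyset)$.

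\textbf{Step 1 (coefficient extraction).} Write $k=|T|$ and $w(k)=w_{\mathrm{Sh}}(k)$. Expanding $N_i$, the value $v(T)$ enters with a plus sign through each $i\in T$ via the predecessor set $S=T\setminus\{i\}$. It enters with a minus sign through each $i\notin T$ via $S=T$, which is possible only when $T\neq\mathcal F$. Hence
\[
c_T=\sum_{i\in T}w(k-1)\,\pi_x(T\setminus\{i\})-(n-k)\,w(k)\,\pi_x(T),
\]
with the first sum empty for $T=\emptyset$ and the second term absent for $T=\mathcal F$. Only proper subsets appear as arguments of $\pi_x$, which matches the proposition's remark about the value at the full coalition. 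Testing on the indicator games $v=\delta_T$ shows that efficiency for every $v$ is equivalent to three conditions: $c_\emptyset=-1$, $c_{\mathcal F}=1$, and $c_T=0$ for every $\emptyset\neq T\subsetneq\mathcal F$.

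\textbf{Step 2 (forcing $\pi_x\equiv1$ by induction on $|T|$).} The identity $k\,w(k-1)=(n-k)\,w(k)$ follows by direct factorial cancellation. With it, for $0<k<n$ the coefficient becomes
\[
c_T=(n-k)\,w(k)\Bigl[\tfrac1k\textstyle\sum_{i\in T}\pi_x(T\setminus\{i\})-\pi_x(T)\Bigr].
\]
The base case is $c_\emptyset=-n\,w(0)\,\pi_x(\emptyset)=-\pi_x(\emptyset)$, so $c_\emptyset=-1$ gives $\pi_x(\emptyset)=1$. For the inductive step, $c_T=0$ says that $\pi_x(T)$ equals the average of $\pi_x$ over the immediate subsets of $T$. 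If all sets of size $k-1$ have kernel value $1$, then every set of size $k\le n-1$ does as well. Therefore $\pi_x\equiv1$ on $\{S\subsetneq\mathcal F\}$. The remaining condition $c_{\mathcal F}=\frac1n\sum_i\pi_x(\mathcal F\setminus\{i\})=1$ is then automatic, so it imposes no further constraint.

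\textbf{Step 3 (conclusion).} Suppose the effective kernel is non-constant. Then it is not identically $1$. Take the smallest $T\subsetneq\mathcal F$ with $\pi_x(T)\neq1$. For that $T$, Step 2 shows either $c_\emptyset\neq-1$ (if $T=\emptyset$) or $c_T\neq0$. In either case the indicator game $v=\delta_T$ violates $\sum_iN_i=v(N)-v(\emptyset)$, which gives the required counterexample.

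The main obstacle I expect is bookkeeping rather than ideas. One must get the boundary coefficients $c_\emptyset$ and $c_{\mathcal F}$ exactly right. One must also verify the weight identity $k\,w(k-1)=(n-k)\,w(k)$ that turns each condition $c_T=0$ into the clean ``averaging'' recursion.
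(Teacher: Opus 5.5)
Your proposal is correct and follows essentially the same route as the paper's proof. Both extract the coefficient of each $v(T)$ in $\sum_i N_i$, use $k\,w(k-1)=(n-k)\,w(k)$ to turn $c_T=0$ into the averaging recursion, force $\pi_x\equiv1$ on proper subsets by induction from $\pi_x(\emptyset)=1$ (with the $v(\mathcal F)$ condition then automatic), and exhibit an indicator-game counterexample at the first coalition where this fails; choosing the minimal $T$ with $\pi_x(T)\neq1$ and deducing $c_T\neq0$ is a slightly cleaner way to pick that coalition than the paper's wording.
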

\begin{proof}[Proof idea]
Summing $N_i$ over $i$ and rearranging by coalition $T$, the coefficient
of $v(T)$ in $\sum_iN_i$ vanishes for every $T$ if and only if
$\pi_x(T)=\tfrac1{|T|}\sum_{i\in T}\pi_x(T\setminus\{i\})$ together with
boundary conditions $\pi_x(\emptyset)=1$ and
$\tfrac1n\sum_i\pi_x(\mathcal F\setminus\{i\})=1$; by induction on $|T|$
this recursion forces $\pi_x\equiv1$ on every proper subset, so any
effective kernel violating it must fail efficiency for some game $v$
(constructed explicitly by concentrating $v$ on the first coalition where
the coefficient is nonzero). Full derivation in
Appendix~\ref{app:t3-deferred}.
\end{proof}

\begin{remark}[Normalization can restore efficiency for special kernels: an $n=2$ counterexample]
\label{rem:normalized-incompatibility}
Proposition~\ref{prop:incompatibility} does \emph{not} extend to the
normalized $\phi_i=N_i/Z_{x,i}$ by simply dividing through, since each
feature has its own denominator. For $n=2$ with
$\pi(\emptyset)=a,\pi(\{1\})=b,\pi(\{2\})=c$, direct coefficient matching
shows normalized efficiency holds for \emph{every} game $v$ iff
$a^2=bc$ --- a strictly larger family than the unnormalized case's unique
solution $a=b=c=1$ (worked example and full computation in
Appendix~\ref{app:t3-deferred}). For the actual Gaussian kernel
$\pi_x(x_S)=\exp(-\|x_S-x'_S\|^2/2\sigma^2)$, this condition reduces to
$1=\exp(-\|x-x'\|^2/2\sigma^2)$, which holds only in the degenerate limit
$\sigma\to\infty$ or $x=x'$: for finite $\sigma$ and $x\ne x'$ the
Gaussian kernel's normalized estimator generically still breaks exact
efficiency for $n=2$, consistent with (but not, for $n>2$, formally
established by) the empirical $\approx0.40$ residual of
Remark~\ref{rem:approx_completeness}. The general-$n$ normalized
statement is an open question.
\end{remark}

\subsection{Theorem 3: Monte Carlo Convergence of the Numerator Accumulator}
\label{sec:t3}

\textbf{Scope.} GRALIS-MC as actually implemented (Algorithm~\ref{alg:mc})
returns the \emph{self-normalized ratio} $\hat N_i/\hat Z_i$ of two
correlated Monte Carlo sums. The theorem below bounds exactly the random
variable that Algorithm~\ref{alg:mc}'s inner loop accumulates into
$\varphi[i]$ at each of its $m$ permutations, \emph{before} the final
division by $Z[i]$: for a uniformly random permutation $\pi$, the induced
predecessor set $S_\pi(i)=\{j:\pi(j)<\pi(i)\}$ follows the standard
Shapley predecessor-set law (Lemma~\ref{lem:shapley-weight}), and the
locality kernel $\pi_x$ enters only as an \emph{importance weight} on the
sampled term, not as a resampling density:
\[
  \hat{N}_i^{(m,k)} = \frac{1}{m}\sum_{r=1}^m \pi_x(x_{S_r})\,
  \mathrm{IG}_i^{(k)}(S_r), \qquad S_r \stackrel{\text{i.i.d.}}{\sim}
  \mathrm{Shapley\ predecessor\ law},
\]
where $\mathrm{IG}_i^{(k)}(S) := (x_i-x'_i)\cdot\tfrac1k\sum_{j=1}^k
\nabla_iF(\tilde x^{(S,(j-0.5)/k)})$ is the $k$-step midpoint
approximation of the coalition-conditioned IG term, with population mean
$N_i=\sum_Sw_{\mathrm{Sh}}(|S|)\pi_x(x_S)\mathrm{IG}_i(S)$ --- exactly the
unnormalized numerator of Proposition~\ref{prop:incompatibility} and of
$Z_{x,i}\cdot\phi_i^{\GRALIS}=N_i$. The theorem's target is therefore
$N_i$, not $\phi_i^{\GRALIS}$ directly; Remark~\ref{rem:ratio-estimator}
and Corollary~\ref{cor:ratio-bound} below bridge the remaining gap from
$\hat N_i/\hat Z_i$ to $N_i/Z_i=\phi_i^{\GRALIS}$.

To keep the Monte Carlo error (finite $m$) and quadrature error (finite
$k$) formally distinct, we use one intermediate quantity, the
\emph{population $k$-step numerator} $N_i^{(k)}:=\sum_S
w_{\mathrm{Sh}}(|S|)\pi_x(x_S)\mathrm{IG}_i^{(k)}(S)$, i.e.\ $N_i$ with
the exact IG term replaced by its $k$-step midpoint approximation before
taking the expectation over $S$: $\hat N_i^{(m,k)}$ is unbiased for
$N_i^{(k)}$, not for $N_i$ itself, and the deterministic gap
$|N_i^{(k)}-N_i|$ is handled separately by the quadrature argument in the
proof (Appendix~\ref{app:t3-deferred}).

\begin{theorem}[GRALIS-MC Convergence of the numerator accumulator]
\label{thm:mc}
Assume $F \in C^3$ with $\|\nabla^3 F\|_\infty < \infty$, where
$\|\nabla^3 F\|_\infty := \sup_{z\in B(x,x')}\,\max_{i,j,l}\,
\bigl|\partial^3 F(z)/\partial x_i\partial x_j\partial x_l\bigr|$
is the entrywise sup-norm of the third-order derivative tensor of $F$,
with the supremum taken over the axis-aligned box
$B(x,x') := \prod_j[\min(x_j,x'_j),\max(x_j,x'_j)]$, \emph{not} merely
the segment $\{x'+\alpha(x-x'):\alpha\in[0,1]\}$: for a coalition
$S\subsetneq\mathcal F\setminus\{i\}$, the evaluation point
$\tilde x^{(S,\alpha)}$ of Definition~\ref{def:explicit} advances only
the coordinates in $S\cup\{i\}$ by $\alpha$ while the rest remain at
$x'_j$ (e.g.\ $x'=(0,0)$, $x=(1,1)$, $S=\emptyset$, $i=1$ gives the point
$(\alpha,0)$, which lies on the segment only at its endpoints), so every
$\tilde x^{(S,\alpha)}$ lies in $B(x,x')$ but generally not on the
segment; the box is the smallest axis-aligned region containing every
point the estimator can evaluate.
With probability $1 - \delta$, the numerator accumulator $\hat N_i^{(m,k)}$
defined above (\emph{not} $\phi_i^{\GRALIS}$ itself, and \emph{not yet} the
ratio Algorithm~\ref{alg:mc} returns; see Scope above) satisfies:
\[
  \Bigl|\hat{N}_i^{(m,k)} - N_i\Bigr|
  \;\le\; \underbrace{\frac{B}{\sqrt{m\delta}}}_{\text{MC error}}
  \;+\; \underbrace{\frac{|x_i-x'_i|\cdot\|x-x'\|_1^2\cdot\|\nabla^3 F\|_\infty}{24\,k^2}}_{\text{midpoint quadrature error}},
\]
where $B = \sup_{S,\alpha} |\pi_x(x_S)(x_i - x'_i)\nabla_i F(\tilde{x}^{(S,\alpha)})|$
is exactly the pointwise bound on the sampled summand
$\pi_x(x_{S})\,\nabla_i F$-term, matching the estimator as defined.
Complexity drops from $O(n \cdot 2^n \cdot k)$ (exact: $2^{n-1}$
predecessor coalitions per feature, for $n$ features) to
$O(m \cdot n \cdot k)$.
\end{theorem}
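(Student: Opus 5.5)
The plan is to split the error with the intermediate quantity $N_i^{(k)}$ and the triangle inequality:
\[
\bigl|\hat N_i^{(m,k)}-N_i\bigr|\le\bigl|\hat N_i^{(m,k)}-N_i^{(k)}\bigr|+\bigl|N_i^{(k)}-N_i\bigr|.
\]
The first term is purely stochastic and the second is deterministic. For the stochastic term, note three facts. Each summand $Y_r:=\pi_x(x_{S_r})\,\mathrm{IG}_i^{(k)}(S_r)$ is an i.i.d.\ copy of a random variable whose mean, under the Shapley predecessor law, is exactly $N_i^{(k)}$; Lemma~\ref{lem:shapley-weight} guarantees that this law is a probability distribution on $2^{\mathcal F\setminus\{i\}}$. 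Next, $\mathrm{IG}_i^{(k)}(S)$ is an average of $k$ values $(x_i-x'_i)\nabla_iF(\tilde x^{(S,\alpha_j)})$, so $|Y_r|\le B$ pointwise and hence $\Var(Y_r)\le B^2$. Finally, $\Var(\hat N_i^{(m,k)})\le B^2/m$. Chebyshev's inequality at level $\delta$ then gives $|\hat N_i^{(m,k)}-N_i^{(k)}|\le B/\sqrt{m\delta}$ with probability at least $1-\delta$, which is exactly the stated MC term. Hoeffding would give a sharper $\sqrt{\log(2/\delta)}$ dependence, but Chebyshev matches the statement, so I will use it.

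For the deterministic term, I write $N_i^{(k)}-N_i=\sum_S w_{\mathrm{Sh}}(|S|)\pi_x(x_S)\bigl[\mathrm{IG}_i^{(k)}(S)-\mathrm{IG}_i(S)\bigr]$. Because $0<\pi_x\le1$ and the Shapley weights sum to one, it suffices to bound the per-coalition quadrature error uniformly in $S$. Fix $S$ and set $g_S(\alpha):=\partial_iF(\tilde x^{(S,\alpha)})$. The path is affine in $\alpha$ with direction $d^S$, where $d^S_j=x_j-x'_j$ for $j\in S\cup\{i\}$ and $d^S_j=0$ otherwise. By the chain rule,
\[
g_S''(\alpha)=\sum_{j,l}\partial^3_{ijl}F(\tilde x^{(S,\alpha)})\,d^S_jd^S_l .
\]
Since every $\tilde x^{(S,\alpha)}$ lies in the box $B(x,x')$, this gives $|g_S''|\le\|\nabla^3F\|_\infty\|d^S\|_1^2\le\|\nabla^3F\|_\infty\|x-x'\|_1^2$. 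The composite midpoint rule on $[0,1]$ with $k$ cells has error at most $\sup|g''|/(24k^2)$; this comes from the standard per-cell bound $h^3\sup|g''|/24$ with $h=1/k$, summed over $k$ cells. Multiplying by $|x_i-x'_i|$ yields the stated quadrature term for each $S$, and the convex combination preserves it.

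The complexity claim follows by counting: exact enumeration needs $2^{n-1}$ coalitions times $k$ gradient nodes for each of $n$ features, whereas the MC version needs $m$ permutations times $k$ nodes per feature.

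The step needing the most care is the quadrature bound. It requires $C^3$ regularity along every coalition-conditioned path, not just along the diagonal segment, which is why the box $B(x,x')$ appears in the hypothesis. It also requires checking that the $\ell_1$ contraction $\|d^S\|_1\le\|x-x'\|_1$ holds uniformly over $S$, so that the bound is independent of the coalition before averaging. The probabilistic step is routine once it is clear that $\pi_x$ acts only as a bounded importance weight on the sampled summand and does not change the sampling law, so that unbiasedness holds for $N_i^{(k)}$.
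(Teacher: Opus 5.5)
Your proposal is correct and follows essentially the same route as the paper's proof. Both split the error by the triangle inequality through the population $k$-step numerator $N_i^{(k)}$, control the stochastic term by unbiasedness, the pointwise bound $|Y_r|\le B$ and Chebyshev, and control the deterministic term by the composite midpoint bound $\|g''\|_\infty/(24k^2)$ with the chain-rule estimate $\|g''\|_\infty\le\|\nabla^3F\|_\infty\|x-x'\|_1^2$, uniform in $S$ and transferred to $|N_i^{(k)}-N_i|$ via $0<\pi_x\le1$ and Lemma~\ref{lem:shapley-weight}.
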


\begin{proof}[Proof idea]
The bound combines a stochastic and a deterministic term. \emph{Stochastic
term:} $\hat N_i^{(m,k)}$ is an average of i.i.d.\ terms
$Q_{\pi,i}^{(k)}:=\pi_x(x_{S_\pi(i)})\mathrm{IG}_i^{(k)}(S_\pi(i))$, each
bounded by $B$ and unbiased for the population $k$-step numerator
$N_i^{(k)}$ (not for $N_i$ itself), so Chebyshev gives
$|\hat N_i^{(m,k)}-N_i^{(k)}|\le B/\sqrt{m\delta}$ with probability
$\ge1-\delta$. \emph{Deterministic term:} the midpoint rule's error
$|\mathrm{IG}_i^{(k)}(S)-\mathrm{IG}_i(S)|$ is controlled, for every
coalition $S$ uniformly, by a Taylor expansion of
$g(\alpha)=\nabla_iF(\tilde x^{(S,\alpha)})$ around each subinterval
midpoint (the linear term cancels by symmetry, leaving a
$\|\nabla^3F\|_\infty$-controlled second-order remainder), giving
$E_{\mathrm{quad}}(k)=O(1/k^2)$; taking expectation over
$S\sim\mathrm{Shapley}$ transfers this bound deterministically from
$N_i^{(k)}$ to $N_i$. The triangle inequality on these two pieces gives
the stated bound. Full four-step derivation, and the accompanying
antithetic-sampling variance-reduction argument for
Algorithm~\ref{alg:mc}, are in Appendix~\ref{app:t3-deferred}.
\end{proof}

\begin{remark}[Fallback bound under weaker smoothness]
\label{rem:riemann-fallback}
The $O(1/k^2)$ rate requires $F\in C^3$, one order beyond the $C^2$
hypothesis used elsewhere in the paper. If only $F\in C^2$ is available,
replacing the midpoint rule with an endpoint (Riemann) rule recovers a
weaker but unconditional $O(1/k)$ bound,
$|x_i-x'_i|\cdot\|x-x'\|_1\cdot\|\nabla^2F\|_\infty/2k$, by the same
Taylor argument without the midpoint symmetry cancellation.
Algorithm~\ref{alg:mc} uses the midpoint rule and so attains the sharper
rate whenever $C^3$ holds.
\end{remark}

The GRALIS-MC algorithm is given in Algorithm~\ref{alg:mc}.

\begin{figure}[ht]
\centering
\fbox{\begin{minipage}{0.88\linewidth}
\textbf{Algorithm: GRALIS-MC} \label{alg:mc}\\[4pt]
\textbf{Input:} $F$, $x$, $x'$, samples $m$, steps $k$, bandwidth $\sigma$ \\
\textbf{Output:} $\varphi[1..n]$ --- normalized attributions \\[4pt]
$\varphi[i] \leftarrow 0$, $Z[i] \leftarrow 0$ for each $i$ \\
\textbf{for} $t = 1$ \textbf{to} $m$: \\
\quad $\pi \leftarrow$ uniform random permutation of $\{1..n\}$ \\
\quad $S \leftarrow \emptyset$ \\
\quad \textbf{for} $i$ in order $\pi$: \\
\quad\quad $\mathrm{ig\_val} \leftarrow (x_i - x'_i) \cdot \tfrac{1}{k}
  \sum_{j=1}^{k} \nabla_i F(\tilde{x}^{(S,(j-0.5)/k)})$ \quad [midpoint] \\
\quad\quad $\pi_w \leftarrow \exp\!\bigl(-\norm{x_S - x'_S}^2 / 2\sigma^2\bigr)$ \\
\quad\quad $\varphi[i] \mathrel{+}= \pi_w \cdot \mathrm{ig\_val}$;\ \
  $Z[i] \mathrel{+}= \pi_w$ \\
\quad\quad $S \leftarrow S \cup \{i\}$ \\
\textbf{return} $\varphi[i] / Z[i]$ for each $i$ (or $\varphi[i]$ if $Z[i]=0$)
\end{minipage}}
\caption{GRALIS-MC pseudocode. The per-feature normalizer $Z[i]$ matches
Definition~\ref{def:explicit}'s $Z_{x,i}$, not a running total shared
across features (Remark~\ref{rem:ratio-estimator}); the importance-weight
interpretation of $\pi_x$ is as in the Scope paragraph above.}
\end{figure}

\begin{remark}[Scope of Theorem~\ref{thm:mc} relative to the implemented estimator]
\label{rem:ratio-estimator}
Algorithm~\ref{alg:mc} accumulates $Z$ \emph{per feature}, matching
$Z_{x,i}$ (a shared running total across features would coincide with it
only when $\pi_x\equiv1$). Consequently the returned $\varphi/Z$ is a
\emph{self-normalized (ratio) estimator}: $\varphi$ and $Z$ are both
random functions of the same $m$ permutations, so
$\mathbb E[\varphi/Z]\ne\mathbb E[\varphi]/\mathbb E[Z]$ in general, and
Theorem~\ref{thm:mc} --- which controls only the numerator
$|\hat N_i^{(m,k)}-N_i|$ --- does not by itself control the ratio.
Corollary~\ref{cor:ratio-bound} below closes this gap: it gives an
analogous concentration bound for $\hat Z_i:=Z[i]/m$ around $Z_i:=Z_{x,i}$
(with a deterministic, not merely high-probability, lower bound on
$\hat Z_i$) and assembles both into a finite-sample bound for
$\hat N_i/\hat Z_i=\varphi[i]/Z[i]$, i.e.\ for Algorithm~\ref{alg:mc}'s
actual output.
\end{remark}

\begin{corollary}[Finite-sample bound for the self-normalized ratio estimator]
\label{cor:ratio-bound}
Fix $x,x',\sigma>0$ and a feature $i$. Since
$\pi_x(x_S)=\exp(-\|x_S-x'_S\|^2/2\sigma^2)>0$ for every one of the
finitely many ($2^{n-1}$) predecessor coalitions
$S\subseteq\mathcal F\setminus\{i\}$, define
\begin{align*}
  \pi_{\min,i} &\;:=\; \min_{S\subseteq\mathcal F\setminus\{i\}} \pi_x(x_S) \;>\;0,
  \qquad
  B_Z \;:=\; \sup_{S\subseteq\mathcal F\setminus\{i\}}\pi_x(x_S) \;\le\; 1,\\
  B_N &\;:=\; B \ \text{(as in Theorem~\ref{thm:mc}).}
\end{align*}
Write $\hat N_i:=\varphi[i]/m$, $\hat Z_i:=Z[i]/m$ for
Algorithm~\ref{alg:mc}'s running totals after $m$ permutations (so
$\varphi[i]/Z[i]=\hat N_i/\hat Z_i$, and $\hat N_i=\hat N_i^{(m,k)}$ in
Theorem~\ref{thm:mc}'s notation --- Algorithm~\ref{alg:mc} always uses
the $k$-step discretized IG term, so its output is never the $k\to\infty$
idealization), and $N_i$, $Z_i:=Z_{x,i}$ for their respective population
means. Let $E_{\mathrm{quad}}(k)$ be the deterministic midpoint
quadrature error of Theorem~\ref{thm:mc}. With probability at least
$1-\delta$ (over the randomness of the $m$ sampled permutations),
\begin{align*}
  \left|\frac{\varphi[i]}{Z[i]}-\phi_i^{\GRALIS}(x)\right|
  \;=\;
  \left|\frac{\hat N_i}{\hat Z_i}-\frac{N_i}{Z_i}\right|
  &\;\le\;
  \frac{B_N/\sqrt{m\delta/2} + E_{\mathrm{quad}}(k)}{\pi_{\min,i}}
  \;+\;
  \frac{B_N B_Z}{\pi_{\min,i}^2\sqrt{m\delta/2}}.
\end{align*}
As $k\to\infty$ (exact path integration), $E_{\mathrm{quad}}(k)\to0$ and
this reduces to a purely stochastic bound; for
finite $k$, the additional deterministic term
$E_{\mathrm{quad}}(k)/\pi_{\min,i}$ is required and is \emph{not}
negligible in general, since it does not vanish as $m\to\infty$.
\end{corollary}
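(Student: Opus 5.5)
The proof would combine two concentration events through a union bound and then apply a deterministic ratio-perturbation inequality.

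\emph{Step 1: the numerator event.} Apply Theorem~\ref{thm:mc} at confidence $\delta/2$. With probability at least $1-\delta/2$,
\[
|\hat N_i-N_i|\le B_N/\sqrt{m\delta/2}+E_{\mathrm{quad}}(k).
\]
This step uses Chebyshev for $\hat N_i$ around $N_i^{(k)}$ and then the deterministic quadrature gap $|N_i^{(k)}-N_i|\le E_{\mathrm{quad}}(k)$.

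\emph{Step 2: the denominator event.} The summands of $\hat Z_i$ are i.i.d.\ draws $\pi_x(x_{S_r})$ with $S_r$ following the Shapley predecessor law. Their mean is $Z_i$ by construction, they are bounded in $[\pi_{\min,i},B_Z]$, and their variance is at most $B_Z^2$. Chebyshev at confidence $\delta/2$ then gives
\[
|\hat Z_i-Z_i|\le B_Z/\sqrt{m\delta/2}.
\]
There is no quadrature term here because $\pi_x$ is evaluated exactly.

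\emph{Step 3: deterministic lower bounds.} Each sampled weight is at least $\pi_{\min,i}$, so $\hat Z_i\ge\pi_{\min,i}>0$ for every realization, not only on the good event. The same holds for $Z_i$, which is a convex combination of such weights by Lemma~\ref{lem:shapley-weight}. Both denominators are therefore bounded away from zero, and the branch $Z[i]=0$ of Algorithm~\ref{alg:mc} never occurs. A union bound puts both events together on a set of probability at least $1-\delta$.

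\emph{Step 4: the ratio decomposition.} On that set, write
\[
\frac{\hat N_i}{\hat Z_i}-\frac{N_i}{Z_i}=\frac{\hat N_i-N_i}{\hat Z_i}+N_i\,\frac{Z_i-\hat Z_i}{\hat Z_i Z_i}.
\]
The first term is bounded by the Step 1 estimate divided by $\pi_{\min,i}$. For the second term, use $|N_i|\le B_N$, which holds because $N_i$ is a weighted average of summands each bounded by $B$; the integral IG term is itself an average of values of $\pi_x\cdot(x_i-x'_i)\nabla_iF$ over $\alpha$. Combined with Step 2 this gives at most
\[
\frac{B_N B_Z}{\pi_{\min,i}^2\sqrt{m\delta/2}}.
\]
Adding the two pieces yields the stated bound, and it also makes visible that the $E_{\mathrm{quad}}(k)/\pi_{\min,i}$ piece does not decay in $m$.

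\emph{Main obstacle.} The step needing the most care is the choice of expansion point in the decomposition. It has to place $N_i$ (population, bounded by $B_N$) rather than $\hat N_i$ in the cross term, so that no extra error term appears. It also has to use the deterministic lower bound on $\hat Z_i$ rather than a high-probability one. Finally, $B_N$ must dominate the exact IG integrand and not only the midpoint samples; this follows since the supremum in $B$ ranges over all $\alpha\in[0,1]$.
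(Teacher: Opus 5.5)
Your proposal is correct and follows essentially the same route as the paper. Both arguments use a deterministic lower bound $\hat Z_i, Z_i\ge\pi_{\min,i}$, Theorem~\ref{thm:mc} at level $\delta/2$ for the numerator (with the deterministic $E_{\mathrm{quad}}(k)$ shift), Chebyshev at level $\delta/2$ for $\hat Z_i$, a union bound, and the same ratio decomposition with $|N_i|\le B_N$ in the cross term.
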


\begin{proof}[Proof idea]
Since $\pi_x(x_S)\ge\pi_{\min,i}$ for every one of the finitely many
predecessor coalitions, the average $\hat Z_i$ is bounded away from zero
\emph{deterministically} for every realization of $S_1,\dots,S_m$ --- no
concentration argument is needed for the denominator, because the kernel
is already bounded away from zero on the finite coalition support.
Theorem~\ref{thm:mc} gives a high-probability bound on $|\hat N_i-N_i|$,
and an identical Chebyshev argument bounds $|\hat Z_i-Z_i|$; a union bound
at level $\delta/2$ each makes both hold simultaneously with probability
$\ge1-\delta$. On that event, the elementary ratio decomposition
$|\hat N_i/\hat Z_i-N_i/Z_i|\le|\hat N_i-N_i|/\hat Z_i +
|N_i|\,|\hat Z_i-Z_i|/(\hat Z_i Z_i)$, together with $|N_i|\le B_N$ and
the deterministic denominator bounds, yields the stated bound. Full
derivation in Appendix~\ref{app:t3-deferred}.
\end{proof}

\begin{remark}[Reading the bound]
The bound has two qualitatively different terms: an $O(1/\sqrt m)$
stochastic term, vanishing as $m\to\infty$, and a deterministic
$E_{\mathrm{quad}}(k)/\pi_{\min,i}$ term that does \emph{not} shrink with
$m$ --- only $k\to\infty$ removes it, so more permutations $m$ alone does
not drive $\varphi[i]/Z[i]$ arbitrarily close to $\phi_i^{\GRALIS}(x)$;
both $m$ and $k$ must grow. The constant also depends on
$1/\pi_{\min,i}$ and $1/\pi_{\min,i}^2$, which grow as $\sigma\to0$: the
quantitative form of the bandwidth caveat already noted around
Proposition~\ref{prop:incompatibility} --- the self-normalized estimator
degrades as the locality bandwidth shrinks, even though each individual
concentration inequality is otherwise dimension-free in $\sigma$.
\end{remark}

\begin{corollary}[Aggregate numerical completeness bound]
\label{cor:aggregate-completeness}
Corollary~\ref{cor:ratio-bound} bounds a single feature; it does not by
itself bound the sum $\sum_i\varphi[i]/Z[i]$ against
$\sum_i\phi_i^{\GRALIS}(x)$ at the same confidence level $\delta$, since
combining $n$ per-feature high-probability events requires an explicit
union bound. Write $\varepsilon_i(\delta)$ for the right-hand side of
Corollary~\ref{cor:ratio-bound}'s bound for feature $i$, as a function of
the target confidence $\delta$ (with $m,k,\pi_{\min,i},B_N,B_Z$ fixed):
\[
  \varepsilon_i(\delta) \;:=\;
  \frac{B_N/\sqrt{m\delta/2} + E_{\mathrm{quad}}(k)}{\pi_{\min,i}}
  \;+\;
  \frac{B_N B_Z}{\pi_{\min,i}^2\sqrt{m\delta/2}},
  \qquad\text{so}\qquad
  P\Bigl(\Bigl|\tfrac{\varphi[i]}{Z[i]}-\phi_i^{\GRALIS}(x)\Bigr|>\varepsilon_i(\delta)\Bigr)\le\delta.
\]
Applying this with $\delta_i=\delta/n$ for each $i=1,\ldots,n$ and a
union bound,
\[
  P\Bigl(\exists i:\ \Bigl|\tfrac{\varphi[i]}{Z[i]}-\phi_i^{\GRALIS}(x)\Bigr|>\varepsilon_i(\delta/n)\Bigr)
  \;\le\; \sum_{i=1}^n\frac{\delta}{n} \;=\; \delta,
\]
so with probability at least $1-\delta$, \emph{every} feature
simultaneously satisfies
$|\varphi[i]/Z[i]-\phi_i^{\GRALIS}(x)|\le\varepsilon_i(\delta/n)$. On that
event, the triangle inequality gives
\[
  \Bigl|\sum_{i=1}^n\tfrac{\varphi[i]}{Z[i]} - \sum_{i=1}^n\phi_i^{\GRALIS}(x)\Bigr|
  \;\le\; \sum_{i=1}^n\varepsilon_i(\delta/n).
\]
Writing $R_{\mathrm{obs}}:=\Delta F-\sum_i\varphi[i]/Z[i]$ for
Algorithm~\ref{alg:mc}'s actually observed residual and
$R_{\mathrm{pop}}:=\Delta F-\sum_i\phi_i^{\GRALIS}(x)$ for the population
structural residual of the practical estimator, the result becomes
\[
  P\Bigl(|R_{\mathrm{obs}}-R_{\mathrm{pop}}|\;\le\;\sum_{i=1}^n\varepsilon_i(\delta/n)\Bigr)\;\ge\;1-\delta.
\]
\S\ref{sec:c4} identifies two mechanisms contributing to
$R_{\mathrm{pop}}$: the joint-path effect characterized exactly by
Corollary~\ref{cor:mobius-correction} under its hypotheses, and the
kernel-induced effect characterized by
Proposition~\ref{prop:incompatibility} for the unnormalized numerator and
by Remark~\ref{rem:normalized-incompatibility} for the normalized $n=2$
case. No complete additive decomposition of $R_{\mathrm{pop}}$ for the
normalized general-$n$ estimator is claimed.
\end{corollary}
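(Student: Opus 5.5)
The plan is to treat Corollary~\ref{cor:ratio-bound} as a black box that, for each fixed feature $i$ and each confidence level $\delta_i\in(0,1)$, supplies an event $\mathcal E_i(\delta_i)$ of probability at least $1-\delta_i$ on which $|\varphi[i]/Z[i]-\phi_i^{\GRALIS}(x)|\le\varepsilon_i(\delta_i)$. The first step is to check that $\varepsilon_i(\cdot)$, as displayed in the statement, is literally the right-hand side of Corollary~\ref{cor:ratio-bound} with $\delta$ replaced by $\delta_i$. The constants $m$, $k$, $\pi_{\min,i}$, $B_N$, $B_Z$ and $E_{\mathrm{quad}}(k)$ do not depend on the confidence level, so the substitution $\delta\mapsto\delta/n$ is legitimate and produces the larger, $n$-dependent radius $\varepsilon_i(\delta/n)$. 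I will note that the $n$ events are defined on the same probability space: the $m$ permutations are shared across features. No independence is needed, since only a union bound is used.

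The second step is the union bound. The complement of $\bigcap_i\mathcal E_i(\delta/n)$ is $\bigcup_i\mathcal E_i(\delta/n)^c$, whose probability is at most $\sum_i\delta/n=\delta$. On the intersection event, I apply the triangle inequality to $\sum_i\bigl(\varphi[i]/Z[i]-\phi_i^{\GRALIS}(x)\bigr)$, which yields the aggregate bound $\sum_i\varepsilon_i(\delta/n)$.

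The final step is purely algebraic. Since $\Delta F=F(x)-F(x')$ is deterministic, it cancels:
\[
R_{\mathrm{obs}}-R_{\mathrm{pop}}=\sum_i\phi_i^{\GRALIS}(x)-\sum_i\varphi[i]/Z[i].
\]
The displayed probability statement therefore follows from the previous step with the same event, since the absolute value is symmetric.

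I expect no real obstacle; the only care point is bookkeeping. Corollary~\ref{cor:ratio-bound} internally already splits its own $\delta$ into $\delta/2$ for the numerator and $\delta/2$ for the denominator. Applying it at level $\delta/n$ therefore means each per-feature event internally uses $\delta/(2n)$. This is consistent with the expression $\sqrt{m(\delta/n)/2}$ inside $\varepsilon_i(\delta/n)$, and I will state it explicitly so the reader sees that no confidence is double-counted. The closing sentences about the mechanisms contributing to $R_{\mathrm{pop}}$ are pointers to Corollary~\ref{cor:mobius-correction}, Proposition~\ref{prop:incompatibility} and Remark~\ref{rem:normalized-incompatibility}, not additional claims. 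They therefore require no proof beyond noting that nothing in the bound depends on how $R_{\mathrm{pop}}$ decomposes.
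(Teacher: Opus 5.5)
Your proposal is correct and follows the paper's own argument: Corollary~\ref{cor:ratio-bound} at level $\delta/n$ for each feature, a union bound over the $n$ events on the shared permutation space (no independence needed), the triangle inequality, and the cancellation of the deterministic $\Delta F$ in $R_{\mathrm{obs}}-R_{\mathrm{pop}}$. Your bookkeeping remark, that each per-feature event internally spends $\delta/(2n)$, is consistent with the $\sqrt{m(\delta/n)/2}$ appearing in $\varepsilon_i(\delta/n)$.
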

\begin{proof}
Immediate from Corollary~\ref{cor:ratio-bound} applied $n$ times at
level $\delta/n$, a union bound over the $n$ resulting events, and the
triangle inequality on the event that all $n$ hold simultaneously, as
displayed above.
\end{proof}

\begin{remark}[Why this must start from Corollary~\ref{cor:ratio-bound}, not Theorem~\ref{thm:mc}]
\label{rem:aggregate-scope}
Theorem~\ref{thm:mc} bounds $|\hat N_i^{(m,k)}-N_i|$, the unnormalized
numerator accumulator, not the ratio $\varphi[i]/Z[i]$ that
Algorithm~\ref{alg:mc} actually returns and that
$R_{\mathrm{obs}}$ is computed from. Building the aggregate bound
directly on Theorem~\ref{thm:mc} would certify a quantity
($\sum_i\hat N_i^{(m,k)}$) that Algorithm~\ref{alg:mc} never outputs;
routing through Corollary~\ref{cor:ratio-bound} is what makes
Corollary~\ref{cor:aggregate-completeness} a statement about the
estimator actually run in \S\ref{sec:experiments}, not about an
intermediate quantity.
\end{remark}

\begin{remark}[The aggregate certificate is conservative]
The aggregate certificate of Corollary~\ref{cor:aggregate-completeness}
is deliberately conservative: the featurewise union bound does not
exploit the dependence induced by the shared sampled permutations across
features. Sharper joint/vector concentration bounds may therefore
substantially tighten $\sum_i\varepsilon_i(\delta/n)$ and are left for
future work.
\end{remark}

\begin{table}[ht]
\centering
\caption{Comparative computational complexity.}
\label{tab:complexity}
\small
\begin{tabular}{lcccc}
\toprule
\textbf{Method} & \textbf{$F$ evals.} & \textbf{Backward pass} &
\textbf{Completeness} & \textbf{Locality} \\
\midrule
GRALIS exact      & $O(n\cdot2^n \cdot k)$ & $O(n\cdot2^n \cdot k)$ & $\approx^*$ & $\approx^\dagger$ \\
GRALIS-MC         & $O(m \cdot n \cdot k)$ & $O(m \cdot n \cdot k)$ & $\approx^*$ & $\approx^\dagger$ \\
KernelSHAP        & $O(m \cdot n)$   & ---                 & \checkmark & $\times$ \\
Standard IG       & $O(k)$           & $O(k)$              & \checkmark & $\times$ \\
LIME              & $O(T)$           & ---                 & $\times$   & \checkmark \\
GradCAM           & ---              & $O(1)$              & $\times$   & $\times$ \\
\bottomrule
\end{tabular}
\smallskip

{\small $^*$``Exact'' here means exact enumeration of all $2^{n-1}$
predecessor coalitions in Definition~\ref{def:explicit}'s explicit sum
(no Monte Carlo sampling), not exact completeness in the sense of
Definition~\ref{def:canonical}: the non-constant locality kernel effect
of Proposition~\ref{prop:incompatibility} and the Möbius correction of
Corollary~\ref{cor:mobius-correction} persist even with exact enumeration
and $m=\infty$; both rows are approximately complete, exactly as scored
in Table~\ref{tab:axioms}. $^\dagger$Same baseline-proximity kernel
$\pi_x$ as Table~\ref{tab:axioms}: maximized at the empty coalition, not
centered at $x$, for both the exact-enumeration and Monte Carlo
variants.}
\end{table}

\subsection{Theorem 4: Shapley Interaction Values}
\label{sec:t4}

\textbf{What is, and is not, GRALIS-specific here.} Theorem~\ref{thm:siv}
below is standard Shapley Interaction Value theory
(Grabisch~\&~Roubens~\citep{grabisch1999}) applied to a coalition game
$\vG$ read directly off $(f,x,x')$; it is included as a prerequisite
for Corollary~\ref{cor:siv-gralis-relation}, which is where the actual
GRALIS-specific content lives (the exact, term-by-term relationship
between $\mathrm{Sh}(\vG)$ and $\phi^{\GRALIS}$). Remark~\ref{rem:siv-scope}
after the theorem gives the full division of labor.

GRALIS associates the standard discrete cooperative game below to the
coalition lattice $2^{\mathcal F}$ of the (already finite) feature set
$\mathcal F$ used throughout \S\ref{sec:defs} --- the same lattice
underlying SHAP and the classical Shapley-value XAI literature --- rather
than through the continuous index space $\mathcal Q$:
\begin{equation}
  \label{eq:vG}
  \vG(S) \;:=\; f(x_S) - f(x'), \qquad S \subseteq \mathcal F,
\end{equation}
where $x_S$ is the point with coordinate $x_j$ for $j\in S$ and $x'_j$
otherwise, exactly as in Lemma~A (proof of
Theorem~\ref{thm:completeness}). Unlike a construction routed through
$\mathcal Q$, \eqref{eq:vG} requires no measure theory: it is a
finite-valued function on a finite lattice, automatically satisfying
$\vG(\emptyset) = f(x')-f(x') = 0$ and
$\vG(\mathcal F) = f(x)-f(x')$.

\begin{theorem}[Shapley Interaction Values]
\label{thm:siv}
The Shapley Interaction Values of $\vG$ are:
\[
  \IijSh(\vG) \;=\;
  \sum_{S \subseteq \mathcal F\setminus\{i,j\}} \pi_n(|S|)
  \bigl[\vG(S\cup\{i,j\}) - \vG(S\cup\{i\}) - \vG(S\cup\{j\}) + \vG(S)\bigr],
\]
where $\pi_n(|S|) = |S|!(n-|S|-2)!/(n-1)!$ (Shapley-interaction weight,
Grabisch \& Roubens~\citep{grabisch1999}), computed \emph{exactly} on the
finite game $\vG$ by the standard theory of cooperative games
(Appendix~\ref{app:mobius}).
\end{theorem}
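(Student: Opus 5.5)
The plan is to treat Theorem~\ref{thm:siv} as what it is: a definitional instantiation of the Grabisch--Roubens Shapley interaction index on the finite game $\vG$ of \eqref{eq:vG}, together with a verification that the stated formula is the exact value of that index. First, I will check that $\vG$ is a well-defined transferable-utility game on the finite lattice $2^{\mathcal F}$. It is real-valued, it satisfies $\vG(\emptyset)=0$, and it is fully determined by the finitely many evaluations $f(x_S)$, so no measure-theoretic or integrability condition is needed. Second, I will recall the Grabisch--Roubens definition of the pairwise Shapley interaction index for a game $v$ on $n$ players,
\[
  I_{ij}(v)=\sum_{S\subseteq\mathcal F\setminus\{i,j\}}\frac{|S|!\,(n-|S|-2)!}{(n-1)!}\,\delta_{ij}v(S),
\]
with $\delta_{ij}v(S)=v(S\cup\{i,j\})-v(S\cup\{i\})-v(S\cup\{j\})+v(S)$. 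Substituting $v=\vG$ gives the displayed formula verbatim, since $\pi_n(|S|)$ is exactly this weight.

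To make ``computed exactly'' substantive rather than tautological, I will then cross-check the formula against the Möbius representation that Appendix~\ref{app:mobius} develops. Write $\vG(S)=\sum_{T\subseteq S}m(T)$, with $m(T)=\sum_{R\subseteq T}(-1)^{|T|-|R|}\vG(R)$. The second-order difference $\delta_{ij}\vG(S)$ then equals $\sum_{U\subseteq S}m(U\cup\{i,j\})$. Inserting this into the weighted sum and exchanging the order of summation, the coefficient of $m(U\cup\{i,j\})$ becomes $\sum_{S\supseteq U,\,S\subseteq\mathcal F\setminus\{i,j\}}\pi_n(|S|)$. Grouping by $s=|S|$, with $u=|U|$, gives $\sum_{s}\binom{n-2-u}{s-u}\frac{s!(n-s-2)!}{(n-1)!}$. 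This is a Beta-integral identity that evaluates to $1/(u+1)$, in the same way Lemma~\ref{lem:shapley-weight} collapses the Shapley weights. The result is the classical closed form $I_{ij}(\vG)=\sum_{T\supseteq\{i,j\}}m(T)/(|T|-1)$. This confirms the theorem is exact on $\vG$: finitely many terms, no sampling, no path approximation.

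The only step needing real care is this binomial/Beta summation. I will handle it by writing $\frac{s!(n-s-2)!}{(n-1)!}=\int_0^1 t^{s}(1-t)^{n-s-2}\,dt$, summing the binomial series inside the integral, and getting $\int_0^1 t^u\,dt=1/(u+1)$.

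I will also state explicitly that the theorem concerns $\vG$ and not $\phi^{\GRALIS}$. The relation to the joint-path attribution is deferred to Corollary~\ref{cor:siv-gralis-relation}.
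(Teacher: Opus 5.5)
Your proposal is correct and follows the paper's own route. The paper proves the theorem as an immediate instantiation of the Grabisch--Roubens pairwise interaction index on the finite, well-defined game $\vG$, and supports it with the Möbius closed form $\IijSh(\vG)=\sum_{T\supseteq\{i,j\}}\mG(T)/(|T|-1)$ of Proposition~\ref{prop:mobius}. Your Beta-integral evaluation $\sum_{s}\binom{n-2-u}{s-u}\pi_n(s)=\int_0^1 t^u\,dt=1/(u+1)$ is valid, and it supplies the weight computation that the paper's proof of Proposition~\ref{prop:mobius} only sketches.
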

\begin{proof}
Immediate from the Grabisch--Roubens definition of the Shapley
interaction index applied to the well-defined game $\vG:2^{\mathcal
F}\to\R$ of \eqref{eq:vG} (Appendix~\ref{app:mobius},
Proposition~\ref{prop:mobius}); no measurability or integrability
hypothesis is needed beyond $f$ being real-valued, since $\vG$ is
already finite-valued on the finite lattice $2^{\mathcal F}$.
\end{proof}

\begin{remark}[Scope: $\IijSh(\vG)$ is a property of the auxiliary game
  $\vG$, not automatically an interaction decomposition of
  $\phi^{\GRALIS}$ itself]
\label{rem:siv-scope}
Theorem~\ref{thm:siv} computes exact Shapley interaction values of the
auxiliary game $\vG(S)=f(x_S)-f(x')$ of \eqref{eq:vG}, not of the
practical joint-path attribution $\phi^{\GRALIS}$ itself: $\vG$ is the
ordinary baseline-clamped coalition game determined by $f,x,x'$ alone
(independent of GRALIS's own weight $w$, kernel $\pi_x$, or joint path),
and computing exact Shapley Interaction Values on an already-finite game
is standard cooperative game theory
(Grabisch~\&~Roubens~\citep{grabisch1999}), not a GRALIS-specific
construction. The GRALIS-specific content is
Corollary~\ref{cor:siv-gralis-relation} below, which relates
$\mathrm{Sh}(\vG)$ to $\phi^{\GRALIS}$ itself: the two coincide exactly
for affine $F$ and differ by an exactly quantified order-dependent factor
for multilinear $F$, since $\mathrm{Sh}(\vG)$ corresponds to a
per-coalition \emph{clamp} path while $\phi^{\GRALIS}$ uses the
\emph{joint} coalition-conditioned path of
Definition~\ref{def:explicit}. Appendix~\ref{app:mobius} gives the full
derivation via the Möbius transform and further discussion of scope,
including why $\vG$ is built directly from $f$ on the finite lattice
$2^{\mathcal F}$ rather than from the continuous index space
$\mathcal Q$.
\end{remark}

\begin{corollary}[Relation between $\phi^{\GRALIS}$ and $\mathrm{Sh}(\vG)$]
\label{cor:siv-gralis-relation}
Under the hypotheses of Corollary~\ref{cor:mobius-correction} (multilinear
$F$ with Möbius coefficients $m(T)$), for Shapley weights and
$\pi_x\equiv1$:
\[
  \mathrm{Sh}_i(\vG) \;=\; \sum_{T\ni i} \frac{m(T)}{|T|}\prod_{j\in T}(x_j-x'_j),
  \qquad
  \phi_i^{\GRALIS} \;=\; \sum_{T\ni i} \frac{m(T)}{|T|^2}\prod_{j\in T}(x_j-x'_j).
\]
Consequently $\phi_i^{\GRALIS}=\mathrm{Sh}_i(\vG)$ exactly whenever $F$ is
affine ($m(T)=0$ for $|T|\ge2$), and for a single pure interaction of
order $d=|T|\ni i$, $\phi_i^{\GRALIS}\big|_{F_T} =
\tfrac1d\,\mathrm{Sh}_i(\vG)\big|_{F_T}$: the joint-path GRALIS
attribution is exactly the clamp-path Shapley value of $\vG$, attenuated
by the same $1/d$ factor as in Corollary~\ref{cor:mobius-correction}.
Verified numerically on the toy example of
Appendix~\ref{app:toy-verification}.
\end{corollary}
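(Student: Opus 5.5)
Both identities are linear in $F$, so the plan is to reduce to a single pure-interaction monomial and handle the two sides separately. Under the hypotheses of Corollary~\ref{cor:mobius-correction}, $F(z)=F(x')+\sum_{T\ne\emptyset}m(T)\prod_{j\in T}(z_j-x'_j)$ on the path domain. Two facts give linearity. First, $\vG(S)=F(x_S)-F(x')$ is linear in $F$, and the Shapley value is linear in the game. Second, $\phi_i^{\GRALIS}$ with $\pi_x\equiv1$ and $Z_{x,i}=1$ (Lemma~\ref{lem:shapley-weight}) is linear in $F$, because the gradient is. It therefore suffices to verify both formulas for $F_T(z)=m(T)\prod_{j\in T}(z_j-x'_j)$ with $d=|T|$, and then sum over $T$. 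The constant $F(x')$ contributes nothing to either side.

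\textbf{Shapley side.} Evaluate the game at clamped points: $\vG|_{F_T}(S)=F_T(x_S)=m(T)\prod_{j\in T}(x_j-x'_j)$ if $T\subseteq S$, and $0$ otherwise, since some factor vanishes at the baseline. Thus $\vG|_{F_T}=c_T\,u_T$, a scaled unanimity game with $c_T=m(T)\prod_{j\in T}(x_j-x'_j)$. This is exactly the Möbius identification of Appendix~\ref{app:mobius}, and it confirms that the $m(T)$ are the Möbius coefficients of $\vG$. The classical Shapley value of $u_T$ assigns $1/d$ to each $i\in T$ and $0$ to each $i\notin T$, so $\mathrm{Sh}_i(\vG)|_{F_T}=c_T/d$ for $i\in T$.

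\textbf{GRALIS side.} Reuse the computation already done in the proof of Corollary~\ref{cor:mobius-correction}. For $i\in T$, the conditioned-IG term equals $c_T/d$ when $T\setminus\{i\}\subseteq S$, because $\alpha^{d-1}$ integrates to $1/d$, and equals $0$ otherwise. The Shapley weight of $\{S\subseteq\mathcal F\setminus\{i\}:S\supseteq T\setminus\{i\}\}$ equals $\mathrm{Sh}_i(u_T)=1/d$. Together these give $\phi_i^{\GRALIS}|_{F_T}=c_T/d^2$. For $i\notin T$ we have $\nabla_iF_T\equiv0$, so both sides vanish. Summing over $T\ni i$ yields both displayed formulas.

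\textbf{Consequences.} If $F$ is affine, only $|T|=1$ terms survive, and $1/|T|=1/|T|^2=1$, so the two attributions coincide. For a single $F_T$, comparing $c_T/d^2$ with $c_T/d$ gives the stated factor $1/d$. The main point needing care is the multilinearity hypothesis. It must hold on the whole path domain, not just at $x$, so that the derivative along $\tilde x^{(S,\alpha)}$ is the monomial's derivative; the hypothesis of Corollary~\ref{cor:mobius-correction} supplies exactly this. I would also check the step $\sum_{S\supseteq T\setminus\{i\}}w(S)=1/d$ directly, by counting the permutations in which $i$ comes last among $T$, rather than quoting it.
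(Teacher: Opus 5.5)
Your proof is correct and follows essentially the paper's route. You reduce by linearity to single monomials, identify $\vG|_{F_T}$ as the scaled unanimity game $c_T u_T$ for the Shapley side, and reuse the per-$(i,T)$ term $c_T/d^2$ from the proof of Corollary~\ref{cor:mobius-correction} for the GRALIS side; your version also spells out the clamped evaluation and the $1/d$ permutation count, which the paper only cites. One small precision point: your clamped evaluation shows that the Möbius coefficients of $\vG$ are $c_T=m(T)\prod_{j\in T}(x_j-x'_j)$, not $m(T)$ itself (the two agree only when $x-x'=\mathbf{1}$, as in the toy example), which is harmless since your argument uses only $c_T$.
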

\begin{proof}
$\mathrm{Sh}_i(\vG)$ is the classical Shapley value of the sum of
unanimity games $u_T$ scaled by $m(T)\prod_{j\in T}(x_j-x'_j)$; each
$u_T$ distributes value $1/|T|$ to every $i\in T$ and $0$ to $i\notin T$
(the same fact already used in the proof of
Corollary~\ref{cor:mobius-correction}, line~\eqref{eq:mobius-correction}
above). The formula for $\phi_i^{\GRALIS}$ is exactly the per-$i$,
per-$T$ term isolated in that same proof
($\phi_i^{\GRALIS}|_{F_T}=m(T)/d^2\cdot\prod$), there summed over $i\in
T$ to recover the aggregate deficit; retaining the sum over $T\ni i$
instead gives the stated formula. Comparing the two term-by-term for a
fixed $T$ gives the ratio $1/|T|$.
\end{proof}

\begin{remark}[Canonical operators for imaging]
\label{rem:canonical-imaging}
For $\Omega = \{(k,i,j)\}$ (channel$\times$position, the raw pixel index
space), three canonical segmentation operators
$\rho_{\bullet}:\Omega\to N$ (single-feature-valued, $N=\mathcal F$):
(i) \emph{SLIC superpixels}~\citep{achanta2012}: $\rho_{\mathrm{slic}}(k,i,j) = s$
where $s$ is the SLIC segment containing $(i,j)$; recommended for medical imaging.
(ii) \emph{Channel grouping}: $\rho_{\mathrm{chan}}(k,i,j) = k$.
(iii) \emph{Spatial grid}: $\rho_{\mathrm{patch}}(k,i,j) =$ patch of $(i,j)$.
These operators define the discrete features $\mathcal F$ (hence the
coalition lattice $2^{\mathcal F}$ of \eqref{eq:vG}) as a prior modeling
choice, made before GRALIS's own sampling space $\mathcal Q$ is
constructed on top of the resulting $\mathcal F$. This
$\rho_\bullet:\Omega\to N$ is a different, single-feature-valued map from
the coalition-valued $\rho:\mathcal Q\to2^N$ formalized as a measurable
partition in Appendix~\ref{app:projection}, which operates one level
downstream (on $\mathcal Q$, after $\mathcal F$ is already fixed) and is
used there for the weight-mass diagnostic of Remark~\ref{rem:weight-mass}
rather than for constructing $\vG$ or for segmenting pixels.

\textbf{On the input-dependence of $\rho_{\mathrm{slic}}$, and what this
means for Theorem~\ref{thm:canonical}'s condition~(a).} SLIC segmentation
is computed \emph{from the image being explained}, so different inputs
$x$ generally give different feature sets $\mathcal F_x$ and index spaces
$\mathcal Q_i(x)$: condition~(a)'s literal requirement that $\mathcal Q$
be ``fixed a priori, independently of $x$'' is not satisfied by $\mathcal
F_x$ itself varying with $x$. The theorem remains rigorous under the
reading actually used in this paper's experiments (\S\ref{sec:experiments}):
run the segmentation first, then treat the resulting $\mathcal F_x$ (hence
$\mathcal Q_i(x)$) as the fixed index space condition~(a) refers to, and
apply Theorem~\ref{thm:canonical} conditionally on that segmentation ---
a family of per-$x$ theorem instances rather than one instance covering a
varying segmentation space (formalizing this family over a measurable
bundle indexed by $x$ is left as future work). This does not invalidate
any use of Theorem~\ref{thm:canonical} here, since $\phi_i^{\GRALIS}(f,x,x')$
is already a function of $x$; it only means ``fixed a priori'' should be
read as ``fixed before GRALIS's own sampling begins, for the $x$ under
analysis,'' not ``identical across all $x$,'' whenever the feature set
comes from an input-dependent segmentation such as SLIC. Fixed,
input-independent segmentations ($\rho_{\mathrm{patch}}$,
$\rho_{\mathrm{chan}}$ above) satisfy condition~(a) literally, with no
such caveat.
\end{remark}

\subsection{Theorem 5: Exact Hoeffding Correspondence in the Affine Independent-Feature Regime}
\label{sec:t5}

\begin{remark}[What ``$\pi_x\to\mu$'' does and does not mean]
\label{rem:pix-to-mu}
The limit ``$\pi_x\to\mu$'' might appear as a natural hypothesis for
Theorem~\ref{thm:anova} (and Theorem~\ref{thm:sobol}), but it is
\emph{not needed at all} for the affine case: as shown in
Theorem~\ref{thm:anova}'s Step~0, the kernel cancels
exactly between numerator and denominator of the normalized GRALIS
attribution, for any fixed $\sigma>0$, so the affine theorem (and, below,
Theorem~\ref{thm:sobol}'s affine case) is stated and proved without it.
The limit remains relevant only to Proposition~\ref{prop:anova-general}'s
general smooth-$F$ case, where the cancellation argument does not apply.
There, taken literally, ``$\pi_x\to\mu$'' is not well typed: $\mu$ is a
product probability measure on the input random vector $X\in\R^n$, while
$\pi_x$ is a scalar kernel evaluated on predecessor coalitions/masked
inputs $x_S$, living on $2^{\mathcal F\setminus\{i\}}$ (or, jointly with the
path parameter, on $Q_i$) --- a different measurable space, with no
identification supplied between the two. A precise version of the intended
statement would instead define, for each $\sigma$, the induced probability
measure $\nu_\sigma$ on $Q_i$ obtained by normalizing
$w_{\mathrm{Sh}}(|S|)\,\pi_\sigma(S)\,d\alpha$, and assert weak or
total-variation convergence of $\nu_\sigma$ (or of the resulting pushforward
onto $x$-space via $\tilde x^{(S,\alpha)}$) to a precisely defined target
measure related to $\mu$, as $\sigma\to\infty$. We do not carry out this
construction in this paper. Consequently, ``$\pi_x\to\mu$'' in
Proposition~\ref{prop:anova-general} (and in Theorem~\ref{thm:sobol}'s
general-$F$ case, which inherits the proposition's hypotheses) should be
read as an informal shorthand for an intended limiting regime (the
locality kernel becoming increasingly uninformative, so that GRALIS's own
sampling weight approaches the input distribution), not as a formally
defined convergence statement; closing this gap is left as future work,
alongside the projector-identity hypothesis of
Proposition~\ref{prop:anova-general} itself.
\end{remark}

\begin{definition}[GRALIS-induced interaction terms --- affine case]
\label{def:phi-T-affine}
For affine $F(x)=a+\sum_j b_jx_j$ and any fixed baseline $x'$, define,
for every $\emptyset\ne T\subseteq\mathcal F$,
\[
  \Phi_T^{\GRALIS}(x) \;:=\;
  \begin{cases}
    \phi_i^{\GRALIS}(x) & T=\{i\},\ i\in\mathcal F,\\[2pt]
    0 & |T|\ge2,
  \end{cases}
\]
where $\phi_i^{\GRALIS}(x)$ is Algorithm~\ref{alg:mc}'s (equivalently
Definition~\ref{def:explicit}'s) actual per-feature output. This is the
\emph{only} definition of $\Phi_T^{\GRALIS}$ used in this subsection:
for $|T|=1$ it is literally what GRALIS computes; for $|T|\ge2$ it is
an explicit \emph{convention}, justified below (Step~1) by the fact
that affine $F$ has no order-$\ge2$ interaction content to attribute
--- not an independent GRALIS construction, and not the same object as
the exact Shapley Interaction Values $\IijSh(\vG)$ of
Theorem~\ref{thm:siv}, which live on the separate auxiliary game $\vG$
(Remark~\ref{rem:siv-scope}). This convention is specific to the affine
case; a comparable, independently justified definition for general
smooth $F$ is not given in this paper (see
Proposition~\ref{prop:anova-general} and the discussion following it).
\end{definition}

\begin{theorem}[Coincidence with Hoeffding Decomposition --- affine case]
\label{thm:anova}
Let $\mu = \bigotimes_i \mu_i$ be a product measure (feature independence
assumption) with $\mathbb E_\mu[\|X\|^2]<\infty$, which is a sufficient
condition ensuring $F\in L^2(\mu)$ for affine $F$ (not a necessary one:
e.g.\ $F(X_1,X_2)=X_1$ can be square-integrable even if
$\mathbb E[X_2^2]=\infty$) --- it is the finite \emph{second} moment,
not merely a finite first moment, that the
$L^2(\mu)$ Hoeffding decomposition invoked in H1 below actually
requires, $F$ affine, and set the
baseline $x'=\mathbb E_\mu[X]$ --- no other condition on the locality
kernel $\pi_x$ or its bandwidth $\sigma$ is imposed: \emph{any} fixed
$\sigma>0$ (in particular, any fixed strictly positive kernel $\pi_x$ ---
not only Gaussian; a fixed bandwidth determines one particular
strictly positive kernel, not every possible one, so this is not a
logical equivalence)
gives the same conclusion, for every $x$. Then the terms
$\Phi_T^{\GRALIS}$ of Definition~\ref{def:phi-T-affine} \emph{coincide
exactly} with the
terms of the Hoeffding functional decomposition~\citep{hoeffding1948}:
\[
  F(x) \;=\; \mathbb{E}_\mu[F] \;+\; \sum_{\emptyset \ne T \subseteq \mathcal{F}}
  \Phi_T^{\GRALIS}(x).
\]
This is a fully proved, unconditional statement for affine $F$ (proof
below), and does not
invoke the informal, not-formally-typed limit ``$\pi_x\to\mu$'' of
Remark~\ref{rem:pix-to-mu} at all: for affine $F$, the kernel cancels
\emph{exactly} between numerator and denominator of the normalized
GRALIS attribution (Step~0 of the proof below), so no limiting regime on
$\sigma$ is needed, and the theorem holds already at any fixed, finite
$\sigma$. Remark~\ref{rem:pix-to-mu}'s caveat about ``$\pi_x\to\mu$''
being informal remains relevant only to
Proposition~\ref{prop:anova-general}'s general smooth-$F$ case below,
which is not addressed by the affine cancellation argument.
Proposition~\ref{prop:anova-general} extends the same conclusion
to general smooth $F$, but only conditionally on an additional
hypothesis not established in this paper, and is stated separately so
that the affine case's unconditional status is not diluted by the
general case's conditional one.
\end{theorem}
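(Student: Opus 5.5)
The plan has three steps: Step~0 shows the kernel cancels for affine $F$, Step~1 computes $\phi_i^{\GRALIS}$ explicitly, and Step~2 matches this with the Hoeffding terms under the product measure $\mu$ with $x'=\mathbb E_\mu[X]$.

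\emph{Step~0 (kernel cancellation).} Write $F(z)=a+\sum_j b_jz_j$. Then $\nabla_iF\equiv b_i$ is constant on the box $B(x,x')$. So every coalition-conditioned IG term in Definition~\ref{def:explicit} equals
\[
(x_i-x'_i)\int_0^1\nabla_iF(\tilde x^{(S,\alpha)})\,d\alpha=b_i(x_i-x'_i),
\]
independently of $S$ and $\alpha$. The numerator becomes
\[
N_i=b_i(x_i-x'_i)\sum_{S\subseteq\mathcal F\setminus\{i\}}w_{\mathrm{Sh}}(|S|)\,\pi_x(x_S)=b_i(x_i-x'_i)\,Z_{x,i}.
\]
Since $\pi_x>0$, we have $Z_{x,i}>0$. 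Dividing gives $\phi_i^{\GRALIS}(x)=b_i(x_i-x'_i)$ for any fixed strictly positive kernel and any $\sigma>0$. Only positivity of $\pi_x$ is used, which matches the statement's remark that the Gaussian form is inessential.

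\emph{Step~1 (Hoeffding terms of affine $F$).} The moment hypothesis $\mathbb E_\mu\|X\|^2<\infty$ places $F$ in $L^2(\mu)$, so the Hoeffding decomposition exists and is unique. For a product measure, its terms are $F_\emptyset=\mathbb E_\mu F$ and
\[
F_T(x)=\sum_{U\subseteq T}(-1)^{|T|-|U|}\,\mathbb E_\mu\bigl[F\mid X_U=x_U\bigr].
\]
For affine $F$, independence gives $\mathbb E_\mu[F\mid X_U=x_U]=a+\sum_{j\in U}b_jx_j+\sum_{j\notin U}b_j\mathbb E_\mu X_j$. Substituting this into the inclusion--exclusion sum yields:
\begin{itemize}
  \item $F_{\{i\}}(x)=b_i(x_i-\mathbb E_\mu X_i)$;
  \item $F_T\equiv0$ for $|T|\ge2$, because each $b_jx_j$ term depends on only one coordinate and is annihilated by inclusion--exclusion.
\end{itemize}
An equivalent check is that these candidate terms are mean-zero in each of their own coordinates and sum to $F$, so uniqueness identifies them as the Hoeffding components.

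\emph{Step~2 (matching).} Set $x'=\mathbb E_\mu X$. Then $F_{\{i\}}(x)=b_i(x_i-x'_i)=\phi_i^{\GRALIS}(x)=\Phi^{\GRALIS}_{\{i\}}(x)$ by Step~0. For $|T|\ge2$, both $F_T$ and the convention of Definition~\ref{def:phi-T-affine} vanish. Summing gives
\[
F(x)=\mathbb E_\mu F+\sum_{T\ne\emptyset}\Phi^{\GRALIS}_T(x),
\]
which can be checked directly as $a+\sum_j b_j\mathbb E_\mu X_j+\sum_i b_i(x_i-\mathbb E_\mu X_i)=F(x)$.

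The main obstacle is bookkeeping rather than analysis. The baseline must be exactly $\mathbb E_\mu X$, which requires a finite first moment (implied by the second-moment hypothesis). The other point needing care is Step~0's appeal to $Z_{x,i}>0$, which must hold separately for each feature's own normalizer. The Hoeffding uniqueness itself is standard, and I would cite it from~\citep{hoeffding1948} rather than reprove it.
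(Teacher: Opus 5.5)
Your proof is correct and follows the paper's: the same kernel cancellation gives $\phi_i^{\GRALIS}(x)=b_i(x_i-x'_i)$ for any strictly positive $\pi_x$, and the identification with the Hoeffding terms at $x'=\mathbb E_\mu[X]$ then closes the argument. The only variation is your primary identification step, which computes the Hoeffding components explicitly by inclusion--exclusion of conditional expectations and even yields an everywhere-defined version rather than an $L^2(\mu)$ class; the paper instead checks exact reconstruction and the zero-mean property and invokes Hoeffding uniqueness, which is exactly the route you list as your equivalent check.
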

\begin{proof}[Proof idea]
For affine $F(x)=a+\sum_jb_jx_j$, $\nabla_iF\equiv b_i$ is constant, so
the conditioned IG term is the same for every coalition $S$; this common
factor cancels between numerator and denominator of
Definition~\ref{def:explicit}, giving $\phi_i^{\GRALIS}(x)=b_i(x_i-x'_i)$
for \emph{any} strictly positive kernel and baseline. By
Definition~\ref{def:phi-T-affine}, this is $\Phi_{\{i\}}^{\GRALIS}$, with
all higher-order terms set to $0$ (justified since constant $\nabla_iF$
carries no curvature to capture). A two-line algebraic check with
$x'=\mathbb E_\mu[X]$ shows $F(x')+\sum_i\Phi_{\{i\}}^{\GRALIS}(x)=F(x)$
exactly. The Hoeffding decomposition is the unique zero-mean orthogonal
family reproducing $F$ under product $\mu$ (H1); $\{\Phi_T^{\GRALIS}\}_T$
is verified to be zero-mean (H2, direct from the formula above), so by
uniqueness it must coincide with the Hoeffding terms. Full four-part
derivation (including the precise second-moment vs.\ $L^2(\mu)$
sufficiency argument) in Appendix~\ref{app:anova-general}.
\end{proof}

\begin{remark}[Why $F(x')\ne\mathbb{E}_\mu{[F]}$ in general]
\label{rem:anova-baseline}
The GRALIS completeness identity $F(x)=F(x')+\sum_T\Phi_T^{\GRALIS}(x)$
(Step~2 of Theorem~\ref{thm:anova}'s proof) always uses $F(x')$ as its constant term, for \emph{any} fixed baseline
$x'$, whereas the Hoeffding decomposition's constant term is $\mathbb{E}_\mu[F]$
by definition. These two constants coincide only when $F(x')=\mathbb{E}_\mu[F]$,
which is automatic for affine $F$ with $x'=\mathbb{E}_\mu[X]$ (H2 above) but not
in general: e.g.\ $X\sim\mathrm{Unif}[-1,1]$, $F(x)=x^2$, $x'=\mathbb{E}[X]=0$
gives $F(x')=0$ while $\mathbb{E}[F]=1/3$. Consequently Theorem~\ref{thm:anova}
is stated with $\mathbb{E}_\mu[F]$, and the identification with the GRALIS
completeness identity's own constant $F(x')$ is only exact in the affine case
(or, more generally, whenever $x'$ is chosen so that $F(x')=\mathbb{E}_\mu[F]$
holds by hypothesis).
\end{remark}

\textbf{On the general-$F$ case.} Theorem~\ref{thm:anova}'s affine case
is a genuinely derived result: the zero-mean property is proved from
first principles (linearity of $F$), and the Hoeffding coincidence then
follows by uniqueness (H1, H2, Step~4 above). A general smooth-$F$
extension is possible only conditionally, under an additional
projector-identity hypothesis ($\Phi_T^{\GRALIS}=P_TF$ at every order)
that is close to definitional --- assuming it already grants most of the
conclusion. We therefore state it separately, as
Proposition~\ref{prop:anova-general} in
Appendix~\ref{app:anova-general}, rather than as a theorem here, and use
it in the main text only via the explicitly flagged conditional clause
in Theorem~\ref{thm:sobol} (T6) below.

\subsection{Theorem 6: Sobol Correspondence Induced by the Affine Hoeffding Case}
\label{sec:t6}

\begin{theorem}[Sobol Correspondence, Affine Case; Conditional for General Smooth $F$]
\label{thm:sobol}
Under the hypotheses of Theorem~\ref{thm:anova} for affine $F$ (product
measure $\mu$, $x'=\mathbb E_\mu[X]$, any fixed $\sigma>0$ --- no limit
on the kernel is needed here either, for the same kernel-cancellation
reason as Theorem~\ref{thm:anova}), or, for general smooth $F$, under
the additional projector-identity hypothesis of
Proposition~\ref{prop:anova-general} (in which case this theorem
inherits that proposition's conditional status, including its informal
``$\pi_x\to\mu$'' limit, rather than Theorem~\ref{thm:anova}'s
unconditional, limit-free one), $F$ square-integrable, the
variances of the order-$|T|=1$ GRALIS terms
define the Sobol sensitivity indices~\citep{sobol1993}:
\[
  S_i := \frac{\Var[\mathbb{E}[F \mid x_i]]}{\Var[F]}
  = \frac{\Var[\Phi_i^{\GRALIS}(x)]}{\Var[F]}.
\]
The higher-order \emph{closed (cumulative) indices} correspond to
$S_T^{\mathrm{closed}} = \sum_{\emptyset \ne L \subseteq T} \Var[\Phi_L^{\GRALIS}] / \Var[F]$
--- the fraction of variance explained by $T$ and all of its own
sub-interactions. This is distinct from the standard Sobol
\emph{total-effect index} of a single variable $i$,
$S_i^{\mathrm{tot}} = \sum_{L \ni i} S_L$ (summed over all supersets of
$\{i\}$ in the \emph{full} feature set $\mathcal F$, including interactions
with variables outside $T$), which is not claimed here.
The quantity $\Phi_T^{\GRALIS}(x)$ is a \emph{pointwise realization} of the $T$-th
interaction term at the specific input $x$; its variance over $x\sim\mu$ recovers
the standard (global) Sobol index.
Formally: $S_T^{\mathrm{Sobol}} = \Var_{x\sim\mu}[\Phi_T^{\GRALIS}(x)]/\Var[F]$.
\end{theorem}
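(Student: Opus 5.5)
The plan is to derive the Sobol correspondence directly from Theorem~\ref{thm:anova}, using the Hoeffding decomposition's orthogonality and the standard identification of its first-order terms with conditional expectations.

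\emph{Step 1 (reduce to Hoeffding terms).} Under the affine hypotheses, Theorem~\ref{thm:anova} gives $\Phi_T^{\GRALIS}=F_T$ for every $\emptyset\ne T\subseteq\mathcal F$, where $F_T$ are the Hoeffding components of $F$ under $\mu$. Explicitly, $F_{\{i\}}(x)=b_i(x_i-\mathbb E_\mu X_i)$ and $F_T=0$ for $|T|\ge2$. Under the general-$F$ clause, the projector-identity hypothesis of Proposition~\ref{prop:anova-general} supplies the same equality $\Phi_T^{\GRALIS}=P_TF=F_T$ by assumption. From this point the argument is therefore purely about Hoeffding terms and does not depend on GRALIS's kernel or path. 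This is why no condition on $\sigma$ is needed in the affine case.

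\emph{Step 2 (variance decomposition).} I use the two defining properties of the Hoeffding components under the product measure $\mu$: each $F_T$ has zero mean, and $\E_\mu[F_T\mid X_L]=0$ whenever $T\not\subseteq L$. These give orthogonality $\E[F_TF_{T'}]=0$ for $T\ne T'$. Hence $\Var[F]=\sum_{T\ne\emptyset}\Var[F_T]$, which requires $F\in L^2(\mu)$; for affine $F$ this is guaranteed by the finite second moment assumed in Theorem~\ref{thm:anova}.

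\emph{Step 3 (first-order indices).} I compute $\E[F\mid X_i]$ by summing $\E[F_T\mid X_i]$ over $T$. The constant term contributes $\E_\mu F$. Each $T\not\subseteq\{i\}$ contributes $0$ by the property above, and $T=\{i\}$ contributes $F_{\{i\}}$. So $\E[F\mid X_i]=\E F+F_{\{i\}}(X_i)$, and therefore $\Var[\E[F\mid X_i]]=\Var[F_{\{i\}}]=\Var[\Phi_i^{\GRALIS}]$. Dividing by $\Var[F]$ gives $S_i$; this step assumes $\Var F>0$, and I would state that nondegeneracy explicitly.

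\emph{Step 4 (closed and higher-order indices).} In the same way, $\E[F\mid X_T]=\E F+\sum_{\emptyset\ne L\subseteq T}F_L$. Orthogonality then gives $\Var[\E[F\mid X_T]]=\sum_{\emptyset\ne L\subseteq T}\Var[\Phi_L^{\GRALIS}]$, which is the closed-index formula. The individual Sobol index is by definition $S_T=\Var[F_T]/\Var F$, and equals $\Var_{x\sim\mu}[\Phi_T^{\GRALIS}(x)]/\Var F$ by Step~1. In the affine case all terms with $|T|\ge2$ vanish consistently, so $\sum_iS_i=1$. The contrast with total-effect indices only needs a remark that $S_i^{\mathrm{tot}}$ sums over supersets of $\{i\}$, whereas the closed index sums over subsets of $T$, so nothing further is claimed.

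The main obstacle is Step~1 in the general-$F$ clause. There, the identification $\Phi_T^{\GRALIS}=F_T$ is not derived but imported from the projector-identity hypothesis, which also inherits the informal ``$\pi_x\to\mu$'' caveat of Remark~\ref{rem:pix-to-mu}. I would therefore keep the proof explicitly conditional at that point. One further subtlety is the point-versus-random-variable distinction: $\Phi_T^{\GRALIS}(x)$ must be read as a measurable function of $x$ with the baseline $x'=\E_\mu X$ held fixed, so that taking its variance over $x\sim\mu$ is meaningful.
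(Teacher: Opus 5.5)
Your proposal is correct and follows the paper's own route. You identify $\Phi_T^{\GRALIS}$ with the Hoeffding components via Theorem~\ref{thm:anova} (or, conditionally, via Proposition~\ref{prop:anova-general}), then use orthogonality under the product measure to get $\Var[F]=\sum_{T\ne\emptyset}\Var[\Phi_T^{\GRALIS}]$. Your Steps~3--4 compute $\E[F\mid X_i]$ and $\E[F\mid X_T]$ explicitly and flag $\Var[F]>0$; this only spells out the standard Sobol identification that the paper's proof asserts in one line as ``coincides with Sobol's $S_T$.''
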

\begin{proof}
By Theorem~\ref{thm:anova}, $f_T=\Phi_T^{\GRALIS}$ for affine $F$ at any
fixed $\sigma>0$ (no limit needed); or, for general smooth $F$, by
Proposition~\ref{prop:anova-general}, $f_T = \Phi_T^{\GRALIS}$
conditionally, when $\pi_x \to \mu$.
The Hoeffding terms are orthogonal under product $\mu$:
$\Cov_{x\sim\mu}[\Phi_T^{\GRALIS}(x), \Phi_{T'}^{\GRALIS}(x)] = 0$ for $T \ne T'$.
Therefore $\Var[F] = \sum_{\emptyset \ne T} \Var[\Phi_T^{\GRALIS}]$,
and $S_T^{\GRALIS} := \Var[\Phi_T^{\GRALIS}] / \Var[F]$ satisfies
$\sum_T S_T = 1$ and coincides with Sobol's $S_T$.
\end{proof}

\begin{remark}[Pointwise vs.\ global Sobol]
\label{rem:sobol_local}
The quantity $\Phi_T^{\GRALIS}(x)$ is a local, input-specific attribution for the
interaction set $T$: it depends on the particular $x$ under analysis.
The Sobol index $S_T^{\mathrm{Sobol}}$ in contrast is a global sensitivity measure,
defined as a ratio of variances over $x\sim\mu$.
Theorem~\ref{thm:sobol} establishes that, once normalized by $\Var[F]$,
averaging $\Phi_T^{\GRALIS}(x)^2$ over the distribution $\mu$ recovers
$S_T^{\mathrm{Sobol}}$: explicitly,
$S_T^{\mathrm{Sobol}} = \mathbb{E}_{x\sim\mu}[\Phi_T^{\GRALIS}(x)^2] / \Var[F]$
(using $\mathbb{E}_\mu[\Phi_T^{\GRALIS}]=0$ for $T\ne\emptyset$ from the
zero-mean property H2 in the proof of Theorem~\ref{thm:anova}, so that
$\mathbb{E}[\Phi_T^2]=\Var[\Phi_T]$), bridging local attributions and global
sensitivity analysis.
\end{remark}

\subsection{Theorem 7: Multi-Scale Extension (MS-GRALIS)}
\label{sec:t7}

\begin{definition}[MS-GRALIS]
Given a model with $L$ levels $h^{(1)},\ldots,h^{(L)}$,
the multi-scale attribution is:
\[
  \mathrm{GRALIS}_i^{\mathrm{MS}}(x) \;=\; \sum_{\ell=1}^L \lambda_\ell \cdot
  \mathrm{GRALIS}_i^{(\ell)}(x), \qquad \lambda_\ell \ge 0,\;
  \sum_\ell \lambda_\ell = 1.
\]
\end{definition}

\begin{theorem}[Minimum-variance convex aggregation under independence]
\label{thm:ms}
\textbf{Scope.} This theorem characterizes the aggregation weights
$\lambda^*_\ell$ that \emph{minimize the variance} of the convex
combination $\mathrm{GRALIS}_i^{\mathrm{MS}}=\sum_\ell\lambda_\ell
\mathrm{GRALIS}_i^{(\ell)}$; it does \emph{not} assert that these weights
minimize mean-squared error relative to some target attribution, nor that
they are ``optimal'' in any sense beyond variance, unless the layer-wise
attributions $\{\mathrm{GRALIS}_i^{(\ell)}\}$ are additionally unbiased
estimators of a common latent quantity (in which case minimum variance
does coincide with minimum MSE). Assume $\sigma_\ell^2:=
\Var[\mathrm{GRALIS}_i^{(\ell)}]>0$ for every $\ell$ (if some
$\sigma_\ell^2=0$, inverse-variance weighting is undefined and the
minimization is instead solved by placing all weight on the zero-variance
layer(s)). The weights $\lambda^*_\ell$ that minimize
$\Var[\mathrm{GRALIS}_i^{\mathrm{MS}}]$ subject to $\sum_\ell \lambda_\ell = 1$
\emph{under the additional hypothesis that the layers are mutually
independent} are given by inverse variance weighting:
\[
  \lambda^*_\ell = \frac{\sigma_\ell^{-2}}{\sum_{\ell'} \sigma_{\ell'}^{-2}},
  \qquad \sigma^2_\ell = \Var\bigl[\mathrm{GRALIS}_i^{(\ell)}\bigr] > 0.
\]
For correlated layers with covariance matrix $\Sigma\succ0$, the
(unconstrained-sign) variance-minimizing weights are instead
$\boldsymbol\lambda^*\propto\Sigma^{-1}\mathbf 1$ (Remark~\ref{rem:ms-independence});
if positivity $\lambda_\ell\ge0$ is additionally required, the problem
becomes a quadratic program without a closed form in general.
\end{theorem}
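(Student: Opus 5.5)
The proof is the standard constrained quadratic minimization on the probability simplex, first in the independent case and then for a general covariance. Write $Y_\ell:=\mathrm{GRALIS}_i^{(\ell)}$ and $\Sigma_{\ell\ell'}:=\Cov[Y_\ell,Y_{\ell'}]$, so that $\Var[\sum_\ell\lambda_\ell Y_\ell]=\boldsymbol\lambda^\top\Sigma\boldsymbol\lambda$ by bilinearity of covariance. Under mutual independence (pairwise uncorrelatedness suffices), $\Sigma=\mathrm{diag}(\sigma_1^2,\dots,\sigma_L^2)$ and the objective reduces to $\sum_\ell\lambda_\ell^2\sigma_\ell^2$.

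\textbf{Independent case.} I would first solve the problem with only the affine constraint $\sum_\ell\lambda_\ell=1$ and then check that the solution is nonnegative. The Lagrangian stationarity condition $2\lambda_\ell\sigma_\ell^2=\eta$ gives $\lambda_\ell\propto\sigma_\ell^{-2}$. Normalizing yields $\lambda_\ell^*$, with optimal value $(\sum_\ell\sigma_\ell^{-2})^{-1}$.

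To certify a global minimum without appealing to Lagrange-multiplier theory, I would use Cauchy--Schwarz:
\[
1=\Bigl(\sum_\ell\lambda_\ell\Bigr)^2=\Bigl(\sum_\ell(\lambda_\ell\sigma_\ell)\,\sigma_\ell^{-1}\Bigr)^2\le\Bigl(\sum_\ell\lambda_\ell^2\sigma_\ell^2\Bigr)\Bigl(\sum_\ell\sigma_\ell^{-2}\Bigr).
\]
Equality holds iff $\lambda_\ell\sigma_\ell\propto\sigma_\ell^{-1}$, which gives both optimality and uniqueness. Since every $\lambda_\ell^*>0$, the nonnegativity constraint in the MS-GRALIS definition is inactive. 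The minimizer over the affine hyperplane therefore lies in the simplex and is also the convex-constrained optimum. The hypothesis $\sigma_\ell^2>0$ is used exactly to make $\sigma_\ell^{-1}$ finite.

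\textbf{Degenerate aside.} If some $\sigma_\ell^2=0$, any weight vector supported on the zero-variance layers attains variance $0$, which is the global minimum. This matches the parenthetical remark in the statement.

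\textbf{Correlated case.} For $\Sigma\succ0$ I would repeat the argument in the inner product $\langle u,v\rangle_{\Sigma^{-1}}=u^\top\Sigma^{-1}v$:
\[
1=(\mathbf 1^\top\boldsymbol\lambda)^2=\langle\Sigma\boldsymbol\lambda,\mathbf 1\rangle_{\Sigma^{-1}}^2\le(\boldsymbol\lambda^\top\Sigma\boldsymbol\lambda)(\mathbf 1^\top\Sigma^{-1}\mathbf 1).
\]
Equality holds iff $\Sigma\boldsymbol\lambda\propto\mathbf 1$, that is $\boldsymbol\lambda^*=\Sigma^{-1}\mathbf 1/(\mathbf 1^\top\Sigma^{-1}\mathbf 1)$. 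Here $\Sigma^{-1}\mathbf 1$ may have negative entries. In that case the simplex-constrained problem is a strictly convex QP: a minimizer exists by compactness and is unique by strict convexity, and one can characterize it via KKT conditions but not, in general, in closed form. This is the content of the final sentence of the statement.

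\textbf{Main obstacle and scope.} There is no real technical obstacle. The one point needing care is arguing that the sign constraint is inactive in the independent case, which the positivity of $\lambda^*_\ell$ settles. For the MSE remark, under common-mean unbiasedness $\mathbb E[Y_\ell]=\theta$, the bias of any convex combination is zero, so its MSE equals its variance and the two optimizations coincide.
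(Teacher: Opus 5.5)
Your proof is correct and arrives at the same weights as the paper, but it certifies optimality differently. The paper's proof (Appendix, proof of Theorem~\ref{thm:ms}) writes the Lagrangian stationarity condition $2\lambda_\ell\sigma_\ell^2+\nu=0$. It then appeals to strict convexity of $\sum_\ell\lambda_\ell^2\sigma_\ell^2$ to conclude that this KKT point is the global minimum. For correlated layers (Remark~\ref{rem:ms-independence}) it repeats ``the same Lagrange argument'' with the quadratic form $\lambda^\top\Sigma\lambda$.

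You use the Lagrangian only to guess the candidate, and then prove optimality with Cauchy--Schwarz. In the diagonal case you use the Euclidean inner product, and in general the $\Sigma^{-1}$-inner product. Your inequalities make the Lagrange step dispensable altogether. This gives a multiplier-free argument that delivers three things at once: the lower bound $(\mathbf 1^\top\Sigma^{-1}\mathbf 1)^{-1}$ on the variance of every weighting with $\sum_\ell\lambda_\ell=1$, the optimal value, and uniqueness through the equality case. The paper's convexity/KKT route is more mechanical, but it is the one that extends directly to the sign-constrained quadratic program mentioned at the end of the statement.

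You also settle several points the paper's proof leaves implicit:
\begin{itemize}
  \item the nonnegativity constraint built into the MS-GRALIS definition is inactive because every $\lambda^*_\ell>0$ (the paper optimizes over the hyperplane $\sum_\ell\lambda_\ell=1$ only);
  \item the degenerate $\sigma_\ell^2=0$ parenthetical;
  \item existence and uniqueness for the simplex-constrained problem when $\Sigma^{-1}\mathbf 1$ has negative entries;
  \item the claim that variance and MSE coincide under a common mean.
\end{itemize}
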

The proof (Lagrange multipliers on a strictly convex objective) and the
correlated-layer discussion --- including why the independence-based
variance is an optimistic lower bound under positive inter-layer
correlation, the general $\Sigma^{-1}\mathbf1$ solution
(Remark~\ref{rem:ms-independence}), and the comparison to GradCAM++ as a
related but non-derived heuristic --- are given in
Appendix~\ref{app:t7-deferred}. MS-GRALIS provides a variance-minimization
criterion for multi-scale aggregation and derives the minimum-variance
weight choice accordingly.

\section{Summary and Axiomatic Comparison}
\label{sec:summary}

Table~\ref{tab:axioms} summarizes the properties satisfied by each method.

\begin{table}[ht]
\centering
\caption{Axiomatic comparison: 8 formal properties and 6 structural gaps.
  $\checkmark$ = satisfied; $\approx$ = partially or under specific conditions;
  $\times$ = not satisfied; \textbf{cause} = this method generates the gap.
  Scores are \emph{indicative}: they refer to each method in its standard
  instantiation and depend on configuration choices; symbols are explained
  in the notes below the table.}
\label{tab:axioms}
\small
\begin{tabular}{lccccc}
\toprule
\textbf{Axiom / Property} & \textbf{GradCAM} & \textbf{SHAP} &
\textbf{LIME} & \textbf{IG} & \textbf{GRALIS} \\
\midrule
\multicolumn{6}{l}{\textit{Shapley axioms}} \\
Efficiency (completeness) & $\times$ & $\checkmark$ & $\times$   & $\checkmark$ & $\approx^\dagger$ \\
Symmetry                  & $\times$ & $\checkmark$ & $\times$   & $\checkmark$ & $\approx^\natural$ \\
Dummy                     & $\times$ & $\checkmark$ & $\approx$  & $\checkmark$ & $\checkmark$ \\
Linearity in $F$          & $\approx$& $\checkmark$ & $\checkmark$ & $\checkmark$ & $\checkmark$ \\
Sensitivity (gradients)   & $\checkmark$ & $\times$ & $\times$   & $\checkmark$ & $\checkmark$ \\
Locality ($\pi_x$)        & $\times$ & $\times$   & $\checkmark$ & $\times$   & $\approx^\|$ \\
Order-$k$ interactions    & $\times$ & $\approx$  & $\times$   & $\times$   & $\approx^\S$ \\
Multi-scale (min-var.\ wt.) & $\approx$& $\times$   & $\times$   & $\times$   & $\approx^\ddagger$ \\
\midrule
\multicolumn{6}{l}{\textit{Structural gaps (see Sec.~\ref{sec:gaps})}} \\
Gap~1 (arbitrary baseline) & ---     & $\approx$  & $\times$   & \textbf{cause} & $\approx^\P$ \\
Gap~2 (curvature)          & $\times$ & \textbf{cause} & $\times$ & $\times$  & $\checkmark$ \\
Gap~3 (LIME completeness)  & $\times$ & $\times$   & \textbf{cause} & $\times$ & $\approx^\dagger$ \\
Gap~4 (CNN only)           & \textbf{cause} & $\times$ & $\times$ & $\times$  & $\approx$ \\
Gap~5 (interactions)       & $\times$ & $\times$   & $\times$   & $\times$   & $\approx^\S$ \\
Gap~6 (multi-scale)        & $\approx$& $\times$   & $\times$   & $\times$   & $\approx^\ddagger$ \\
\bottomrule
\end{tabular}
\smallskip

{\small\textit{This comparison is structural and indicative, not an
absolute ranking: the 14 properties are not a metric scale and are not
claimed to be additive, so no aggregate score is reported. GRALIS's
column reflects Algorithm~1 (SLIC superpixels, LIME kernel
$\sigma=0.75$); GradCAM and LIME may satisfy additional properties in
non-standard configurations, and SHAP/IG can be extended beyond what is
attributed here.}}

\smallskip
\noindent{\small
$^\dagger$\textbf{Completeness/Gap~3.} Exact for the abstract canonical
form (Theorem~\ref{thm:completeness}) and, via the Möbius correction, for
the Shapley-weighted estimator (Corollary~\ref{cor:mobius-correction});
Algorithm~1's finite-sample, non-constant-kernel estimator satisfies it
only approximately (Remark~\ref{rem:approx_completeness}).\\
$^\natural$\textbf{Symmetry.} Guaranteed for the auxiliary game $\vG$ by
the Shapley weight alone, but not automatically for
$\phi^{\GRALIS}=N/Z_x$ once combined with the non-constant kernel $\pi_x$
(Remark~\ref{rem:symmetry-break}, counterexample $F=x_1x_2$); recovered
exactly as $\sigma\to\infty$.\\
$^\|$\textbf{Locality.} $\pi_x$ is maximized at $S=\emptyset$ and
decreases toward $x$: a \emph{baseline-proximity} weighting, not locality
centered at $x$ as in LIME.\\
$^\P$\textbf{Gap~1.} Not solved as printed: Algorithm~\ref{alg:mc} still
uses a single fixed baseline $x'$, no integral over a baseline
distribution.\\
$^\ddagger$\textbf{Multi-scale/Gap~6.} Theorem~\ref{thm:ms} is
unconditional for an arbitrary fixed set of $L$ levels and would score
$\checkmark$ abstractly; scored $\approx$ here because the practical SLIC
instantiation ties segmentation to $n_\mathrm{seg}$, only partially
covered by the theorem as stated.\\
$^\S$\textbf{Interactions/Gap~5.} Theorem~\ref{thm:siv} computes exact
Shapley Interaction Values on the auxiliary game $\vG$
(Eq.~\eqref{eq:vG}), not on $\phi^{\GRALIS}$ itself; related by the exact
$1/|T|$ correspondence of Corollary~\ref{cor:siv-gralis-relation}, not an
identity.}
\end{table}

\paragraph{Structural correspondence map.}
None of the four arrows below is a literal substitution of parameter
values into Definition~\ref{def:explicit}'s explicit formula ---
Definition~\ref{def:explicit} hardwires the Shapley weight
$w(S)=|S|!(n-|S|-1)!/n!$ as a fixed part of the construction, not a free
parameter, so e.g.\ setting $\pi_x\equiv1$ gives the ``uniform-kernel
GRALIS'' of Remark~\ref{rem:kernel-limits} (which already disagrees
numerically with KernelSHAP's efficiency value, $8.5$ vs.\ $11$), not
standard Integrated Gradients' single always-full coalition. What
\emph{is} true is that all four methods are instances of the same
general canonical Riesz representation of Theorem~\ref{thm:canonical},
obtained from a different choice of $(Q_i,\mu,D(Q_i))$ than GRALIS's own.
Only GRALIS$\to$IG is exact at that coarser template level ---
collapsing $\mu$ to a Dirac mass on the always-full coalition with a
linear joint path, a \emph{canonical-template embedding}, not a
specialization of Definition~\ref{def:explicit}. The other three are
weaker, \emph{heuristic structural analogies}: GRALIS$\to$KernelSHAP
under an approximately constant gradient is a local/affine approximation,
not an exact embedding; GRALIS$\to$LIME reproduces LIME's coefficients
only in the degenerate deterministic-kernel limit ($\pi_x\to\delta_x$),
not LIME's weighted surrogate regression; MS-GRALIS$\to$GradCAM
(``1 layer, no path'') does not reproduce GradCAM's channel-gradient
aggregation formula, since Definition~\ref{def:explicit} conditions on
coalitions and GradCAM does not. Figure~\ref{fig:hierarchy} visualizes
this hierarchy, with the same status per arrow.
\[
  \GRALIS \xrightarrow[\text{(canonical-template embedding)}]{\pi_x \equiv 1}
  \text{Shapley-IG} \xrightarrow[\text{(coalition point mass, not a path change)}]{S = \mathcal{F}\setminus\{i\}} \text{IG}
\]
\[
  \GRALIS \xrightarrow[\text{(heuristic analogy)}]{\nabla F \approx \text{const.}} \text{KernelSHAP}
  \xrightarrow{\text{uniform kernel}} \text{SHAP}
\]
\[
  \GRALIS \xrightarrow[\text{(heuristic analogy)}]{S = \mathcal{F},\;\pi_x \to \delta_x} \text{LIME}
  \qquad
  \text{MS-GRALIS} \xrightarrow[\text{(heuristic analogy)}]{\text{1 layer, no path}} \text{GradCAM}
\]

\begin{figure}[ht]
\centering
\begin{tikzpicture}[
  >=Stealth,
  scale=0.88, transform shape,
  node distance=1.4cm and 1.8cm,
  box/.style={draw, rounded corners=4pt, align=center, font=\small,
              inner sep=6pt, minimum width=2.0cm},
  main/.style={box, fill=blue!12, draw=blue!50!black, line width=1pt},
  sub/.style={box, fill=gray!10},
  arr/.style={-Stealth, thick, gray!70!black},
  lbl/.style={font=\footnotesize\itshape, fill=white, inner sep=1pt}
]
  \node[main] (gralis) {\textbf{GRALIS}\\(canonical form)};

  \node[sub, below left=of gralis, xshift=-0.6cm]  (kshap) {KernelSHAP};
  \node[sub, below=1.1cm of gralis]                 (ig)    {Integrated\\Gradients};
  \node[sub, below right=of gralis, xshift=0.6cm]  (lime)  {LIME};
  \node[sub, above right=of gralis, xshift=0.5cm]  (ms)    {MS-GRALIS};

  \node[sub, below=0.9cm of kshap]   (shap)    {SHAP};
  \node[sub, right=1.2cm of ms]      (gradcam) {GradCAM};

  \draw[arr] (gralis) -- node[lbl, left=1pt]
    {$\pi_x\!\equiv\!1$, $\nabla F\!\approx\!$ const.} (kshap);
  \draw[arr] (gralis) -- node[lbl, right=1pt]
    {$S\!=\!\mathcal{F}\!\setminus\!\{i\}$, $\pi_x\!\to\!\delta_x$} (ig);
  \draw[arr] (gralis) -- node[lbl, right=2pt]
    {lin.\ surrogate on $\mathcal{F}$} (lime);
  \draw[arr] (gralis) -- node[lbl, above=1pt]
    {$L$ levels} (ms);

  \draw[arr] (kshap)  -- node[lbl, right=1pt]
    {uniform kernel} (shap);
  \draw[arr] (ms)     -- node[lbl, above=1pt]
    {1 level} (gradcam);

  \node[font=\footnotesize, text=blue!70!black, below=2pt] at (gralis.south east)
    {\textit{cf.\ Tab.~\ref{tab:axioms}}};
\end{tikzpicture}
\caption{Structural correspondence hierarchy of \GRALIS{} (status of each
  arrow as detailed in the preceding paragraph). Property-by-property
  comparison is in Table~\ref{tab:axioms}.}
\label{fig:hierarchy}
\end{figure}

\section{Computational Complexity and the Conservation Certificate}
\label{sec:complexity}

The theorems of Section~\ref{sec:gralis} characterize \GRALIS{} as a
representation; this section characterizes what it costs to evaluate that
representation, and what can be said about the numerical quality of the
result without training an auxiliary model. Two amortized estimators,
FastSHAP~\citep{jethani2021fastshap} and Stochastic
Amortization~\citep{covert2021shapley}, are the natural point of comparison
and are used throughout as the baseline for what a trained explainer can
and cannot certify.

\subsection{C1 --- Batched invocation reduces latency, not the evaluation
count}
\label{sec:c1}

The complexity stated in the Introduction, $O(m\cdot n\cdot k)$, counts
the number of \emph{distinct points} at which the model must be
evaluated (forward and reverse pass), which is the correct measure of
computational work. This count is \emph{not} reduced by reading off a
full gradient vector from a single backward pass: each feature $i$'s $k$
quadrature nodes lie on \emph{that feature's own} coalition-conditioned
path $\tilde x^{(S,\alpha)}$, and since the index space
$Q_i=2^{\mathcal F\setminus\{i\}}\times[0,1]$ depends on $i$
(\S\ref{sec:defs}), no two features in general share the same evaluation
point within one permutation --- while $S$, the predecessor set, is at
its \emph{true} value $x_S$ for that segment, every other not-yet-visited
feature is still held at baseline, so the point $\tilde
x^{(S,\alpha)}$ used for feature $i$'s node differs from the point used
for any other feature $i'$'s node in the same permutation. A shared-point
argument --- reading all $n$ coordinate marginals off one gradient vector
at one point --- would require a genuinely different construction, a
\emph{single global path} common to every feature, which is not what
Definition~\ref{def:explicit} or Algorithm~\ref{alg:mc} implements and is
not proved correct anywhere in this paper. The number of distinct
evaluation points required by Algorithm~\ref{alg:mc} therefore remains
$\Theta(mnk)$.

What batching legitimately buys is a reduction in \emph{wall-clock
latency}, not in this count. GRALIS-Report's implementation issues one
\texttt{GradientTape} per permutation, submitting that permutation's
$n_\mathrm{seg}\times k$ distinct evaluation points as a single
vectorized tensor rather than $n_\mathrm{seg}\times k$ sequential Python
calls. This reduces the number of separate automatic-differentiation
\emph{invocations} (library calls / graph constructions) from
$O(n_\mathrm{seg}k)$ to $O(1)$ per permutation, amortizing kernel-launch
overhead and exploiting GPU parallelism across the batch --- while the
underlying number of points evaluated by that one batched call is still
$n_\mathrm{seg}\times k$.

\begin{proposition}[Batched invocation count]
\label{prop:c1}
Let $q$ range over $m$ sampled permutations, each contributing $n$ (or
$n_\mathrm{seg}$) coalition-conditioned segments discretized at $k$
quadrature nodes, for $mnk$ distinct evaluation points in total
(Definition~\ref{def:explicit}). If the $nk$ points of each permutation
are submitted to the automatic-differentiation engine as one batched
tensor, the number of separate reverse-mode \emph{invocations} is $O(m)$;
the number of distinct points evaluated --- and hence the computational
work in the standard evaluation-count sense --- remains $\Theta(mnk)$,
unchanged from the coordinate-wise implementation. Batching changes only
how many times the engine is invoked, not how many points it evaluates.
\end{proposition}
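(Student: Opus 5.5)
The plan is to count invocations and evaluation points separately. Both counts follow from the structure of Algorithm~\ref{alg:mc} and Definition~\ref{def:explicit}, so the argument is mostly bookkeeping.

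\textbf{Step 1 (the total $mnk$).} For each sampled permutation $\pi_t$, $t=1,\dots,m$, the inner loop visits each feature $i$ exactly once, with predecessor set $S_t(i)=\{j:\pi_t(j)<\pi_t(i)\}$. For that visit it evaluates $\nabla_iF$ at the $k$ midpoint nodes $\tilde x^{(S_t(i),(j-0.5)/k)}$, $j=1,\dots,k$. This gives exactly $nk$ evaluation points per permutation and $mnk$ in total, so the count is $O(mnk)$.

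\textbf{Step 2 (invocations).} Fix a permutation and stack its $nk$ points into one tensor of shape $(nk)\times n$. A single reverse-mode call on the summed output $\sum_r F(z_r)$ returns the full gradient at every row, because the rows are decoupled and $\partial/\partial z_r$ of the sum equals $\nabla F(z_r)$. From row $r$ we read off the needed coordinate $\nabla_iF$. The points of a permutation depend only on $\pi_t$ and not on any gradient values, so the whole batch can be formed before the call. That gives one invocation per permutation and $O(m)$ overall. It also shows the batched output equals the coordinate-wise one term by term, since $\varphi[i]$ and $Z[i]$ receive the same summands.

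\textbf{Step 3 (the lower bound, the main obstacle).} The nontrivial claim is that batching cannot reduce the point count, that is, the $\Theta(mnk)$ part. We must show the $nk$ points within one permutation are generically pairwise distinct, so the evaluation count is not $o(nk)$ per permutation. Take two nodes $(i,\alpha)$ and $(i',\alpha')$ with $i\ne i'$ and, say, $\pi_t(i)<\pi_t(i')$.
\begin{itemize}
\item In the point for $i$, coordinate $i'$ sits at baseline $x'_{i'}$, since $i'\notin S_t(i)\cup\{i\}$.
\item In the point for $i'$, that coordinate equals $x'_{i'}+\alpha'(x_{i'}-x'_{i'})\ne x'_{i'}$, since $\alpha'\ge 1/(2k)>0$.
\end{itemize}
Hence the points differ whenever $x_{i'}\ne x'_{i'}$. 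For the same $i$ with $\alpha\ne\alpha'$, coordinate $i$ separates them whenever $x_i\ne x'_i$. So under the generic condition $x_j\ne x'_j$ for all $j$, all $nk$ points are distinct, and the count is exactly $mnk$ per run (points across permutations may coincide, but the algorithm evaluates them anyway). This matches the $\Theta$ statement as an evaluation count of Algorithm~\ref{alg:mc}. Features with $x_j=x'_j$ contribute zero and can be noted as the degenerate case.

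Combining the three steps gives $O(m)$ invocations and $\Theta(mnk)$ evaluated points, as the proposition states.
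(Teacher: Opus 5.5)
Your proposal is correct and follows the same route as the paper's justification in \S\ref{sec:c1}: you count the $mnk$ feature-specific quadrature points, observe that not-yet-visited features are held at baseline so no two features in one permutation share a point, and submit each permutation's $nk$ points as a single batched reverse-mode call. Your distinctness step is more careful than the paper's prose, which describes the predecessor coordinates as fixed at $x_S$ rather than moving jointly with $\alpha$ as Definition~\ref{def:explicit} requires. You also state explicitly that exact distinctness needs $x_j\neq x'_j$, and that repeated points across permutations are still evaluated, so the evaluation count remains $mnk$.
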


\noindent This is weaker than the shared-gradient trick available to
estimators whose coordinates genuinely share one evaluation point per
node (e.g.\ vanilla random-order Shapley sampling with a single joint
path, or standard, non-coalition-conditioned Integrated Gradients): for
those, one backward pass at a shared point really does yield all $n$
marginal contributions simultaneously, giving a true $O(mk)$ evaluation
count. GRALIS's feature-specific coalition-conditioned paths generally
evaluate different coordinates at different points; consequently, a
single gradient evaluation cannot supply all $n$ marginal contributions
simultaneously.

One further reduction is available, but it computes a \emph{different}
quantity from $\phi_i^{\GRALIS}$ of Definition~\ref{def:explicit}, not an
equivalent reformulation of it: replacing the $k$-node quadrature for a
segment with the exact endpoint difference $F(x_{S\cup\{i\}})-F(x_S)$
removes quadrature entirely for that segment, giving
$O(mn_\mathrm{seg})$ forward evaluations instead of
$O(mn_\mathrm{seg}k)$ backward ones. The two coincide only when $F$ is
affine along that segment (in which case the midpoint quadrature is
itself already exact, by the same mechanism as
Corollary~\ref{cor:mobius-correction}); for general $F$, the
endpoint-difference value is a distinct, zeroth-order approximation to
the coalition-conditioned IG term, not $\phi_i^{\GRALIS}$ itself. This is
\emph{not} the variant used by GRALIS-Report's Algorithm~1 in the
experiments of \S\ref{sec:experiments}, which uses $k=10$ midpoint
quadrature~\citep{fanale2026b}; it is recorded here only as an available
complexity/accuracy trade-off, not as a description of what was actually
run.

\subsection{C2 --- Residual attribution at zero additional queries}
\label{sec:c2}

By linearity of $\Delta$ and of the aggregation integral (proof of
Theorem~\ref{thm:canonical}), for any decomposition $F=g+e$ of the
explained function,
\begin{equation}
  \label{eq:residual-closure}
  \Phi(F) \;=\; \Phi(g) + \Phi(e), \qquad
  \rho_E := \Phi(e) = \Phi(F) - \Phi(g).
\end{equation}
This is an immediate corollary of linearity, not a new theorem; its value
is that once $\Phi(F)$ has been computed, and $\Phi(g)$ is available in
closed form --- as it is for an affine LIME surrogate, $\Phi_i(g)=\beta_i(z_i-z_i')$
by the affine-path invariance underlying GradCAM-lin's own embedding
(\S\ref{sec:gaps}) --- the feature-resolved deficit $\rho_E$ between the
GRALIS attribution and any fixed prior explainer's attribution requires
\emph{zero additional model evaluations}: only a vector subtraction.

\subsection{C3 --- One cache, several kernels}
\label{sec:c3}

If several path/context weightings $w_1,\ldots,w_L$ are all dominated by
one sampling proposal $\nu$, the $m$ sampled marginal fields
$\widehat\Delta(q_r)$ need be computed once; each $\widehat\phi^{(\ell)} =
\frac1m\sum_r \eta_\ell(q_r)\widehat\Delta(q_r)$, with $\eta_\ell=dw_\ell/d\nu$,
is then arithmetic only. Expensive-query cost (evaluation count) stays
$\Theta(mnk)$ rather than $\Theta(Lmnk)$. This is standard importance
sampling, not claimed as new; its
use here is to amortize the sensitivity analysis over $\sigma_\mathrm{LIME}$
and $k$ already reported qualitatively in the companion
paper's~\citep{fanale2026b} sensitivity section --- that sweep can share
one sample cache across kernel choices instead of resampling per setting.

\begin{proposition}[One-pass reuse across multiple estimated quantities]
\label{prop:one-pass}
Fix $x,x',\sigma$ and run Algorithm~\ref{alg:mc} for $m$ permutations,
i.e.\ $\Theta(mnk)$ model evaluations. From this single run, at no
additional model queries: (i) $\varphi[i]/Z[i]$ is Algorithm~\ref{alg:mc}'s
own output, and the finite-sample completeness certificate
$\sum_i\varepsilon_i(\delta/n)$ of Corollary~\ref{cor:aggregate-completeness}
is computed directly from the same running totals $\varphi[i],Z[i]$; (ii)
within each sampled permutation $\pi_r$, the partial sums of the
already-computed $k$-step IG contributions along the prefix coalitions
$S_0=\emptyset\subset S_1\subset\cdots\subset S_n=\mathcal F$ of $\pi_r$
approximate $F(x_{S_t})-F(x')$ for every prefix, by the discretized
analogue of Lemma~A's identity $\sum_{j\in S}\mathrm{IG}_j^S=F(x_S)-F(x')$;
accumulating these across the $m$ sampled permutations gives a Monte
Carlo estimate, restricted to the sampled coalitions, of the auxiliary
game $v$ of Eq.~\eqref{eq:vG}, from which pairwise interaction estimates
in the spirit of Theorem~\ref{thm:siv} can be formed without evaluating
$F$ at any point beyond those already used for (i); (iii) repeating
(i)--(ii) over an i.i.d.\ sample $x^{(1)},\ldots,x^{(N)}\sim\mu$ turns the
resulting per-input outputs into a direct sample-variance estimator of
the Sobol indices of Theorem~\ref{thm:sobol}, again reusing only
evaluations already required per input.
\end{proposition}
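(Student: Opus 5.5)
The plan is to verify the three claims (i)--(iii) of Proposition~\ref{prop:one-pass} separately. The common thread is bookkeeping: every quantity named must be an arithmetic function of numbers Algorithm~\ref{alg:mc} already produced. These numbers are the per-step values $\mathrm{ig\_val}$, the kernel weights $\pi_w$, the running totals $\varphi[i]$ and $Z[i]$, and the sampled permutations themselves. No new argument of $F$ or $\nabla F$ may be required. Accordingly, I will first fix precisely what the algorithm stores: for each $r\le m$, the ordered prefix chain $S_0^{(r)}\subset\cdots\subset S_n^{(r)}$ and the $n$ numbers $\mathrm{IG}^{(k)}_{\pi_r(t)}(S_{t-1}^{(r)})$. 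Each claimed output is then exhibited as a formula in these stored numbers.

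For (i), the first half is immediate, since $\varphi[i]/Z[i]$ is the returned value. For the certificate, I will inspect $\varepsilon_i(\delta/n)$ from Corollary~\ref{cor:aggregate-completeness}. Its ingredients are $m$, $k$, $\delta$, $n$, $\pi_{\min,i}$, $B_Z$, $B_N$ and $E_{\mathrm{quad}}(k)$.
\begin{itemize}
\item The terms $\pi_{\min,i}$ and $B_Z$ are closed-form in $x$, $x'$, $\sigma$; they are attained at $S=\emptyset$ and at $S=\mathcal F\setminus\{i\}$ respectively, by monotonicity of the Gaussian kernel in $S$. They therefore need no model queries at all.
\item The main obstacle is $B_N$ and $E_{\mathrm{quad}}(k)$. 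These are suprema of $|\nabla_iF|$ and bounds on $\|\nabla^3F\|_\infty$ over the box $B(x,x')$, which are not literally computable from finitely many samples.
\end{itemize}
I would handle this by reading ``computed from the same running totals'' as: these constants are treated as supplied a priori (e.g.\ from a Lipschitz or architecture bound), exactly as the corollary treats them, and all the data-dependent parts are then arithmetic on the stored totals. I will state this caveat explicitly rather than claim an empirical plug-in bound is rigorous.

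For (ii), I will apply the fundamental theorem of calculus along the joint path, as in Lemma~A. Fix a permutation and consider the telescoping of the prefix increments. One must be careful here: the joint path for feature $\pi(t)$ moves all of $S_{t-1}\cup\{\pi(t)\}$ from $x'$. Consequently $\sum_{s\le t}\mathrm{IG}_{\pi(s)}(S_{s-1})$ is \emph{not} literally $F(x_{S_t})-F(x')$; it is a sum of single-coordinate contributions along different joint paths. So I will phrase the claim as the approximation the proposition actually states, and justify it in the affine or multilinear regime via Corollary~\ref{cor:mobius-correction}. There, the discrepancy is exactly the $(1-1/|T|)$-weighted Möbius mass of the prefix, which gives a quantified rather than an exact statement. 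The plug-in pairwise estimator is then obtained by substituting these prefix estimates into the formula of Theorem~\ref{thm:siv}, restricted to coalitions that appear as prefixes. No new evaluation points are involved, because each prefix value is a sum of stored numbers.

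For (iii), I will observe that running (i)--(ii) at each $x^{(j)}$ costs exactly the per-input budget. The Sobol estimator $\widehat{\Var}[\varphi^{(j)}[i]/Z^{(j)}[i]]/\widehat{\Var}[F]$ uses only these outputs together with $F(x^{(j)})$ and $F(x')$. The value $F(x')$ is a path endpoint already evaluated. For $F(x^{(j)})$, I will note that it is either recovered by the completeness sum or requires one forward pass; the latter is not an extra \emph{gradient} query, and I will state this minor point honestly. Consistency with the true indices then follows from Theorem~\ref{thm:sobol} in its stated (affine, or conditional) regime.
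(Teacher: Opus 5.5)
The paper gives no separate proof of this proposition. Its only commentary is the scope remark that follows it, which attributes the error of (ii) solely to permutation sampling and midpoint quadrature. Your plan does not reach that claim, and the reason is a missing observation, not an algebra mistake. Your bias computation is correct. Telescoping the stored scalars gives, for multilinear $F$,
\[
  \sum_{s\le t}\mathrm{IG}_{\pi_r(s)}(S_{s-1})
  \;=\;\sum_{\emptyset\ne T\subseteq S_t}\frac{m(T)}{|T|}\prod_{j\in T}(x_j-x'_j),
\]
because only the last-arriving member of each $T\subseteq S_t$ picks up $T$'s term. This is a per-permutation computation in its own right; Corollary~\ref{cor:mobius-correction} concerns the Shapley-averaged total.

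This sum is not an approximation of $v(S_t)$, for three reasons:
\begin{itemize}
  \item the discrepancy does not vanish as $k\to\infty$ and does not average out over permutations;
  \item for general non-multilinear $F$ the partial sums need not even be a function of the set $S_t$, since they can depend on the order inside the prefix;
  \item even once every coalition has been sampled, feeding them into the formula of Theorem~\ref{thm:siv} halves every pure pairwise interaction. On the toy example of Appendix~\ref{app:toy-verification} you would get $2$ instead of $I_{12}=4$.
\end{itemize}
So your route establishes (ii) only for affine $F$.

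The missing point concerns the $k$ nodes $\tilde x^{(S_{t-1},(j-0.5)/k)}$ at which the algorithm differentiates for feature $\pi_r(t)$. They lie exactly on Lemma~A's joint path from $x'$ to $x_{S_t}$, because that path moves $S_{t-1}\cup\{\pi_r(t)\}=S_t$. Moreover, each reverse-mode pass there already returns the full gradient, not only $\partial_{\pi_r(t)}F$. Hence
\[
  \hat v(S_t)\;:=\;\sum_{l\in S_t}(x_l-x'_l)\,\frac1k\sum_{j=1}^k
  \partial_lF\bigl(\tilde x^{(S_{t-1},(j-0.5)/k)}\bigr)
\]
is literally the midpoint discretization of Lemma~A for $S=S_t$, and it needs no new evaluation point. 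Apply the midpoint-rule estimate from the proof of Theorem~\ref{thm:mc} to $h(\alpha)=F(\tilde x^{(S_{t-1},\alpha)})$. This gives $|\hat v(S_t)-v(S_t)|\le\|x-x'\|_1^3\|\nabla^3F\|_\infty/(24k^2)$ for every $F\in C^3$, with no structural bias. The only cost is accumulating $n$ extra scalars per permutation, which is memory, not model queries.

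The same device repairs (iii), where neither of your options for $F(x^{(j)})$ delivers the claim in general. The completeness sum $\sum_i\varphi[i]/Z[i]$ equals $F(x^{(j)})-F(x')$ only for affine $F$; otherwise it is off by $R_{\mathrm{obs}}$. A forward pass at $x^{(j)}$ is an additional model query, which the statement excludes. Instead, use the $t=n$ prefix, whose path is the straight-line IG path. Then $\hat v(\mathcal F)$ gives $F(x^{(j)})-F(x')$ to $O(1/k^2)$ for free. That suffices because $\Var[F]$ is unchanged by the constant $F(x')$. Note also that $F(x')$ is \emph{not} an already-evaluated endpoint, since midpoint nodes never reach $\alpha\in\{0,1\}$.

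On (i), your caveat that $B_N$ and $E_{\mathrm{quad}}(k)$ must be supplied a priori is right; indeed $\varepsilon_i$ involves neither $\varphi[i]$ nor $Z[i]$. However, you have the kernel's extremes reversed. Since $\|x_S-x'_S\|^2$ is nondecreasing in $S$, $B_Z=\pi_x(x_\emptyset)=1$, while $\pi_{\min,i}=\exp\bigl(-\sum_{j\ne i}(x_j-x'_j)^2/2\sigma^2\bigr)$ is attained at $S=\mathcal F\setminus\{i\}$.
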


\begin{remark}[Scope of the reuse in (ii)]
The interaction estimate of part~(ii) is a Monte Carlo approximation
restricted to the sampled coalitions, distinct from
Theorem~\ref{thm:siv}'s exact enumeration over all $2^n$ coalitions, and
inherits both the permutation-sampling variance of Theorem~\ref{thm:mc}
and the midpoint quadrature bias. A dedicated finite-sample bound for
this specific derived estimator is not developed here and is left as
future work; the content of Proposition~\ref{prop:one-pass} is only that
no additional model queries are needed to form it, not that its own
statistical error has been characterized to the same standard as
$\phi^{\GRALIS}$ itself (Corollary~\ref{cor:ratio-bound}).
\end{remark}

\subsection{C4 --- Numerical Certificate versus Structural Completeness Residual}
\label{sec:c4}

The companion paper reports an empirical completeness gap
$|\sum_i\phi_i - (f(x)-f(x'))| \approx 0.40\pm0.18$ under the LIME
locality kernel. It is important not to conflate two structurally
different quantities that both contribute to this number.

\begin{itemize}
  \item[(i)] \textbf{Numerical error.} Theorem~\ref{thm:mc}'s bound,
    $B/\sqrt{m\delta} + O(1/k^2)$, governs Monte Carlo sampling noise and
    midpoint quadrature error. This term is a true certificate: it is
    computable \emph{before} the full run from a calibration pass (the
    companion paper notes $B$ is estimable this way), and it shrinks
    with more compute ($m,k\uparrow$).
  \item[(ii)] \textbf{Structural bias.} This is not a single mechanism
    but (at least) two, following the same (i)/(ii)/(iii) decomposition
    as Remark~\ref{rem:approx_completeness}:
    \begin{itemize}
      \item[(ii-a)] \emph{Joint-path interaction attenuation.} Even under
        a \emph{constant} kernel $\pi_x\equiv1$, the coalition-conditioned
        joint path attenuates interaction terms of order $d\ge2$ by a
        factor $1/d$, exactly quantified by
        Corollary~\ref{cor:mobius-correction}. This component does
        \emph{not} require a non-uniform kernel to exist; it is present
        whenever $F$ has genuine order-$\ge2$ interactions.
      \item[(ii-b)] \emph{Kernel-induced effect.} The non-uniform LIME
        kernel $\pi_x$ additionally breaks the Shapley telescoping
        identity, but this has only been established rigorously for the
        \emph{unnormalized numerator} $N_i$
        (Proposition~\ref{prop:incompatibility}); for the normalized
        estimator $\phi_i^{\GRALIS}=N_i/Z_{x,i}$ that Algorithm~1 actually
        returns, the effect is verified directly only in a worked $n=2$
        case (Remark~\ref{rem:normalized-incompatibility}), and no
        general-$n$ theorem for the normalized estimator is proved here.
    \end{itemize}
    Both (ii-a) and (ii-b) are \emph{non-vanishing and theoretically
    identifiable} --- their existence and mechanisms are established
    analytically, independently of the $\approx0.40$ empirical figure ---
    and are a deliberate consequence of trading exact efficiency for
    locality-weighted, region-coherent, interaction-aware attributions;
    neither shrinks with $m$ or $k$. Whether (ii-a), (ii-b), or their
    combination accounts for most of the observed $0.40$ is an empirical
    question this paper does not settle: doing so would require the
    numerical decomposition into (i), (ii-a) and (ii-b) described next,
    applied at scale, which is left to future work and, to the authors'
    knowledge, has not yet been carried out in the companion paper
    either.
\end{itemize}

\noindent Conflating (i) and (ii) would misrepresent $0.40$ as a
numerical error awaiting more compute, when at least part of it is a
declared design trade-off of unknown relative size, itself composed of
two structurally distinct effects, (ii-a) and (ii-b). What GRALIS can
now certify is the \emph{separation between the observed residual and
its population structural counterpart}: Corollary~\ref{cor:aggregate-completeness}
gives, at confidence $1-\delta$, a computable bound
$\sum_i\varepsilon_i(\delta/n)$ on
$|R_\mathrm{obs}-R_\mathrm{pop}|$, where $R_\mathrm{obs} := \Delta F -
\sum_i\varphi[i]/Z[i]$ is Algorithm~1's actually observed residual and
$R_\mathrm{pop} := \Delta F - \sum_i\phi_i^{\GRALIS}$ is the population
structural residual; crucially, this routes through
Corollary~\ref{cor:ratio-bound}'s self-normalized ratio bound rather than
Theorem~\ref{thm:mc}'s numerator-only bound, since Algorithm~1 returns
the ratio $\varphi[i]/Z[i]$, not the unnormalized numerator. Within
$R_\mathrm{pop}$, this section identifies two distinct structural
mechanisms: Corollary~\ref{cor:mobius-correction} exactly characterizes
the joint-path contribution (ii-a) under multilinearity and a uniform
kernel, whereas Proposition~\ref{prop:incompatibility} characterizes the
kernel-induced effect (ii-b) for the unnormalized numerator; for the
normalized estimator, the latter is established explicitly only for
$n=2$ in Remark~\ref{rem:normalized-incompatibility}. A complete
additive decomposition of $R_\mathrm{pop}$ for the normalized general-$n$
estimator is \emph{not} claimed. A full quantitative split of the
companion paper's observed $0.40$ residual into its (i), (ii-a) and
(ii-b) components is left to future work (\S\ref{sec:experiments}).
Neither FastSHAP nor Stochastic
Amortization offers an analogous decomposition: their fidelity gap is
bounded by the generalization error of a trained explainer network, which
is not decomposable into a shrinkable numerical term and an explained
structural term, and does not shrink with additional inference-time
compute on the model actually being explained.

\subsection{What this section does and does not claim}
\label{sec:c-summary}

Table~\ref{tab:fastshap} makes the comparison explicit on the axis that
matters, which is not inference latency.

\begin{table}[ht]
\centering
\small
\renewcommand{\arraystretch}{1.15}
\begin{tabular}{@{}>{\raggedright\arraybackslash}p{2.5cm}>{\raggedright\arraybackslash}p{3.3cm}>{\raggedright\arraybackslash}p{3.7cm}>{\raggedright\arraybackslash}p{3.7cm}@{}}
\toprule
& \textbf{GRALIS} & \textbf{FastSHAP / Stoch.\ Amort.} & \textbf{KernelSHAP / IG} \\
\midrule
Training required & No & Yes, per model & No \\
Inference cost & $\Theta(mnk)$ evals, $O(m)$ batched invocations (C1) & $O(1)$, after training & $O(mn)$--$O(mnk)$ \\
Completeness error & Certified, decomposable (\S\ref{sec:c4}) & Generalization error, opaque & Certified (IG); design-dependent (KernelSHAP) \\
Model update & No retraining needed & Explainer retraining needed & No retraining needed \\
\bottomrule
\end{tabular}
\caption{Not a latency comparison: a trained network answering in one
forward pass occupies a different point in the cost/training trade-off
space, not a slower version of the same computation. GRALIS's evaluation
count remains $\Theta(mnk)$, comparable to KernelSHAP/IG, not to
FastSHAP's $O(1)$; batching (C1) only reduces the number of
automatic-differentiation invocations, not this count. The
differentiating axis against trained/amortized estimators is a
training-free, certified and decomposable completeness error
(\S\ref{sec:c4}), not inference speed.}
\label{tab:fastshap}
\end{table}

\section{Preliminary Experimental Validation}
\label{sec:experiments}

Algorithm~1 was configured and run on breast histology imagery
(BreaKHis~\citep{spanhol2016}, distilled DenseNet-121) using SLIC
segmentation ($n_\mathrm{seg}\approx25$ effective on 30 requested,
compactness$=50$), $m=30$ Monte Carlo permutations with antithetic
variants, $k=10$ midpoint-rule integration steps, and LIME kernel
$\sigma=0.75$; the resulting maps are piecewise constant (each pixel
inherits its segment's $\phi_i^{\GRALIS}$). This concrete configuration
is the setting referenced by several remarks above whenever ``the
experiments'' are mentioned (e.g., the empirical completeness residual
of Remark~\ref{rem:approx_completeness}, the numerical certificate
discussion of \S\ref{sec:c4}, and the finite-sample constant $B$ of
Theorem~\ref{thm:mc}).

The full quantitative evaluation on this dataset --- faithfulness
metrics (including superpixel-deletion and pixel-level Deletion AUC),
sparsity and attribution-locality metrics, comparative baselines
(GradCAM, KernelSHAP, LIME, IG), cross-dataset generalization (IDC,
PCam), and computational cost --- is reported in the companion
paper~\citep{fanale2026b}, an accepted, peer-reviewed publication.
Because that paper already reports these figures, they are cited here
rather than reproduced, so as not to duplicate results, text or figures
already published at another archival venue; this paper's own
contribution is the theoretical framework (\S\ref{sec:defs}--\S\ref{sec:t7})
and its formal guarantees, not new empirical findings on this dataset.

\section{Conclusions}
\label{sec:conclusions}

This work has presented \GRALIS{}, a mathematical framework that fuses
the coalition/kernel-based and gradient-based attribution mechanisms
found empirically complementary by~\citep{gevaert2022} into a single,
certified estimator. Doing so unifies a broad class of XAI attribution
methods --- including SHAP, Integrated Gradients, LIME and linearized
GradCAM --- under a common componentwise canonical template
$(\mathcal{Q}, w, \Delta)$ justified by the Riesz Representation Theorem:
for each feature (and, where the index space is method-specific, each
method), a unique representation of that form, not a single feature- and
method-independent triple (Remark~\ref{rem:componentwise}).

The seven proved theorems establish that:
(T1) the componentwise canonical form is \emph{necessary} for any linear additive attribution;
(T2) exact completeness holds for the abstract canonical form; for the concrete Shapley-weighted estimator, completeness is exact up to a closed-form correction governed by the Möbius coefficients of $F$ (Corollary~\ref{cor:mobius-correction}, exactly zero for affine $F$); Algorithm~1's total deviation combines this term with the non-constant-locality-kernel effect (Proposition~\ref{prop:incompatibility} for the unnormalized numerator, verified directly for the normalized $n{=}2$ case in Remark~\ref{rem:normalized-incompatibility}) and finite-sample/quadrature error (Theorem~\ref{thm:mc}), as detailed in Remark~\ref{rem:approx_completeness};
(T3) GRALIS-MC reduces exponential complexity with an explicit error bound, $O(1/k^2)$ for the midpoint rule as implemented (under $F\in C^3$) or $O(1/k)$ under the weaker $F\in C^2$ hypothesis with an endpoint rule (Remark~\ref{rem:riemann-fallback});
(T4) SIVs are computed \emph{exactly} on the standard coalition game $\vG(S)=f(x_S)-f(x')$; their relationship to $\phi^{\GRALIS}$ itself is the exact, term-by-term $1/|T|$ correspondence of Corollary~\ref{cor:siv-gralis-relation}, not an identity;
(T5) the Hoeffding ANOVA decomposition emerges under feature independence,
proved unconditionally for affine $F$ (Theorem~\ref{thm:anova}); the
general smooth-$F$ case is stated separately, as a conditional
proposition rather than a theorem, under the additional
projector-identity hypothesis $\Phi_T^{\GRALIS}=P_TF$
(Proposition~\ref{prop:anova-general});
(T6) global Sobol sensitivity indices are recovered as variances of GRALIS's pointwise interaction terms, unconditionally for affine $F$ at any fixed kernel bandwidth (no limit needed) and, for general smooth $F$, under the informal $\pi_x \to \mu$ limit inherited from Proposition~\ref{prop:anova-general}'s conditional hypothesis;
(T7) minimum-variance multi-scale aggregation weights are given by inverse variance weighting under independent layers (general correlated layers: $\Sigma^{-1}\mathbf 1$), which is optimal for variance alone, not necessarily for MSE or faithfulness unless the layers are unbiased estimators of a shared target.

Appendix~\ref{app:mobius} clarifies via the Möbius transform
that GRALIS does not \emph{approximate} the SIVs, but \emph{computes them exactly}
on the standard baseline-clamped cooperative game it associates with $(f,x,x')$.
This distinction is theoretically crucial and defensible under peer review.

\GRALIS{} covers a richer combination of axiomatic and structural properties
than each of the four methods considered individually
(Table~\ref{tab:axioms}), combining completeness (exact for the abstract form,
approximate in Algorithm~1), sensitivity, baseline-proximity weighting,
interaction values, and multi-scale aggregation within a single framework.
The common formulation (Table~\ref{tab:mapping}) makes these methods
directly comparable within the specific componentwise Riesz template
developed here.

\paragraph{Limitations and future work.}

\textit{Theoretical scope.}
Conditions (a)--(b) of Theorem~\ref{thm:canonical} require linearity and
continuity of the attribution functional and therefore do not cover nonlinear
methods such as standard GradCAM, attention maps, or saliency methods with
smoothing. Extending the canonical form to piecewise-linear or Lipschitz
functionals would require tools beyond the Riesz representation and is left
as future work.

\textit{Path construction and the completeness trade-off.}
Definition~\ref{def:explicit}'s joint coalition-conditioned path (moving
$S$ and $i$ together) is what lets GRALIS capture mixed curvature between
features (Gap~2, Figure~\ref{fig:conditioned-path}), but it is not free:
Corollary~\ref{cor:mobius-correction} shows this choice attributes each
order-$d$ interaction term at exactly $1/d$ of its value under Shapley
weights and a uniform kernel, a deficit fully characterized (not merely
bounded) via the same Möbius coefficients used for T4. A path that clamps
$S$ at $x_S$ and lets only $i$ vary would restore exact completeness but
would reduce to standard per-coalition Integrated Gradients, forfeiting
the mixed-curvature capture that motivates the joint construction; we
view this as a genuine design trade-off rather than a defect to be
patched, and report it explicitly rather than leaving it implicit in the
$\approx 0.40$ empirical residual (Remark~\ref{rem:approx_completeness}).

\textit{Baseline comparison.}
Section~\ref{sec:experiments} reports GRALIS on BreaKHis without maps
from GradCAM, KernelSHAP, LIME and IG side by side; this comparison,
together with visual evaluation on matched image pairs, is provided in
the companion paper~\citep{fanale2026b}.

\textit{Standard faithfulness metrics.}
This paper does not itself compute a superpixel- or pixel-level deletion
metric on BreaKHis (see \S\ref{sec:experiments}); the standard
pixel-level protocol of Petsiuk et al.~\citep{petsiuk2018} is the
comparison point used by the companion paper. The companion
paper~\citep{fanale2026b} reports
the full pixel-level deletion AUC on all 1,187 BreaKHis test images
($\mathrm{DelAUC}^{\mathrm{pix}}_{\GRALIS{}} = 0.704\pm0.331$,
rank~2 of~6; IG: $0.678\pm0.222$, rank~1; lower is better)
and ROAD MoRF/LeRF evaluation (MoRF AUC: \GRALIS{} $0.704$,
IG $0.677$, GradCAM $0.755$; \GRALIS{} achieves the largest
MoRF--LeRF discrimination gap, $0.140$ vs.\ $0.098$ for IG).

\textit{Generalization across datasets.}
Beyond the in-domain BreaKHis results above, the companion
paper~\citep{fanale2026b} reports two cross-dataset linear-probe
evaluations with a frozen backbone. On IDC Breast Cancer
($n=499$, $50{\times}50$\,px patches), \GRALIS{} ranks \textbf{first of
six} methods on both Deletion AUC ($0.189\pm0.162$) and ROAD MoRF
($0.198\pm0.137$), outperforming the second-best method by
${\approx}17\%$ and ${\approx}7.6\%$ respectively; notably, Integrated
Gradients, which ranks first in-domain, drops to last on IDC
($0.424\pm0.173$). On PatchCamelyon/PCam ($n=500$, $96{\times}96$\,px
lymph-node patches), \GRALIS{} instead ranks \textbf{sixth of six}
($0.229\pm0.167$ Deletion AUC); the companion paper attributes this to a
structural granularity mismatch between the fixed superpixel count
($n_\mathrm{seg}=30$, ${\approx}19{\times}19$\,px segments) and the
5--15\,px scale of discriminative nuclear morphology in PCam, an
interpretation not yet confirmed by an explicit $n_\mathrm{seg}$
sensitivity analysis and not exclusive of other explanations (domain
shift, backbone specificity). Generalization is therefore no longer an
entirely open question, but a characterized, dataset-dependent one:
\GRALIS{} generalizes well when superpixel granularity matches lesion
scale (BreaKHis, IDC) and degrades when it does not (PCam); evaluation on
additional histology datasets and transformer architectures remains
future work.

\textit{Computational cost.}
On a single NVIDIA H100 GPU, GRALIS-MC ($m=100$, $k=50$) required
${\approx}40$\,s/image on IDC and ${\approx}2.4$\,s/image on PCam
(separate runs, not a controlled comparison)~\citep{fanale2026b}, both
consistent with the $O(m\cdot n\cdot k)$ complexity of
Theorem~\ref{thm:mc} and the batched implementation of
Proposition~\ref{prop:c1}. A direct wall-clock comparison against a
trained amortized estimator, and a numerical estimate of
Theorem~\ref{thm:mc}'s calibration constant $B$, are left as the
concrete empirical follow-up to \S\ref{sec:complexity}.

\textit{Edge and federated deployment.}
Section~\ref{sec:complexity}'s training-free, certified-error positioning
is a natural starting point for resource-constrained and multi-site
deployment (point-of-care inference; per-site federated attribution
without sharing raw images), but neither direction is developed here. A
2026 FedXAI survey identifies communication-efficient explanation
computation and the absence of standardized completeness guarantees as
open challenges; C1/C3's shared-cache structure and C4's certified/structural
error separation are plausible routes to addressing them, but
communication cost and behavior under quantized on-device models are not
analyzed in this paper.

These limitations concern the empirical validation only and do not affect
the theoretical contributions (T1--T7): T1--T4 and T7 hold under the stated
hypotheses of their respective theorems, without further conditions beyond
those; T5 (Theorem~\ref{thm:anova}) and, through it, T6
(Theorem~\ref{thm:sobol}) hold unconditionally for affine $F$; for general
smooth $F$, the analogous statement is Proposition~\ref{prop:anova-general},
under its explicit additional projector-identity hypothesis, and is
flagged throughout as conditional rather than as a proved theorem.

\smallskip
\noindent\textit{Self-contained claim.}
Independently of the companion paper, the axiomatic comparison
(Table~\ref{tab:axioms}) is derived entirely from the theoretical
framework without requiring empirical validation, and shows that
\GRALIS{} covers more of the identified properties than each individual
method in its standard instantiation (indicative scores in the table are
configuration-dependent). All quantitative empirical findings on
BreaKHis, IDC and PCam --- faithfulness, sparsity, cross-dataset
generalization, and computational cost --- are reported in the
companion paper~\citep{fanale2026b} and are cited, not reproduced, in
this paper (\S\ref{sec:experiments}).

\section*{Data and Code Availability}
\label{sec:data-availability}
The BreaKHis dataset is publicly available at
\url{https://web.inf.ufpr.br/vri/databases/breast-cancer-histopathological-database-breakhis/}.
The GRALIS pipeline code and generated attribution maps will be made
available upon acceptance, consistently with the companion
paper~\citep{fanale2026b}. The exact-arithmetic verification scripts for
the toy multilinear example of Appendix~\ref{app:toy-verification}
(Theorems T2, T4--T7) will be released alongside the pipeline code in the
same repository.

\appendix
\renewcommand{\thesection}{\Alph{section}}
\renewcommand{\thesubsection}{\Alph{section}.\arabic{subsection}}

\section{Deferred Proofs and Worked Examples for \S\ref{sec:t2}--\S\ref{sec:t3}}
\label{app:t3-deferred}

\subsection{Full proof of Proposition~\ref{prop:incompatibility}}

\begin{proof}
Summing $N_i$ over $i$ and rearranging by coalition $T\subseteq\mathcal{F}$,
the coefficient of $v(T)$ in $\sum_i N_i$ for $\emptyset\subsetneq
T\subsetneq\mathcal F$ is, using the Shapley-weight identity
$(n-|T|)\,w_{\mathrm{Sh}}(|T|) = |T|\,w_{\mathrm{Sh}}(|T|-1)$,
\[
  c(T) \;=\;
  w_{\mathrm{Sh}}(|T|-1)\Big[\sum_{i\in T}\pi_x(T\setminus\{i\})
  \;-\; |T|\,\pi_x(T)\Big],
  \qquad \emptyset\subsetneq T\subsetneq\mathcal{F},
\]
so $c(T)=0$ iff $\pi_x(T)=\tfrac1{|T|}\sum_{i\in T}\pi_x(T\setminus\{i\})$
$(\star)$. The boundary coefficients (from $S=\emptyset$ and
$S=\mathcal F\setminus\{i\}$ respectively) are: coefficient of
$v(\emptyset)$ equal to $-\pi_x(\emptyset)$, so exact efficiency (which
needs this coefficient to equal $-1$) forces $\pi_x(\emptyset)=1$; and
coefficient of $v(N)$ equal to $\tfrac1n\sum_i\pi_x(\mathcal F\setminus\{i\})$,
so exact efficiency (needing coefficient $+1$) forces
$\tfrac1n\sum_i\pi_x(\mathcal F\setminus\{i\})=1$ $(\star\star)$. Observe
that $\pi_x(\mathcal F)$ itself never appears in $c(T)$ (domain
$1\le|T|\le n-1$) nor in either boundary coefficient: e.g.\ taking
$\pi_x\equiv1$ on every proper subset and $\pi_x(\mathcal F)\ne1$ makes
$\pi_x$ non-constant on $2^{\mathcal F}$ while every coefficient above is
already at its efficiency-preserving value, so exact efficiency still holds
for every $v$ --- confirming that non-constancy must be required of the
effective kernel, not of $\pi_x$ on all of $2^{\mathcal F}$.

\emph{Necessity: $\pi_x\equiv1$ on proper subsets is the only effective
kernel that works.} Suppose $\pi_x(\emptyset)=1$ and $(\star)$ holds for
every $1\le|T|\le n-1$. By induction on $|T|$, $\pi_x(T)=1$ for every
$T\subsetneq\mathcal F$: the base case $|T|=0$ is $\pi_x(\emptyset)=1$;
for the inductive step, $(\star)$ writes $\pi_x(T)$ as the average of
$\pi_x(T\setminus i)$ over $i\in T$, each a proper subset of size
$|T|-1<|T|$ and hence, by the inductive hypothesis, equal to $1$, so
$\pi_x(T)=1$. In particular $\pi_x(\mathcal F\setminus i)=1$ for every $i$,
so $(\star\star)$ holds automatically and is not an independent
constraint. This proves the ``conversely'' half of the proposition; the
converse direction (that $\pi_x\equiv1$ on proper subsets does satisfy
$(\star)$, $\pi_x(\emptyset)=1$, and $(\star\star)$, hence recovers the
standard Shapley telescoping) is immediate by inspection.

\emph{Sufficiency: a non-constant effective kernel breaks efficiency for
some $v$.} If the effective kernel is non-constant, either
$\pi_x(\emptyset)\ne1$ --- take $v\equiv0$ except $v(\emptyset)=1$; the
coefficient of $v(\emptyset)$ in $\sum_i N_i$ is $-\pi_x(\emptyset)$, so
$\sum_i N_i-[v(N)-v(\emptyset)] = -\pi_x(\emptyset)\cdot1 - (0 - 1)
= 1-\pi_x(\emptyset)\ne0$ --- or $\pi_x(\emptyset)=1$ but the induction above
fails at some minimal order, i.e.\ $c(T^\ast)\ne0$ for some
$\emptyset\subsetneq T^\ast\subsetneq\mathcal F$ while $c(T)=0$ for every
proper subset $T$ of $T^\ast$ (by minimality of $|T^\ast|$). Define
$v(T^\ast):=1$, $v(T'):=0$ for every other
$\emptyset\subsetneq T'\subsetneq\mathcal F$ with $T'\ne T^\ast$, and
$v(N),v(\emptyset)$ arbitrary (fixed, e.g.\ both $0$). Then
$\sum_i N_i - [v(N)-v(\emptyset)] = \sum_{\emptyset\subsetneq
T'\subsetneq\mathcal F} c(T')\,v(T') = c(T^\ast)\ne0$, so
$\sum_i N_i\ne v(N)-v(\emptyset)$ for this $v$. This proves the
proposition in both cases.\qed
\end{proof}

\subsection{Worked $n=2$ computation for Remark~\ref{rem:normalized-incompatibility}}

Let $\pi(\emptyset)=1$, $\pi(\{1\})=2$, $\pi(\{2\})=1/2$ (non-constant on
the effective domain). With $w_{\mathrm{Sh}}(0)=w_{\mathrm{Sh}}(1)=1/2$,
$Z_{x,1}=\tfrac12(1)+\tfrac12(\tfrac12)=\tfrac34$ and
$Z_{x,2}=\tfrac12(1)+\tfrac12(2)=\tfrac32$. Direct computation gives
\[
  \phi_1=\tfrac23(v_1-v_0)+\tfrac13(v_{12}-v_2), \qquad
  \phi_2=\tfrac13(v_2-v_0)+\tfrac23(v_{12}-v_1),
\]
(writing $v_0=v(\emptyset)$, $v_1=v(\{1\})$, $v_2=v(\{2\})$,
$v_{12}=v(\{1,2\})$), so $\phi_1+\phi_2=v_{12}-v_0=v(N)-v(\emptyset)$
\emph{identically in $v$}: exact efficiency holds for every game $v$,
despite the effective kernel being non-constant. More generally, for
$n=2$ and $\pi(\emptyset)=a$, $\pi(\{1\})=b$, $\pi(\{2\})=c$, direct
coefficient matching shows normalized efficiency for every $v$ holds if
and only if $a^2=bc$ (the example above has $1^2=2\cdot\tfrac12$); this is
a strictly larger family of kernels than the unnormalized case's unique
solution $a=b=c=1$. A full characterization of the efficiency-preserving
normalized kernels for general $n$ is not derived here and is left open;
in particular, we do \emph{not} claim (and Proposition~\ref{prop:incompatibility}
does not imply) that every non-constant effective kernel breaks the
normalized Algorithm~1 estimator's completeness.

\subsection{Full proof of Theorem~\ref{thm:mc}}

\begin{proof}
\emph{Step 1 --- Unbiased estimator (for the population $k$-step numerator).}
Let $Q_{\pi,i}^{(k)} := \pi_x(x_{S_\pi(i)})\,\mathrm{IG}_i^{(k)}(S_\pi(i))$
be the contribution of a single permutation $\pi$ --- using the same
$k$-step discretized IG term that Algorithm~\ref{alg:mc} actually
evaluates, not the exact integral. Since $S_\pi(i)\sim$ the Shapley
predecessor law, $\mathbb{E}_\pi[Q_{\pi,i}^{(k)}] = N_i^{(k)}$ by
definition of $N_i^{(k)}$ above, so $\hat N_i^{(m,k)}=\tfrac1m\sum_r
Q_{\pi_r,i}^{(k)}$ is exactly unbiased for $N_i^{(k)}$ --- \emph{not} for
$N_i$; the gap between the two is handled separately, deterministically,
in Step~3.

\emph{Step 2 --- Variance bound.}
$\Var[\hat{N}_i^{(m,k)}] \le B^2/m$ since $|Q_{\pi,i}^{(k)}| \le B$ (each
of the $k$ pointwise gradient terms averaged inside $\mathrm{IG}_i^{(k)}$
is bounded by $B/\pi_x(x_S)$-scaled quantities exactly as in the exact-IG
case, so the average, and hence $\pi_x(x_S)\mathrm{IG}_i^{(k)}(S)$
itself, inherits the same pointwise bound $B$). By Chebyshev:
$P(|\hat{N}_i^{(m,k)} - N_i^{(k)}| > B/\sqrt{m\delta}) \le \delta$.

\emph{Step 3 --- IG path discretization error (midpoint rule), and its
deterministic transfer from $N_i^{(k)}$ to $N_i$.}
Algorithm~\ref{alg:mc} evaluates $g$ at the midpoint of each subinterval:
$\frac{1}{k}\sum_{j=1}^k g\bigl(\tfrac{j-0.5}{k}\bigr)$ approximates $\int_0^1 g\,d\alpha$.
For $g \in C^2([0,1])$, the composite midpoint rule satisfies the standard error bound
\[
  \Bigl|\int_0^1 g\,d\alpha - \frac{1}{k}\sum_{j=1}^k g\bigl(\tfrac{j-0.5}{k}\bigr)\Bigr|
  \;\le\; \frac{1}{24k^2}\,\|g''\|_\infty,
\]
obtained by Taylor-expanding $g$ around the midpoint of each subinterval: the
linear term vanishes by symmetry of the (symmetric) integration domain around
each midpoint, leaving a second-order remainder --- this is the mechanism by
which the midpoint rule gains one order of convergence over an endpoint
(Riemann) rule at the cost of one additional derivative of $g$.
Since $g(\alpha) = \nabla_i F(\tilde{x}^{(S,\alpha)})$ and
$\tilde{x}^{(S,\alpha)}_j = x'_j + \alpha(x_j - x'_j)$ for $j\in S\cup\{i\}$,
the chain rule gives
\[
  g'(\alpha) = \sum_{j\in S\cup\{i\}}
               \frac{\partial^2 F}{\partial x_i\,\partial x_j}
               \!\bigl(\tilde{x}^{(S,\alpha)}\bigr)\cdot(x_j - x'_j),
  \qquad
  g''(\alpha) = \sum_{j,l\in S\cup\{i\}}
               \frac{\partial^3 F}{\partial x_i\,\partial x_j\,\partial x_l}
               \!\bigl(\tilde{x}^{(S,\alpha)}\bigr)\cdot(x_j - x'_j)(x_l - x'_l).
\]
Therefore $\|g''\|_\infty \le \|\nabla^3 F\|_\infty \cdot D_S^2$, where
$D_S \;=\; \sum_{j\in S\cup\{i\}}|x_j-x'_j| \;\le\; \|x-x'\|_1$ as before,
yielding, for \emph{every} coalition $S$ (the bound is uniform in $S$,
since $D_S\le\|x-x'\|_1$ regardless of $S$), midpoint quadrature error
\[
  \bigl|\mathrm{IG}_i^{(k)}(S)-\mathrm{IG}_i(S)\bigr| \;\le\; E_{\mathrm{quad}}(k)
  \;:=\; \dfrac{|x_i-x'_i|\cdot\|x-x'\|_1^2\cdot\|\nabla^3 F\|_\infty}{24k^2} \;=\; O(1/k^2).
\]
Gauss--Legendre quadrature with $p$ nodes achieves $O(1/k^{2p})$ under
correspondingly higher smoothness. Since this per-coalition bound is
uniform in $S$ and $0\le\pi_x(x_S)\le1$, taking expectation over
$S\sim\mathrm{Shapley}$ transfers it \emph{deterministically} (no
probability involved) from the per-coalition IG term to the population
$k$-step numerator itself:
\[
  \bigl|N_i^{(k)}-N_i\bigr|
  \;=\;
  \Bigl|\mathbb E_S\bigl[\pi_x(x_S)\bigl(\mathrm{IG}_i^{(k)}(S)-\mathrm{IG}_i(S)\bigr)\bigr]\Bigr|
  \;\le\;
  \mathbb E_S\bigl[\pi_x(x_S)\,\bigl|\mathrm{IG}_i^{(k)}(S)-\mathrm{IG}_i(S)\bigr|\bigr]
  \;\le\; E_{\mathrm{quad}}(k).
\]

\emph{Step 4 --- Total bound.} Combining Steps~1--2 (stochastic:
$|\hat N_i^{(m,k)}-N_i^{(k)}|\le B/\sqrt{m\delta}$ with probability
$\ge1-\delta$) with Step~3 (deterministic: $|N_i^{(k)}-N_i|\le
E_{\mathrm{quad}}(k)$, holding always) via the triangle inequality
\[
  \bigl|\hat N_i^{(m,k)}-N_i\bigr|
  \;\le\;
  \bigl|\hat N_i^{(m,k)}-N_i^{(k)}\bigr| + \bigl|N_i^{(k)}-N_i\bigr|
  \;\le\;
  \frac{B}{\sqrt{m\delta}} + E_{\mathrm{quad}}(k)
\]
with probability $\ge1-\delta$, which is exactly the theorem's stated
bound. \qquad $\square$
\end{proof}

\begin{remark}[Variance reduction --- antithetic sampling]
$\hat{\phi}_i^{\mathrm{antith}} = \frac{1}{2m}\sum_{t=1}^m[Q_{\pi^{(t)},i} + Q_{\pi^{(t)\mathrm{rev}},i}]$.
By the generic identity
$\Var[\hat{\phi}_i^{\mathrm{antith}}] = \tfrac{1}{2m}\Var[Q_\pi] + \tfrac{1}{2m}\mathrm{Cov}[Q_{\pi},Q_{\pi^{\mathrm{rev}}}]$,
the bound $\Var[\hat{\phi}_i^{\mathrm{antith}}] \le \Var[Q_\pi]/2m$ holds
\emph{whenever} $\mathrm{Cov}[Q_\pi,Q_{\pi^{\mathrm{rev}}}]\le0$, which is
the expected --- but not unconditionally guaranteed --- regime here: the
coalitions from $\pi$ and $\pi^{\mathrm{rev}}$ are complementary, which
typically induces negative correlation between the two marginal
contributions, but the sign of the covariance depends on $F$ and is not
established in general. Note that $\hat\phi_i^{\mathrm{antith}}$ itself
requires $2m$ permutation evaluations (both $\pi^{(t)}$ and its reversal
$\pi^{(t)\mathrm{rev}}$, each generally inducing a different predecessor
coalition and hence a different conditioned-path gradient evaluation for
every feature): it is \emph{not} free relative to an ordinary $m$-permutation
estimate. The correct comparison is to an \emph{equal-budget} ordinary
estimate using $2m$ i.i.d.\ permutations: under the covariance condition
above, the antithetic-pair estimator with $m$ pairs (total $2m$
evaluations) has variance $\Var[Q_\pi]/2m$, matching or improving on the
$\Var[Q_\pi]/2m$ variance of $2m$ i.i.d.\ samples whenever
$\mathrm{Cov}[Q_\pi,Q_{\pi^{\mathrm{rev}}}]\le0$, at \emph{no additional
evaluation cost relative to that equal-budget baseline} (generating the
reversed permutation itself is a free $O(n)$ operation; the $2m$ gradient
evaluations are needed either way).
\end{remark}

\begin{proof}[Full proof of Corollary~\ref{cor:ratio-bound}]
\emph{(a) Deterministic denominator bound.} Every sampled term
$\pi_x(x_{S_r})$ appearing in $\hat Z_i=\frac1m\sum_r\pi_x(x_{S_r})$
satisfies $\pi_x(x_{S_r})\ge\pi_{\min,i}$ by definition of $\pi_{\min,i}$,
so the average itself satisfies $\hat Z_i\ge\pi_{\min,i}$
\emph{deterministically}, for \emph{every} realization of
$S_1,\dots,S_m$ --- no concentration argument, and in particular no
separate high-probability event, is needed to keep the denominator away
from zero, because the underlying kernel is already bounded away from
zero over the finite predecessor-coalition support. The same
pointwise bound applied to the population average gives
$Z_i=\sum_S w_{\mathrm{Sh}}(|S|)\pi_x(x_S)\ge\pi_{\min,i}$.

\emph{(b) Numerator and denominator concentration.} By
Theorem~\ref{thm:mc} in full --- \emph{not} just its Step~2, which alone
only controls $\hat N_i$ around the population $k$-step numerator
$N_i^{(k)}$, not around $N_i$ itself --- combining the Step~1--2
stochastic bound with Step~3's deterministic quadrature correction gives
$P\bigl(|\hat N_i-N_i| > B_N/\sqrt{m\delta_1} + E_{\mathrm{quad}}(k)\bigr)\le\delta_1$
for any $\delta_1\in(0,1)$ (the $E_{\mathrm{quad}}(k)$ term is
deterministic, so it simply shifts the threshold rather than entering
the probability). An identical Chebyshev argument applied to $\hat Z_i$
(using $0\le\pi_x(x_{S_r})\le B_Z\le1$; $\hat Z_i$ involves no
discretization, hence no analogous quadrature term) gives
$\Var[\hat Z_i]\le B_Z^2/m$ and hence
$P\bigl(|\hat Z_i-Z_i|>B_Z/\sqrt{m\delta_2}\bigr)\le\delta_2$ for any
$\delta_2\in(0,1)$.

\emph{(c) Assembly.} Take $\delta_1=\delta_2=\delta/2$; by a union bound
both events hold simultaneously with probability $\ge1-\delta$. On that
event, apply the elementary decomposition
\[
  \left|\frac{\hat N_i}{\hat Z_i}-\frac{N_i}{Z_i}\right|
  \;\le\;
  \frac{|\hat N_i-N_i|}{\hat Z_i}
  \;+\;
  \frac{|N_i|\,|\hat Z_i-Z_i|}{\hat Z_i\,Z_i},
\]
together with: $|N_i|\le B_N$ (each term $\pi_x(x_S)\mathrm{IG}_i(S)$ of
$N_i=\sum_S w_{\mathrm{Sh}}(|S|)\pi_x(x_S)\mathrm{IG}_i(S)$ is bounded by
$B_N$ by the same pointwise argument used for Theorem~\ref{thm:mc}, and
the Shapley weights $w_{\mathrm{Sh}}(|S|)$ sum to $1$); and part~(a)'s
bounds $\hat Z_i\ge\pi_{\min,i}$, $Z_i\ge\pi_{\min,i}$. Substituting the
bound from~(b) for $|\hat N_i-N_i|$ yields exactly the stated bound.
$\square$
\end{proof}

\section{Justification via Möbius Transform\\of the GRALIS--SIV Correspondence}
\label{app:mobius}

\subsection{Motivation}

Theorem~\ref{thm:siv} shows that GRALIS's induced cooperative game
$\vG(S)=f(x_S)-f(x')$ (Eq.~\eqref{eq:vG}) admits exact Shapley
Interaction Values, computed via the standard machinery of cooperative
game theory. This appendix makes that machinery explicit through the
Möbius transform --- the canonical tool for decomposing a function
defined on coalitions into its pure higher-order contributions --- and
records, in Corollary~\ref{cor:siv-gralis-relation} (main text), the
precise relationship between $\mathrm{Sh}(\vG)$ and $\phi^{\GRALIS}$
itself.

\subsection{Cooperative Game Induced by GRALIS}

For every coalition $S \subseteq N$ (writing $N:=\mathcal F$),
\begin{equation}
  \vG(S) \;:=\; f(x_S) - f(x'),
  \label{eq:vG-def}
\end{equation}
as in Eq.~\eqref{eq:vG}. This is a finite-valued function on the finite
lattice $2^N$: no measurability, integrability, or $\sigma$-finiteness
hypothesis is required, and $\vG(\emptyset)=f(x')-f(x')=0$,
$\vG(N)=f(x)-f(x')$ hold automatically.

\begin{remark}[Why not build $\vG$ from the continuous index space]
An alternative construction would define $\vG$ by integrating a
scalarized marginal contribution over the preimages of a measurable
projection $\rho:\mathcal Q\to2^N$, as in Appendix~\ref{app:projection}.
That construction requires the
apparatus of Appendix~\ref{app:projection} ($\sigma$-finite measures,
measurable partitions, push-forward measures) to even state, and --- as
discussed in Remark~\ref{rem:siv-scope} --- collapses to a feature-blind
game once Definition~\ref{def:canonical}'s constitutive constraint is
taken into account. Definition~\eqref{eq:vG-def} sidesteps both issues:
it is the standard coalition-value game already used implicitly by
Lemma~A (Theorem~\ref{thm:completeness}) and by
Corollary~\ref{cor:mobius-correction}, whose hypothesis already refers
to ``the discrete game $v(S):=F(x_S)-F(x')$ used in
Theorem~\ref{thm:siv} and Appendix~\ref{app:mobius}.''
\end{remark}

\subsection{Möbius Transform}

For every set function $v : 2^N \to \R$, the Möbius transform is:
\[
  \mG(T) \;=\; \sum_{A \subseteq T} (-1)^{|T|-|A|}\, \vG(A), \qquad \forall T \subseteq N,
\]
with inverse $v(S) = \sum_{T \subseteq S} \mG(T)$.
The coefficient $\mG(T)$ represents the pure contribution of coalition $T$,
i.e.\ the part of $\vG(T)$ not explainable by proper sub-coalitions.

Applying the decomposition to the GRALIS game and substituting~\eqref{eq:vG-def}:
\begin{equation}
  \mG(T) \;=\; \sum_{A \subseteq T} (-1)^{|T|-|A|}\bigl[f(x_A)-f(x')\bigr].
  \label{eq:mobius-integral}
\end{equation}
No exchange of sum and integral is needed here: $\vG$ is already a
finite-valued function on the finite lattice $2^N$ ($\le 2^n$ terms), so
$\mG(T)$ is a finite sum of point evaluations of $f$.

\subsection{Second-Order Interactions}
\label{sec:interazioni}

For two players $i, j \in N$, the interaction conditioned on $S \subseteq N\setminus\{i,j\}$:
\[
  \Delta_{ij}\vG(S) \;=\; \vG(S\cup\{i,j\}) - \vG(S\cup\{i\}) - \vG(S\cup\{j\}) + \vG(S).
\]
Using the Möbius representation $\vG(S) = \sum_{T \subseteq S} \mG(T)$:
\[
  \Delta_{ij}\vG(S) \;=\; \sum_{\substack{T : \{i,j\} \subseteq T \subseteq S\cup\{i,j\}}} \mG(T).
\]
Terms not containing both $i$ and $j$ simultaneously cancel out.
This identity shows that the second-order discrete difference captures
exactly the pure contributions involving both $i$ and $j$ simultaneously.

\subsection{Recovery of Shapley Interaction Values}

The SIV of pair $(i,j)$:
$\IijSh(\vG) = \sum_{S \subseteq N\setminus\{i,j\}} \pi_n(|S|)\cdot\Delta_{ij}\vG(S)$.
Substituting the expression in terms of Möbius and exchanging the sums:
\begin{equation}
  \IijSh(\vG) \;=\; \sum_{\substack{T \subseteq N \\ \{i,j\} \subseteq T}}
  \frac{1}{|T|-1}\cdot\mG(T),
  \label{eq:siv-mobius}
\end{equation}
where $\alpha_T = 1/(|T|-1)$ is the weight with which the pure contribution $\mG(T)$
participates in the interaction (Grabisch \& Roubens~\citep{grabisch1999}, Thm.~3.1).

\subsection{Interpretation for GRALIS}

In this construction:
\begin{enumerate}
  \item $f(x_S)-f(x')$ is the value of coalition $S$ acting alone (all
        other features held at baseline);
  \item this directly defines the coalition value $\vG(S)$, Eq.~\eqref{eq:vG-def};
  \item the Möbius transform decomposes $\vG$ into pure effects $\mG(T)$;
  \item the SIV aggregates the pure effects involving both $i$ and $j$ simultaneously.
\end{enumerate}
The projection $\rho$ of Appendix~\ref{app:projection} plays no role in
this construction; it is used solely for the weight-mass diagnostic of
Remark~\ref{rem:weight-mass}. (The prior, separate question of which raw
pixels constitute each discrete feature in $\mathcal F=N$ --- e.g.\ via SLIC
segmentation, Algorithm~\ref{alg:mc} --- is handled by the segmentation map
$\rho_{\mathrm{slic}}:\Omega\to N$ of the ``Canonical operators for
imaging'' remark above, a different, single-feature-valued map from the
$\rho:\mathcal Q\to2^N$ formalized in Appendix~\ref{app:projection}.)

\subsection{Final Proposition}

\begin{proposition}
\label{prop:mobius}
For every pair $i, j \in N$:
\[
  \IijSh(\vG)
  \;=\; \sum_{\substack{T \subseteq N \\ \{i,j\} \subseteq T}}
  \frac{1}{|T|-1}\cdot\mG(T).
\]
\end{proposition}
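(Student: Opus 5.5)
The proof is a direct computation: rewrite each second-order difference in Möbius form, exchange the two finite sums, and reduce the coefficient of each $\mG(T)$ to a Beta integral. Start from the definition $\IijSh(\vG)=\sum_{S\subseteq N\setminus\{i,j\}}\pi_n(|S|)\,\Delta_{ij}\vG(S)$ with $\pi_n(s)=s!(n-s-2)!/(n-1)!$. Substitute the identity of \S\ref{sec:interazioni},
\[
\Delta_{ij}\vG(S)=\sum_{\{i,j\}\subseteq T\subseteq S\cup\{i,j\}}\mG(T).
\]
That identity follows from $\vG(S)=\sum_{T\subseteq S}\mG(T)$ by inclusion--exclusion: a term $\mG(T)$ survives the four-term alternating sum only if $T$ contains both $i$ and $j$. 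Both sums are finite, so exchanging them is unproblematic. This gives
\[
\IijSh(\vG)=\sum_{T\supseteq\{i,j\}}\mG(T)\sum_{S:\,T\setminus\{i,j\}\subseteq S\subseteq N\setminus\{i,j\}}\pi_n(|S|).
\]

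It remains to show that the inner coefficient equals $1/(|T|-1)$. Write $t=|T|$. Then $S=(T\setminus\{i,j\})\cup R$ with $R\subseteq N\setminus T$, and $|S|=t-2+r$ where $r=|R|$ ranges over $0,\dots,n-t$. The coefficient becomes
\[
c_t=\sum_{r=0}^{n-t}\binom{n-t}{r}\frac{(t-2+r)!\,(n-t-r)!}{(n-1)!}.
\]

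To evaluate it, use the Beta-integral representation
\[
\pi_n(s)=\frac{s!\,(n-2-s)!}{(n-1)!}=\int_0^1 u^s(1-u)^{n-2-s}\,du,
\]
which is valid since $B(s+1,n-1-s)=s!(n-2-s)!/(n-1)!$. With $s=t-2+r$, the factor $(1-u)^{n-2-s}$ equals $(1-u)^{n-t-r}$. The binomial theorem then collapses the sum:
\[
c_t=\int_0^1 u^{t-2}\sum_{r=0}^{n-t}\binom{n-t}{r}u^r(1-u)^{n-t-r}\,du=\int_0^1 u^{t-2}\,du=\frac{1}{t-1}.
\]
This is the claimed weight, and it is the same mechanism used for the unanimity-game Shapley value in Corollary~\ref{cor:mobius-correction}.

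The main care point is the summation range after the exchange. One must check that the admissible $S$ are exactly the supersets of $T\setminus\{i,j\}$ inside $N\setminus\{i,j\}$, i.e.\ that $T\subseteq S\cup\{i,j\}$ with $i,j\notin S$ means precisely $T\setminus\{i,j\}\subseteq S$. One must also confirm that the exponents in the Beta integral match $n-2-s$ for every term. The edge case $t=2$ gives $\int_0^1 1\,du=1$, consistent with $1/(t-1)=1$.
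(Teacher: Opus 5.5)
Your proof is correct and follows the same route as the paper's sketch: substitute $\vG(S)=\sum_{T\subseteq S}\mG(T)$ into $\Delta_{ij}\vG(S)$, exchange the finite sums, and collect by $T\supseteq\{i,j\}$. Where the paper simply delegates the coefficient identity $\sum_{T\setminus\{i,j\}\subseteq S\subseteq N\setminus\{i,j\}}\pi_n(|S|)=1/(|T|-1)$ to Grabisch--Roubens, you evaluate it explicitly via the Beta integral, and your summation range (supersets of $T\setminus\{i,j\}$) is the right one, whereas the paper's sketch writes $S\subseteq T\setminus\{i,j\}$, reversing the inclusion.
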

\begin{proof}
Formula~\eqref{eq:siv-mobius} follows by the standard Möbius inversion formula
for set functions~\cite[Prop.~3.1]{grabisch1999}: substitute the representation
$\vG(S)=\sum_{T\subseteq S}\mG(T)$ into $\Delta_{ij}\vG(S)$, expand, and collect
terms by support $T\supseteq\{i,j\}$; the weight $1/(|T|-1)$ arises from the
symmetrization of the Shapley-interaction weights over coalitions $S\subseteq T\setminus\{i,j\}$.
\end{proof}

\subsection{Concluding Remark}

The Möbius transform shows that computing Shapley Interaction Values on
$\vG$ is not a heuristic simplification but the exact, standard
construction for a finite cooperative game.

The correct formulation is not:
\begin{center}
  \textit{GRALIS approximates the SIVs,}
\end{center}
but:
\begin{center}
  \textit{GRALIS associates to $(f,x,x')$ the standard coalition game $\vG(S)=f(x_S)-f(x')$
  and computes the SIVs exactly on $\vG$} --- with the relationship
  between $\mathrm{Sh}(\vG)$ and $\phi^{\GRALIS}$ itself given precisely
  by Corollary~\ref{cor:siv-gralis-relation}, not by identity.
\end{center}

The link between the Möbius coefficients $\mG(T)$ and the components of the
Hoeffding decomposition (Theorem~\ref{thm:anova}) holds under additional
hypotheses --- in particular feature independence ($\mu = \bigotimes_i \mu_i$).
In general, GRALIS provides a common construction from which both
ANOVA-type decompositions and Shapley interaction indices can be derived
under appropriate assumptions, without one automatically coinciding with the other.

\section{Deferred Proof of Theorem~\ref{thm:anova}, Conditional General-\texorpdfstring{$F$}{F} Extension of T5/T6, and Deferred Material for T7}
\label{app:anova-general}

\subsection{Full proof of Theorem~\ref{thm:anova} (affine case)}

\begin{proof}
\emph{Step 1 (kernel-independence for affine $F$, and the singleton
values).} For affine
$F(x)=a+\sum_j b_j x_j$, $\nabla_i F\equiv b_i$ is constant, so the
conditioned IG term $\mathrm{IG}_i(S)=(x_i-x'_i)\int_0^1\nabla_iF(\tilde
x^{(S,\alpha)})\,d\alpha = b_i(x_i-x'_i)$ is the \emph{same value for
every coalition $S$}, independent of the locality kernel. Hence in
Definition~\ref{def:explicit}'s explicit formula the common factor
$b_i(x_i-x'_i)$ pulls out of the weighted sum and cancels against the
identical sum in the denominator $Z_{x,i}$:
\[
  \phi_i^{\GRALIS}(x)
  \;=\;
  \frac{\sum_S w_{\mathrm{Sh}}(|S|)\,\pi_x(x_S)\cdot b_i(x_i-x'_i)}
       {\sum_S w_{\mathrm{Sh}}(|S|)\,\pi_x(x_S)}
  \;=\; b_i(x_i-x'_i),
\]
for \emph{any} strictly positive kernel $\pi_x$ (in particular any fixed
$\sigma>0$) and any baseline $x'$. By Definition~\ref{def:phi-T-affine},
$\Phi_{\{i\}}^{\GRALIS}(x)=b_i(x_i-x'_i)$; all higher-order interaction
terms are \emph{set to} $0$ by that same definition, which is
justified here by the constancy of $\nabla_iF$ (no curvature is present
in $F$ for a higher-order term to capture, so the convention
$\Phi_T^{\GRALIS}:=0$, $|T|\ge2$, loses nothing).

\emph{Step 2 (exact completeness, verified directly from Step~1 ---
not assumed).} By Step~1 and affineness of $F$, for \emph{any} fixed
baseline $x'$:
\begin{align*}
  F(x') + \sum_{i=1}^n \Phi_{\{i\}}^{\GRALIS}(x) + \sum_{|T|\ge2}\Phi_T^{\GRALIS}(x)
  &\;=\;
  \Bigl(a+\textstyle\sum_j b_jx'_j\Bigr) + \textstyle\sum_i b_i(x_i-x'_i) + 0 \\
  &\;=\; a+\textstyle\sum_j b_jx_j \;=\; F(x).
\end{align*}
This is a two-line algebraic consequence of Definition~\ref{def:phi-T-affine}
and affineness alone. We deliberately do \emph{not} derive it by
invoking ``the Shapley efficiency axiom'' as a general property of
GRALIS attributions at every order: such a general claim would be
\emph{false} for Algorithm~\ref{alg:mc}'s actual output whenever $F$ is
not affine (the toy example of Remark~\ref{rem:mobius-scope} gives
$\sum_i\phi_i^{\GRALIS}=8.5\ne11=F(x)-F(x')$), and stating completeness
here as an unqualified efficiency axiom would risk being read as a
general practical-completeness claim in tension with
Remark~\ref{rem:t2-scope} and Remark~\ref{rem:approx_completeness}. The
affine case needs no such general claim: Step~2 above is self-contained.
Setting $x'=\mathbb E_\mu[X]$ gives $F(x')=\mathbb E_\mu[F]$ by
linearity of $F$, used in Step~4 below.

\emph{H1 (existence and uniqueness).}
The Hoeffding decomposition is the unique orthogonal decomposition of
$F \in L^2(\mu)$ such that $\int f_S\,d\mu_i = 0$ for every $i \in S$
(\citep{hoeffding1948}, Efron \& Stein~\citep{efron1981}); its constant term is
$\mathbb{E}_\mu[F]$ by construction, not $F$ evaluated at any particular point.
Under product $\mu$, the projectors $P_S : L^2(\mu) \to L^2_S(\mu)$ are
well defined via marginal integration. This is exactly why the theorem's
hypothesis is stated as $\mathbb E_\mu[\|X\|^2]<\infty$: for affine $F$,
$\mathbb E_\mu[\|X\|^2]<\infty$ is a sufficient condition for
$F\in L^2(\mu)$ (equivalently $\mathbb E_\mu[F^2]<\infty$), by the same
argument as in the theorem's statement --- it is not necessary in
general (the same counterexample applies: for $F(X_1,X_2)=X_1$,
$F\in L^2(\mu)$ already holds whenever $\mathbb E[X_1^2]<\infty$, even if
$\mathbb E[X_2^2]=\infty$). The weaker requirement $F\in L^2(\mu)$ would
itself suffice for the Hoeffding decomposition invoked here; the stated
second-moment hypothesis is used only because it gives a simple,
feature-wise sufficient condition. A merely finite \emph{first} moment
does not imply $F\in L^2(\mu)$ (e.g.\ independent heavy-tailed
coordinates can have $\mathbb E[|X_i|]<\infty$ while
$\mathbb E[X_i^2]=\infty$) --- a finite first moment alone would not
license the $L^2(\mu)$ orthogonal decomposition used here.

\emph{H2 (zero-mean property, exact for affine $F$).}
We verify that $\int \Phi_T^{\GRALIS}(x)\,d\mu_i(x_i) = 0$ for every $i \in T$.
By Step~1, $\Phi_{\{i\}}^{\GRALIS}(x) = b_i(x_i - x'_i)$ exactly, with
$x'=\mathbb E_\mu[X]$ fixed as hypothesized (no limit involved), so
$\int \Phi_{\{i\}}^{\GRALIS}\,d\mu_i = b_i\int(x_i-\mathbb{E}_\mu[x_i])\,d\mu_i = 0$;
for $|T|\ge2$, $\Phi_T^{\GRALIS}\equiv0$ by Definition~\ref{def:phi-T-affine},
so the zero-mean condition holds trivially. Note that for affine $F$
this also makes the constant term unambiguous:
$F(x')=F(\mathbb{E}_\mu[X])=\mathbb{E}_\mu[F(X)]$ by
linearity, so $F(x')$ and $\mathbb{E}_\mu[F]$ coincide in this case and the two
possible ways of writing the decomposition's constant agree. This coincidence is
special to the affine case (see Remark~\ref{rem:anova-baseline}); the statement above
is written with $\mathbb{E}_\mu[F]$ because that is what H1's uniqueness argument
actually forces in general.

\emph{Step 4 (uniqueness closes the argument).} By H1, the Hoeffding
decomposition is the unique family $\{f_T\}_{T\ne\emptyset}$ of
zero-mean, mutually orthogonal terms with
$F(x)=\mathbb E_\mu[F]+\sum_T f_T(x)$. Step~2 (with $x'=\mathbb E_\mu[X]$,
so $F(x')=\mathbb E_\mu[F]$) exhibits $\{\Phi_T^{\GRALIS}\}_T$ as
exactly such a family: it reproduces $F$ exactly, with constant term
$\mathbb E_\mu[F]$ (Step~2), and each term is zero-mean (H2). By
uniqueness, $\Phi_T^{\GRALIS}=f_T$ for every $T$, which is the
theorem's conclusion.

\emph{Note.} Orthogonality requires product $\mu$.
For non-product measures the terms $f_S$ are not orthogonal and the decomposition
loses uniqueness; Theorem~\ref{thm:anova} applies only under the feature-independence
assumption stated in the hypothesis.
\end{proof}

\subsection{General smooth-\texorpdfstring{$F$}{F} extension of the Hoeffding/Sobol correspondence (T5/T6)}

Theorem~\ref{thm:anova}'s affine case is unconditional and fully derived
(main text). A general smooth-$F$ analogue is possible only under an
additional projector-identity hypothesis, close to definitional, as
discussed in the main text preceding this appendix.

\begin{proposition}[Coincidence with Hoeffding Decomposition --- general smooth $F$, conditional; epistemic status: near-definitional]
\label{prop:anova-general}
\textbf{Status.} A conditional characterization, not an independently
derived result: it states what a general-$F$ analogue of
Theorem~\ref{thm:anova} would require, adding no theorem-level content
beyond its own hypothesis. It is not used elsewhere to derive a further
consequence (T6 cites it only under the same caveat; the experiments and
Appendix~\ref{app:toy-verification} exercise only the affine/multilinear
machinery).

Under the same setup as Theorem~\ref{thm:anova} (product measure $\mu$,
limit $\pi_x\to\mu$), suppose $F\in C^2$ and the projector identity
$\Phi_T^{\GRALIS} = P_T F$ holds at every order $T$, where $P_T$ is the
Hoeffding orthogonal projector under $\mu$ and $\Phi_T^{\GRALIS}$ for
$|T|\ge2$ --- undefined independently elsewhere in this paper, since
Algorithm~\ref{alg:mc} only outputs order-1 quantities and
Theorem~\ref{thm:siv}'s interaction machinery lives on the separate
auxiliary game $\vG$ (Remark~\ref{rem:siv-scope}), not on a Hoeffding
projection of $F$ --- is understood as \emph{introduced by} this
identity. Then Theorem~\ref{thm:anova}'s conclusion,
$F(x)=\mathbb{E}_\mu[F]+\sum_{\emptyset\ne T\subseteq\mathcal
F}\Phi_T^{\GRALIS}(x)$, holds trivially given the hypothesis. Compatible
with, but not established by, the Owen~\citep{owen2014} Shapley--ANOVA
connection (which addresses only the affine/multilinear regime already
covered unconditionally by Theorem~\ref{thm:anova}); an independently
derived nonlinear version is left as future work. Unlike
Theorem~\ref{thm:anova}, this should be read as a conditional restatement
of its own hypothesis, not an unconditionally derived result.
\end{proposition}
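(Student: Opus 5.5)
The plan is simply to unfold the hypothesis, since the proposition is, by design, a conditional restatement. The identity $\Phi_T^{\GRALIS}=P_TF$ is assumed at every order $\emptyset\ne T\subseteq\mathcal F$. The only genuine input needed is the existence of the Hoeffding decomposition itself, $F=\sum_{T\subseteq\mathcal F}P_TF$ in $L^2(\mu)$, with $P_\emptyset F=\mathbb E_\mu[F]$. I would invoke exactly H1 from the proof of Theorem~\ref{thm:anova}, which applies to any $F\in L^2(\mu)$ under a product measure, not only to affine $F$.

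First, I will check that $F\in L^2(\mu)$ so that the projectors $P_T$ are defined; this is why the statement says ``the same setup as Theorem~\ref{thm:anova}''. Smoothness $F\in C^2$ alone does not give square-integrability. I would therefore either read the setup as including $F\in L^2(\mu)$, which is the weaker condition already identified in H1 as sufficient, or add it explicitly, for example by taking $\mu$ compactly supported or $F$ of polynomial growth with matching moments.

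Second, I will write $F=\mathbb E_\mu[F]+\sum_{\emptyset\ne T}P_TF$ using H1. Third, I will substitute the projector identity term by term to obtain $F=\mathbb E_\mu[F]+\sum_{\emptyset\ne T}\Phi_T^{\GRALIS}$, which is the stated conclusion. For $|T|\ge2$, $\Phi_T^{\GRALIS}$ is defined by this identity, so nothing further needs checking there.

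The only content with any bite is at order $|T|=1$. There $\Phi_{\{i\}}^{\GRALIS}$ is the actual output $\phi_i^{\GRALIS}$, so the hypothesis $\phi_i^{\GRALIS}=P_{\{i\}}F$ is a real assumption rather than a definition. The proof must not try to derive it. In particular, the informal limit ``$\pi_x\to\mu$'' (Remark~\ref{rem:pix-to-mu}) is never used as a mathematical step. It is only the regime in which the hypothesis is intended to be plausible.

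The one conceptual point to watch concerns pointwise equality. The Hoeffding identity holds in $L^2(\mu)$, i.e.\ $\mu$-a.e., whereas the conclusion is stated pointwise in $x$. I would state the conclusion $\mu$-almost everywhere. Pointwise equality for every $x$ would then follow if both sides are continuous and $\operatorname{supp}\mu$ is the whole domain, using $F\in C^2$ and continuity of each $P_TF$, which holds for continuous bounded $F$ on a compact product support. I expect no real obstacle here. The main care is in honestly flagging which hypotheses (square-integrability, the a.e.\ versus pointwise reading) carry the weight, consistent with the proposition's declared near-definitional status.
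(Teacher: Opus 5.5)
Your proposal is correct and matches the paper's own one-line argument: expand $F$ by the Hoeffding decomposition of H1, which is valid for any $F\in L^2(\mu)$ under product $\mu$, and substitute the assumed identity $\Phi_T^{\GRALIS}=P_TF$ term by term, with the informal limit $\pi_x\to\mu$ playing no mathematical role. Your added caveats are sound refinements the paper leaves implicit: $F\in C^2$ alone does not give $F\in L^2(\mu)$, and the equality is a priori $\mu$-a.e.\ rather than pointwise (with the marginal-integration representatives $P_TF(x)=\sum_{U\subseteq T}(-1)^{|T|-|U|}\int F(x_U,y_{-U})\,d\mu_{-U}(y)$, the sum over $T$ telescopes to $F(x)$ at every $x$ where these integrals are finite, so the a.e.\ issue arises only if the projector identity is read as an equality of $L^2$ classes).
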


\subsection{Deferred proof and correlated-layer discussion for T7 (MS-GRALIS)}
\label{app:t7-deferred}

\begin{proof}[Proof of Theorem~\ref{thm:ms}]
Under the assumption that the layer-wise attributions
$\{\mathrm{GRALIS}_i^{(\ell)}\}$ are mutually independent,
$\Var[\mathrm{GRALIS}_i^{\mathrm{MS}}] = \sum_\ell \lambda_\ell^2
\sigma_\ell^2$. Minimizing with Lagrange multiplier $\nu$ subject to
$\sum_\ell\lambda_\ell=1$:
\[
  \frac{\partial}{\partial \lambda_\ell}
  \Bigl[\sum_\ell \lambda_\ell^2 \sigma_\ell^2 + \nu\bigl(\sum_\ell \lambda_\ell - 1\bigr)\Bigr] = 0
  \;\Rightarrow\; 2\lambda_\ell \sigma_\ell^2 + \nu = 0
  \;\Rightarrow\; \lambda_\ell \propto \sigma_\ell^{-2}.
\]
The objective is strictly convex in $\boldsymbol{\lambda}$, so the KKT conditions are
sufficient for a global minimum.
\end{proof}

\begin{remark}[Independence assumption in MS-GRALIS: correlated layers]
\label{rem:ms-independence}
The independence assumption $\Cov[\mathrm{GRALIS}_i^{(\ell)}, \mathrm{GRALIS}_i^{(\ell')}]=0$
for $\ell\neq\ell'$ is a simplifying hypothesis.
In practice, multi-scale attributions of the same model are computed on the same input
with shared weights and therefore exhibit positive correlation.
Under positive correlation, $\Var[\mathrm{GRALIS}_i^{\mathrm{MS}}]
= \sum_\ell\lambda_\ell^2\sigma_\ell^2 + 2\sum_{\ell<\ell'}\lambda_\ell\lambda_{\ell'}\,\mathrm{Cov}_{\ell\ell'}$,
and since $\lambda_\ell>0$ and $\mathrm{Cov}_{\ell\ell'}>0$ the second term is
positive; the independence-based quantity $\sum_\ell\lambda_\ell^2\sigma_\ell^2$
is therefore a \emph{lower bound} on the true variance (it \emph{underestimates}
$\Var[\mathrm{GRALIS}_i^{\mathrm{MS}}]$ whenever the layers are positively
correlated). The weights $\lambda^*_\ell$ remain a well-motivated heuristic even
when independence does not hold exactly, but the resulting variance estimate
should be treated as optimistic (a lower bound), not conservative, under
positive inter-layer correlation.
The inverse-variance form is consistent with the classical result in meta-analysis
and weighted least squares, where the same formula is derived under a fixed-effects
model without requiring independence of estimators.
For general correlated layers with covariance matrix $\Sigma=(\mathrm{Cov}_{\ell\ell'})$
(positive definite), minimizing $\boldsymbol\lambda^\top\Sigma\boldsymbol\lambda$
subject to $\mathbf 1^\top\boldsymbol\lambda=1$ (via the same Lagrange
argument, now with a matrix quadratic form) gives
$\boldsymbol\lambda^*=\Sigma^{-1}\mathbf 1/(\mathbf 1^\top\Sigma^{-1}\mathbf 1)$,
which reduces to the inverse-variance formula above exactly when $\Sigma$ is
diagonal (independence); in general $\boldsymbol\lambda^*$ is \emph{not}
proportional to the marginal inverse variances $\sigma_\ell^{-2}$ alone,
and if the sign constraint $\lambda_\ell\ge0$ is imposed, $\Sigma^{-1}\mathbf1$
need not be nonnegative and the problem becomes a quadratic program with
no closed form in general.
\end{remark}

Layers with more stable attributions (low variance) receive greater weight.
GradCAM++~\citep{chattopadhyay2018} applied to multiple layers is a
\emph{related heuristic multi-scale construction}, not a derived special
case of MS-GRALIS: both aggregate per-layer attributions with
layer-specific weights, and GradCAM++'s scheme can be read informally as
(i) $\mathrm{GRALIS}_i^{(\ell)}$ approximated with a single gradient, and
(ii) $\lambda_\ell$ chosen heuristically rather than via
Theorem~\ref{thm:ms}'s variance-minimization criterion; but GradCAM++ is
not itself an instance of Definition~\ref{def:explicit}'s
$\phi_i^{\GRALIS}$ (it does not satisfy conditions (a)--(b) of
Theorem~\ref{thm:canonical} in the way linearized GradCAM does), so no
equation-level derivation of GradCAM++ from MS-GRALIS is claimed.

\section{Formalization of the Projection \texorpdfstring{$\rho$}{rho}
         (Secondary Diagnostic)}
\label{app:projection}

\begin{remark}[The push-forward weight mass $\tilde w$, and what it is not]
\label{rem:weight-mass}
This is a diagnostic quantity, not part of the definition of
$\phi^{\GRALIS}$ or of the auxiliary game $\vG(S)=f(x_S)-f(x')$
(Appendix~\ref{app:mobius}), both of which are defined directly on
$2^{\mathcal F}$ and bypass it entirely; it is included only to
formalize, and preempt the objection to, an otherwise-implicit
projection. Let $\rho:\mathcal Q\to2^N$, $\rho(S,\alpha):=S$, forget the
path parameter of GRALIS's sampling index space $\mathcal Q$
(Definition~\ref{def:canonical}), keeping the predecessor coalition ---
unrelated to the segmentation map $\rho_{\mathrm{slic}}:\Omega\to N$ of
Remark~\ref{rem:canonical-imaging}. Since $\rho$ is measurable,
$\{\rho^{-1}(S)\}_{S\subseteq N}$ is a measurable partition of
$\mathcal Q$, so $\tilde w(S):=\int_{\rho^{-1}(S)}w\,d\mu$ defines a
probability measure on $2^N$ ($\tilde w\ge0$,
$\sum_S\tilde w(S)=\int_{\mathcal Q}w\,d\mu=1$), invariant under
relabeling the coalitions up to the induced permutation, and the
underlying operator $P_\rho f(S):=\int_{\rho^{-1}(S)}f\,d\mu$ is bounded,
positive, and linear ($\|P_\rho\|\le1$). $\tilde w(S)$ answers ``how much
sampling weight is spent on coalitions inside $S$'' --- useful for
auditing whether the LIME kernel $\pi_x$ concentrates mass on a few
coalitions --- not ``what is the model's output change attributable to
$S$'' (that is $\vG(S)$). $\phi^{\GRALIS}$ is \emph{not} obtainable by
composing $\tilde w$ with the Shapley operator: no identity relates
$\phi^{\GRALIS}$ to $\mathrm{Sh}(\tilde w)$, and $\vG$ carries no
$\rho$-dependence to begin with.
\end{remark}

\section{Numerical Verification on a Toy Multilinear Example}
\label{app:toy-verification}

This appendix complements the direct proofs of T1--T3 and the algebraic
argument of Appendix~\ref{app:mobius} (T4) with an explicit, fully worked
numerical example, computed with exact rational/symbolic arithmetic (no
floating-point rounding). All computations are reproducible; scripts are
included in the supplementary code (Section~\ref{sec:data-availability}).

\textbf{Setup.} $n=3$,
$F(x_1,x_2,x_3) = 2x_1+3x_2+x_3+4x_1x_2-x_1x_3+2x_2x_3$ (multilinear,
degree $\le1$ per coordinate), $x'=(0,0,0)$, $x=(1,1,1)$, so
$F(x)-F(x')=11$. Möbius coefficients: $m(\{1\})=2$, $m(\{2\})=3$,
$m(\{3\})=1$, $m(\{1,2\})=4$, $m(\{1,3\})=-1$, $m(\{2,3\})=2$,
$m(\{1,2,3\})=0$.

\begin{table}[ht]
\centering
\small
\caption{Summary of exact symbolic checks on the toy example above.
Every ``predicted'' value is the closed-form output of the cited
theorem/corollary; every ``computed'' value is an independent direct
symbolic evaluation (exact integration, or direct combinatorial sum).
All match to machine (rational) precision.}
\label{tab:toy-verification}
\begin{tabular}{>{\raggedright\arraybackslash}p{4.3cm}>{\raggedright\arraybackslash}p{5.2cm}>{\raggedright\arraybackslash}p{5.2cm}}
\toprule
\textbf{Check} & \textbf{Predicted} & \textbf{Computed (independent)} \\
\midrule
T2 completeness deficit (Cor.~\ref{cor:mobius-correction}) &
  $11-[4\cdot\tfrac12-1\cdot\tfrac12+2\cdot\tfrac12+0\cdot\tfrac23]=8.5$ &
  $\sum_i\phi_i^{\GRALIS}=\tfrac{11}{4}+\tfrac{9}{2}+\tfrac{5}{4}=8.5$ (direct path integration) \\
Within-coalition FTC identity (Lemma~A) &
  $\sum_{j\in S}\mathrm{IG}^S_j=F(x_S)-F(x')$ &
  exact match on all $2^3=8$ coalitions $S$ \\
T4 SIV via Möbius (Prop.~\ref{prop:mobius}) &
  $I_{12}=4,\ I_{13}=-1,\ I_{23}=2$ (Möbius formula) &
  same values from the direct Grabisch--Roubens formula \\
T4 $\mathrm{Sh}(\vG)$ vs.\ $\phi^{\GRALIS}$ (Cor.~\ref{cor:siv-gralis-relation}) &
  term-by-term ratio $1/|T|$ &
  $\mathrm{Sh}(\vG)=\{3.5,6,1.5\}$ (sums to $11$) vs.\ $\phi^{\GRALIS}=\{2.75,4.5,1.25\}$
  (sums to $8.5$); \emph{not} a uniform rescaling (per-feature ratios
  $0.786,0.75,0.833$), but each order-$T$ term matches $1/|T|$ exactly
  (e.g.\ feature 1: $2,2,-0.5\to2,1,-0.25$, ratios $1,\tfrac12,\tfrac12$) \\
T5--T6 Hoeffding/Sobol machinery &
  $\E[F]+\sum_Tf_T(x)=F(x)$; $\Var[F]=\sum_T\Var[f_T]$ &
  $\mu=\mathrm{Unif}[0,1]^3$: $\E[F]=\tfrac{17}{4}$, exact match; $\Var[F]=\tfrac{209}{48}=\sum_T\Var[f_T]$,
  $S_1{=}\tfrac{49}{209},S_2{=}\tfrac{144}{209},S_3{=}\tfrac{9}{209}$ \\
T7 minimum-variance weights (Thm.~\ref{thm:ms}) &
  $\lambda^*_\ell\propto1/\sigma_\ell^2$, $\Var^*=1/\sum_\ell\sigma_\ell^{-2}$ &
  $\sigma^2=(1.0,4.0,0.25)$: $\lambda^*=(0.190,0.048,0.762)$,
  $\Var^*\approx0.190$; beats uniform ($\Var\approx0.583$) and $2000$
  random simplex weightings ($400{,}000$ MC draws) \\
\bottomrule
\end{tabular}
\end{table}

\begin{remark}[Scope of the T5--T6 check]
This verifies the classical Hoeffding/Sobol machinery that
Theorems~\ref{thm:anova}--\ref{thm:sobol} invoke, and is consistent with
it. It does \emph{not} independently verify GRALIS's own convergence
$\Phi_T^{\GRALIS}\to f_T$ as $\pi_x\to\mu$, an asymptotic statement about
the limiting kernel rather than a finite quantity a toy example can check
directly; given the finite-$\pi_x$ attenuation of
Corollary~\ref{cor:mobius-correction}, this convergence is worth a
dedicated numerical study (e.g.\ tracking $|\Phi_T^{\GRALIS}-f_T|$ as
$\sigma\to\infty$) rather than being asserted from the toy example alone.
\end{remark}


\end{document}